\def\*#1{\mathbf{#1}}
\theoremstyle{plain}
\newtheorem{theorem}{Theorem}[section]
\newtheorem{lemma}[theorem]{Lemma}
\theoremstyle{definition}
\newtheorem{definition}[theorem]{Definition}
\theoremstyle{remark}
\newcolumntype{P}[1]{>{\RaggedRight\hspace{0pt}}p{#1}}
\newcolumntype{X}[1]{>{\RaggedRight\hspace*{0pt}}p{#1}}
\colorlet{linecol}{black!75}
\newcommand{\highlight}[2]{\colorbox{#1!17}{$\displaystyle #2$}}
\renewcommand{\highlight}[2]{\colorbox{#1!17}{#2}}
\newcommand{\cube}[1]{%
\scalebox{#1}{
\begin{tikzpicture}
\pgfmathsetmacro{\cubex}{0.2}
\pgfmathsetmacro{\cubey}{0.2}
\pgfmathsetmacro{\cubez}{0.2}
\draw (0,0,0) -- ++(-\cubex,0,0) -- ++(0,-\cubey,0) -- ++(\cubex,0,0) -- cycle;
\draw (0,0,0) -- ++(0,0,-\cubez) -- ++(0,-\cubey,0) -- ++(0,0,\cubez) -- cycle;
\draw (0,0,0) -- ++(-\cubex,0,0) -- ++(0,0,-\cubez) -- ++(\cubex,0,0) -- cycle;
\end{tikzpicture}
}
}
\newcommand{\sphere}[2]{%
\scalebox{#1}{
\begin{tikzpicture}
  \shade[ball color = #2!40, opacity = 0.2] (0,0) circle (0.2cm);
  \draw (0,0) circle (0.2cm);
\end{tikzpicture}
}
}
\newcommand{\cylinder}[1]{%
\scalebox{#1}{
\begin{tikzpicture}
\node[cylinder, draw, shape border rotate = 90,minimum size = 0.4cm] (c) at (0,0) {};
\end{tikzpicture}
}
}
\let\classAND\AND
\let\AND\relax
\let\AND\classAND
\icmltitlerunning{When and How Does Known Class Help Discover Unknown Ones? }
\begin{document}

\twocolumn[
\icmltitle{When and How Does Known Class Help Discover Unknown Ones? \\Provable Understanding Through Spectral Analysis}

\icmlsetsymbol{equal}{*}

\begin{icmlauthorlist}
\icmlauthor{Yiyou Sun}{wisc}
\icmlauthor{Zhenmei Shi}{wisc}
\icmlauthor{Yingyu Liang}{wisc}
\icmlauthor{Yixuan Li}{wisc}
\end{icmlauthorlist}

\icmlaffiliation{wisc}{Computer Sciences Department, University of Wisconsin-Madison}

\icmlcorrespondingauthor{Yiyou Sun, Yixuan Li}{{sunyiyou, sharonli@cs.wisc.edu}}

\icmlaffiliation{wisc}{Department of Computer Sciences, University of Wisconsin - Madison}

\icmlkeywords{Machine Learning, ICML}

\vskip 0.3in
]

\printAffiliationsAndNotice{}  

\begin{abstract}
Novel Class Discovery (NCD) aims at inferring novel  classes in an unlabeled set by leveraging prior knowledge from a labeled set with known classes. Despite its importance, there is a lack of theoretical foundations for NCD. This paper bridges the gap by providing an analytical framework to formalize and investigate \emph{when and how known classes can help discover novel classes}. Tailored to the NCD problem, we introduce a graph-theoretic representation that can be learned by a novel NCD Spectral Contrastive Loss (NSCL). Minimizing this objective is equivalent to factorizing the graph's adjacency matrix, which allows us to derive a provable error bound and provide the sufficient and necessary condition for NCD. Empirically, NSCL can match or outperform several strong baselines on common benchmark datasets, which is appealing for practical usage while enjoying theoretical guarantees. Code is available at:~\url{https://github.com/deeplearning-wisc/NSCL.git}.
\end{abstract}

\section{Introduction}
\label{sec:intro}

Though modern machine learning methods have achieved remarkable success~\citep{he2016deep, chen2020simclr, song2020score, wang2022pico}, the vast majority of learning algorithms have been driven by the closed-world setting, where the classes are assumed stationary and unchanged between training and testing. 
However, machine learning models in the open world will inevitably encounter novel classes that are outside the existing known categories~\cite{sun2021react,sun2022out,ming2022delving,ming2023exploit}.  Novel Class Discovery (NCD)~\cite{Han2019dtc} has emerged as an important problem, which aims to cluster similar samples in an unlabeled dataset (of novel classes) by way of utilizing knowledge from the labeled data (of known classes).  Key to NCD  is harnessing the power of labeled data for possible knowledge sharing and transfer to the unlabeled data~\cite{hsu2017kcl, Han2019dtc, hsu2019mcl, zhong2021openmix, zhao2020rankstat, yang2022divide, sun2023opencon}.

\begin{figure}[t]
    \centering
    \includegraphics[width=0.99\linewidth]{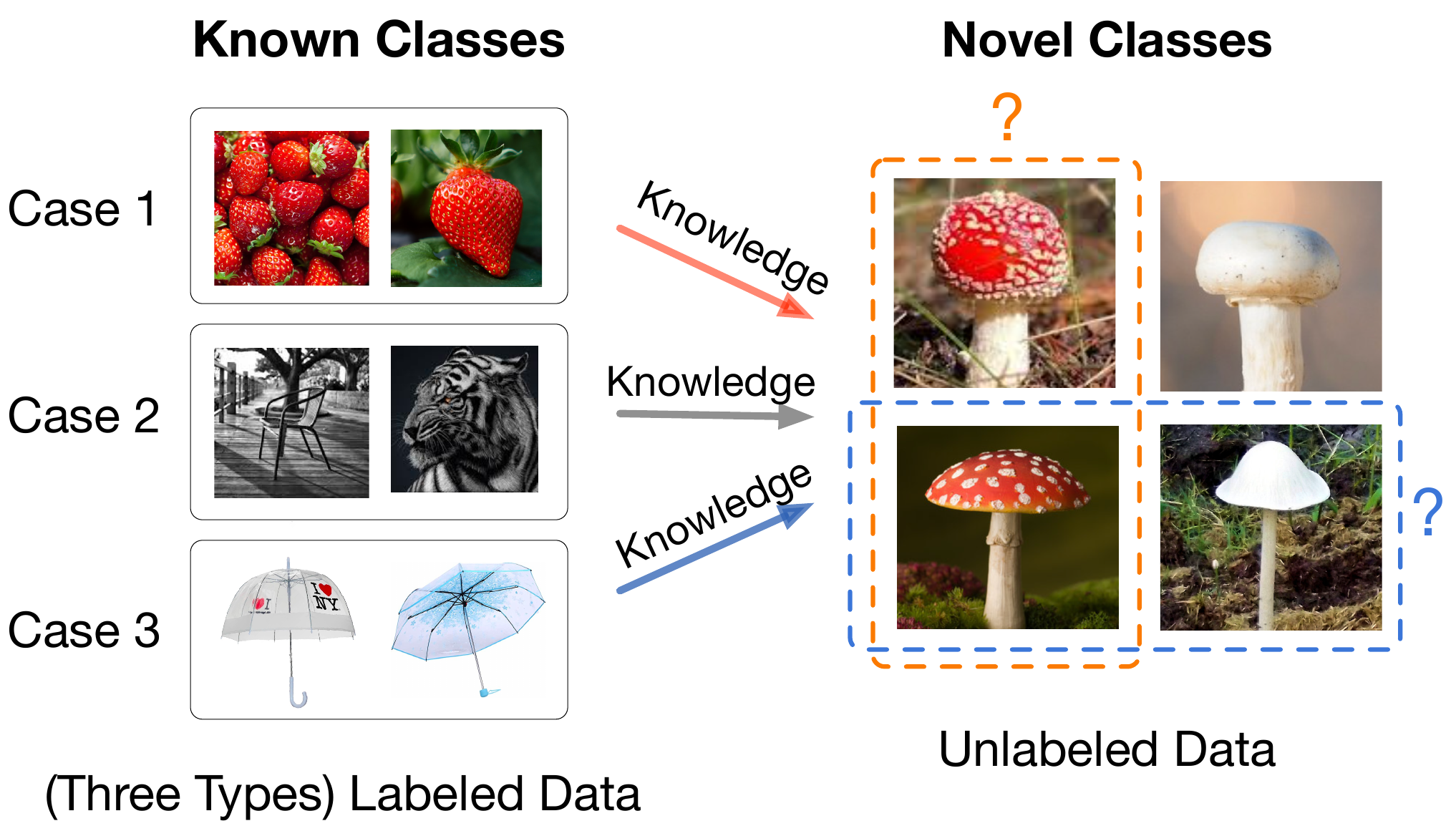}
    \caption{\textbf{ Novel Class Discovery (NCD)} aims to cluster similar samples in unlabeled
data (right), by way of utilizing knowledge from the labeled data (left). We illustrate scenarios where different known classes  could result in different novel clusters (e.g., red mushrooms or mushrooms with umbrella shapes). This paper aims to provide a formal understanding.
    }
    \label{fig:teaser}
\end{figure}

One promising approach for NCD is to  learn feature representation jointly from both labeled and unlabeled data, so that meaningful cluster structures emerge as novel classes. We argue that interesting intricacies can arise in this learning process---the resulting novel clusters may be very different, depending on the type of known class provided. We exemplify the nuances in Figure~\ref{fig:teaser}. In one scenario, the novel class ``red mushroom'' can be discovered, provided with the known class ``strawberry'' of a shared color feature. Alternatively, a different novel class can also emerge by grouping the bottom two images together (as ``mushroom with umbrella shape'' class), if the umbrella-shape images are given as a known class to the learner. We argue---perhaps obviously---that a formalized understanding of the intricate phenomenon is needed. This motivates our research:
\begin{center}
   \textit{\textbf{When and  how does the known class help discover novel classes?}}  
\end{center}

Despite the empirical successes in recent years, there is a limited theoretical understanding and formalization for novel class discovery. To the best of our knowledge, there is no prior work that investigated this research question from a rigorous theoretical standpoint or provided provable error bound. Our work thus complements the existing works by filling in the critical blank.

In this paper, we start by formalizing a new learning algorithm that facilitates the understanding of NCD from a spectral analysis perspective. Our theoretical framework first introduces a graph-theoretic representation tailored for NCD, where the vertices are all the labeled and unlabeled data points, and classes form connected sub-graphs (Section~\ref{sec:graph_rep}). 
Based on this graph representation, we then 
introduce a new loss called NCD Spectral Contrastive Loss (NSCL) and show that minimizing our loss is equivalent to performing spectral decomposition on the graph (Section~\ref{sec:ncd-scl}). Such  equivalence allows us to derive the formal error bound for NCD based on the properties of the graph, which directly encodes the relations between known and novel classes.

We analyze the NCD quality by the linear probing performance on novel data, which is the least error of all possible linear classifiers with the learned representation. 
Our main result (Theorem~\ref{th:no_approx}) suggests that the linear probing  error can be significantly reduced (even to 0) when the linear span of known samples' feature covers the ``ignorance space'' of unlabeled data in discovering novel classes. 
Lastly, we verify that our theoretical guarantees can translate into empirical effectiveness. In particular, NSCL establishes competitive performance on common NCD benchmarks, outperforming the best baseline by \textbf{10.6}\% on the CIFAR-100-50 dataset (with 50 novel classes). 

Our \textbf{main contributions} are: 
\vspace{-0.2cm}
\begin{enumerate}
    \item We provide the first provable framework for the NCD problem, formalizing it by spectral decomposition of the graph containing both known and novel data. 
    Our framework  allows the research community to gain insights from a graph-theoretic perspective. 
    \vspace{-0.2cm}
    \item We propose a new loss called NCD Spectral Contrastive Loss
(NSCL) and show that minimizing our loss is equivalent to performing singular decomposition on the graph. The loss leads to strong empirical performance while enjoying theoretical guarantees.
    \vspace{-0.2cm}
    \item We provide theoretical insight by formally defining the semantic relationship between known and novel classes. Based on that, we derive an error bound of novel class discovery  and investigate the sufficient and necessary conditions for the perfect discovery results.
\end{enumerate}

\section{Related Work}
\label{sec:related}

\textbf{Novel class discovery.} 
Early works tackled novel category discovery (NCD) as a transfer learning problem, such as DTC~\citep{Han2019dtc}, KCL~\citep{hsu2017kcl}, MCL~\citep{hsu2019mcl}. 
Many subsequent works incorporate representation learning for NCD, including  RankStats~\citep{zhao2020rankstat}, NCL~\citep{zhong2021ncl} and UNO~\citep{fini2021unified}. CompEx~\cite{yang2022divide} further uses a novelty detection module to better separate novel and known. However, none of the previous works theoretically analyzed the key question: \textit{when and how do known classes help?} \citet{li2022closer} try to answer this question from an empirical perspective by comparing labeled datasets from different levels of semantic similarity. \citet{chi2021meta} directly define a solvable condition for the NCD problem but do not investigate the semantic relationship between known and novel classes. Our paper is the first work that systematically investigates the ``{when and how}'' questions by modeling the sample relevance
from a graph-theoretic perspective and providing a provable error bound for the NCD problem.

\textbf{Spectral graph theory.} 
Spectral graph theory is a classic research problem~\cite{chung1997spectral,cheeger2015lower,kannan2004clusterings,lee2014multiway,mcsherry2001spectral}, which aims to partition the graph by studying the eigenspace of the adjacency matrix. The spectral graph theory is also widely applied in machine learning~\cite{ng2001spectral,shi2000normalized,blum2001learning,zhu2003semi,argyriou2005combining,shaham2018spectralnet}. Recently, ~\citet{haochen2021provable} derive a spectral contrastive loss from the factorization of the graph's adjacency matrix which facilitates theoretical study in unsupervised domain adaptation~\cite{shen2022connect,haochen2022beyond}.
The graph definition in existing works is purely formed by the unlabeled data, whereas {our graph and adjacency matrix is uniquely tailored for the NCD problem setting and consists of both labeled data from known classes and unlabeled data from novel classes}. We offer new theoretical guarantees and insights based on the relations between known and novel classes, which has not been explored in the previous literature.%

\textbf{Theoretical analysis on contrastive learning.} 
 Recent works have advanced contrastive learning with empirical success~\cite{chen2020simclr,khosla2020supervised,zhang2021supporting,wang2022pico}, which necessitates  a theoretical foundation. ~\citet{arora2019theoretical,lee2021predicting,tosh2021contrastive,tosh2021contrastive2,balestriero2022contrastive, shi2023the} provided provable guarantees on the representations learned by contrastive learning for linear probing. ~\citet{shen2022connect,haochen2021provable,haochen2022beyond} further modeled the pairwise relation from the graphic view and provided error analysis of the downstream tasks. However, the existing body of work has mostly focused on \emph{unsupervised learning}. There is no prior theoretical work considering the NCD problem where both labeled and unlabeled data are presented. In this paper, we systematically investigate how the label information can change the representation manifold and affect the downstream novel class discovery task. 

\section{Setup}
\label{sec:setup}

Formally, we describe the data setup and learning goal for novel class discovery (NCD).

\noindent \textbf{Data setup.} We consider the empirical training set  $\mathcal{D}_{l} \cup \mathcal{D}_{u}$ as a union of labeled and unlabeled data. The labeled dataset is given by $\mathcal{D}_{l} = \{(\bar{x}_1,y_1),\ldots,(\bar{x}_i,y_i),\ldots\}$, where $y_i$ belongs to known class space $\mathcal{Y}_l$; and the unlabeled dataset is $\mathcal{D}_{u} = \{\bar{x}_1, \ldots,\bar{x}_j,\ldots\}$. We assume that each unlabeled sample $\bar x \in \mathcal{D}_u$ belongs to one of the \textbf{novel} classes, \emph{which do not overlap with the {known} classes $\mathcal{Y}_l$}.  We use $\mathcal{P}_{l}$ and $\mathcal{P}_{u}$ to denote the  marginal distributions of labeled and unlabeled data in the input space. Further, we let $\mathcal{P}_{l_i}$ denote the distribution of labeled samples with class label $i \in \mathcal{Y}_l$.

\noindent \textbf{Learning goal.} We assume that there exists an underlying class space $\mathcal{Y}_{u} = \{1, ..., |\mathcal{Y}_u|\}$ for unlabeled data $\mathcal{X}_u$, which is not revealed to the learner. The goal of novel class discovery is to learn a clustering for the novel data, which can be mapped to $\mathcal{Y}_{u}$ with low  error. %

%

\section{Spectral Contrastive Learning for Novel Class Discovery}
\label{sec:method}

In this section, we introduce a new learning algorithm for NCD, from a graph-theoretic perspective. NCD is inherently a clustering problem---grouping similar points in unlabeled data $\mathcal{D}_u$ into the same cluster, by way of possibly utilizing helpful information from the labeled data $\mathcal{D}_l$. This clustering process can be fundamentally modeled by a graph, where the vertices are all the data points and classes form connected sub-graphs. Our novel framework first introduces a graph-theoretic representation for NCD, where edges connect similar data points (Section~\ref{sec:graph_rep}). We then 
propose a new loss that performs spectral decomposition on the similarity graph and can be
 written as a contrastive learning objective on neural net representations (Section~\ref{sec:ncd-scl}).   %

\subsection{Graph-Theoretic Representation for NCD} 
\label{sec:graph_rep}

We start by formally defining the augmentation graph and adjacency matrix. 
For notation clarity, we use $\bar x$ to indicate the natural sample (raw inputs without augmentation). Given an $\bar x$, we use $\mathcal{T}(x|\Bar{x})$ to denote the probability of $x$ being augmented from $\Bar{x}$. For instance, when $\Bar{x}$ represents an image, $\mathcal{T}(\cdot|\Bar{x})$ can be the distribution of common augmentations such as Gaussian blur, color distortion, and random cropping. 
The augmentation allows us to define a general population space $\mathcal{X}$, which contains all the original images along with their augmentations. In our case, $\mathcal{X}$ ($|\mathcal{X}|=N$) is composed of two parts $\mathcal{X}_l$ ($|\mathcal{X}_l| = N_l$), $\mathcal{X}_u$ ($|\mathcal{X}_u| = N_u$) which represents the division into labeled data with known classes and unlabeled data with novel classes respectively. 
Unlike unsupervised learning~\cite{chen2020simclr}, NCD has access to both labeled and unlabeled data. This leads to two cases where two samples $x$ and $x^+$ form a {\textbf{positive pair}} if: 
\begin{enumerate}[(a)]
    \item 
    $x$ and $x^+$ are augmented from the same unlabeled image $\Bar{x}_u\sim \mathcal{P}_u$.
    
    \item $x$ and $x^+$ are augmented from two labeled samples $\Bar{x}_l$ and $\Bar{x}'_l$ \emph{with the same known class $i$}. In other words, both $\Bar{x}_l$ and $\Bar{x}'_l$ are drawn independently from $\mathcal{P}_{l_i}$.%
\end{enumerate}
We define the graph $G(\mathcal{X}, w)$ with vertex set $\mathcal{X}$ and edge weights $w$. For any two augmented data $x, x' \in \mathcal{X}$, $w_{x x'}$ is the marginal probability of generating the pair $(x,x')$:
\begin{align}
\begin{split}
w_{x x^{\prime}} &\triangleq \alpha \sum_{i \in \mathcal{Y}_l}\mathbb{E}_{\bar{x}_{l} \sim {\mathcal{P}_{l_i}}} \mathbb{E}_{\bar{x}'_{l} \sim {\mathcal{P}_{l_i}}} \tikzmarknode{c2}{\highlight{red}{$\mathcal{T}(x | \bar{x}_{l}) \mathcal{T}\left(x' | \bar{x}'_{l}\right)  $}} \\ &+ 
    \beta \mathbb{E}_{\bar{x}_{u} \sim {\mathcal{P}_u}} \tikzmarknode{c1}{\highlight{blue}{$ \mathcal{T}(x| \bar{x}_{u}) \mathcal{T}\left(x'| \bar{x}_{u}\right) $}},
    \label{eq:def_wxx}
    \vspace{1cm}
\end{split}
\end{align}
\begin{tikzpicture}[overlay,remember picture,>=stealth,nodes={align=left,inner ysep=1pt},<-]
    \path (c2.south) ++ (0,0.1em) node[anchor=north west,color=red!67] (c2t){\textit{ case (b)}};
    \draw [color=red!87](c2.south) |- ([xshift=-0.3ex,color=red] c2t.south east);
    \path (c1.south) ++ (0,0.1em) node[anchor=north west,color=blue!67] (c1t){\textit{ case (a)}};
    \draw [color=blue!87](c1.south) |- ([xshift=-0.3ex,color=blue]c1t.south east);
\end{tikzpicture}

where $\alpha,\beta$ modulates the importance between unlabeled and labeled data. 
The magnitude of $w_{xx'}$ indicates the ``positiveness'' or similarity between  $x$ and $x'$. 
We then use $w_x = \sum_{x' \in \mathcal{X}}w_{xx'}$ to denote the total edge weights connected to vertex $x$.

As a standard technique in graph theory~\cite{chung1997spectral}, we use the \textit{normalized adjacency matrix}:
\begin{equation}
    \dot{A}\triangleq D^{-1 / 2} A D^{-1 / 2},
    \label{eq:def}
\end{equation}
where  $A \in \mathbb{R}^{N \times N}$ is adjacency matrix with entries $A_{x x^\prime}=w_{x x^{\prime}}$ and $D \in \mathbb{R}^{N \times N}$ is a diagonal matrix with $D_{x x}=w_x.$ The normalization balances the degree of each node,  reducing the influence of vertices with very large degrees. The adjacency matrix defines the probability of $x$ and $x^{\prime}$  being considered as the positive pair from the perspective of augmentation, which helps derive the NCD Spectral Contrastive Loss as we show next.

\subsection{NCD Spectral Contrastive Learning}
\label{sec:ncd-scl}
In this subsection, we propose a formal definition of NCD Spectral Contrastive Loss, which can be derived from a spectral decomposition of $\dot{A}$. The derivation of the loss is inspired by ~\cite{haochen2021provable}, and allows us to theoretically show the equivalence between learning  feature embeddings and the projection on the top-$k$ SVD components of $\dot{A}$. Importantly, such equivalence facilitates the theoretical understanding based on the semantic relation between known and novel classes encoded in $\dot{A}$. 

Specifically, we consider low-rank matrix approximation:
\begin{equation}
    \min _{F \in \mathbb{R}^{N \times k}} \mathcal{L}_{\mathrm{mf}}(F, A)\triangleq\left\|\Dot{A}-F F^{\top}\right\|_F^2
    \label{eq:lmf}
\end{equation}
According to the Eckart–Young–Mirsky theorem~\cite{eckart1936approximation}, the minimizer of this loss function is $F^*\in \mathbb{R}^{N \times k}$ such that $F^* F^{*\top}$ contains the top-$k$ components of $\Dot{A}$'s SVD decomposition. 

Now, if we view each row $\*f_x^{\top}$ of $F$ as a learned feature embedding  $f:\mathcal{X}\mapsto \mathbb{R}^k$, the $\mathcal{L}_{\mathrm{mf}}(F, A)$ can be written as a form of the contrastive learning objective. We formalize this connection in Theorem~\ref{th:ncd-scl} below.

\begin{theorem}
\label{th:ncd-scl} 
We define $\*f_x = \sqrt{w_x}f(x)$ for some function $f$. Recall $\alpha,\beta$ are hyper-parameters defined in Eq.~\eqref{eq:def_wxx}. Then minimizing the loss function $\mathcal{L}_{\mathrm{mf}}(F, A)$ is equivalent to minimizing the following loss function for $f$, which we term \textbf{NCD Spectral Contrastive Loss (NSCL)}:
\begin{align}
\begin{split}
    \mathcal{L}_{nscl}(f) &\triangleq - 2\alpha \mathcal{L}_1(f) 
- 2\beta  \mathcal{L}_2(f) \\ & + \alpha^2 \mathcal{L}_3(f) + 2\alpha \beta \mathcal{L}_4(f) +  
\beta^2 \mathcal{L}_5(f),
\label{eq:def_nscl}
\end{split}
\end{align} where
\begin{align*}
    \mathcal{L}_1(f) &= \sum_{i \in \mathcal{Y}_l}\underset{\substack{\bar{x}_{l} \sim \mathcal{P}_{{l_i}}, \bar{x}'_{l} \sim \mathcal{P}_{{l_i}},\\x \sim \mathcal{T}(\cdot|\bar{x}_{l}), x^{+} \sim \mathcal{T}(\cdot|\bar{x}'_l)}}{\mathbb{E}}\left[f(x)^{\top} {f}\left(x^{+}\right)\right] , \\
    \mathcal{L}_2(f) &= \underset{\substack{\bar{x}_{u} \sim \mathcal{P}_{u},\\x \sim \mathcal{T}(\cdot|\bar{x}_{u}), x^{+} \sim \mathcal{T}(\cdot|\bar{x}_u)}}{\mathbb{E}}
\left[f(x)^{\top} {f}\left(x^{+}\right)\right], \\
    \mathcal{L}_3(f) &= \sum_{i \in \mathcal{Y}_l}\sum_{j \in \mathcal{Y}_l}\underset{\substack{\bar{x}_l \sim \mathcal{P}_{{l_i}}, \bar{x}'_l \sim \mathcal{P}_{{l_{j}}},\\x \sim \mathcal{T}(\cdot|\bar{x}_l), x^{-} \sim \mathcal{T}(\cdot|\bar{x}'_l)}}{\mathbb{E}}
\left[\left(f(x)^{\top} {f}\left(x^{-}\right)\right)^2\right], \\
    \mathcal{L}_4(f) &= \sum_{i \in \mathcal{Y}_l}\underset{\substack{\bar{x}_l \sim \mathcal{P}_{{l_i}}, \bar{x}_u \sim \mathcal{P}_{u},\\x \sim \mathcal{T}(\cdot|\bar{x}_l), x^{-} \sim \mathcal{T}(\cdot|\bar{x}_u)}}{\mathbb{E}}
\left[\left(f(x)^{\top} {f}\left(x^{-}\right)\right)^2\right], \\
    \mathcal{L}_5(f) &= \underset{\substack{\bar{x}_u \sim \mathcal{P}_{u}, \bar{x}'_u \sim \mathcal{P}_{u},\\x \sim \mathcal{T}(\cdot|\bar{x}_u), x^{-} \sim \mathcal{T}(\cdot|\bar{x}'_u)}}{\mathbb{E}}
\left[\left(f(x)^{\top} {f}\left(x^{-}\right)\right)^2\right].
\end{align*}
\end{theorem}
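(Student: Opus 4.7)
The plan is to expand the Frobenius norm objective $\mathcal{L}_{\mathrm{mf}}(F,A) = \|\dot{A}-FF^\top\|_F^2$ and show that, after the reparameterization $\mathbf{f}_x = \sqrt{w_x}\,f(x)$, it equals $\mathcal{L}_{nscl}(f)$ plus a constant that is independent of $f$. Since $\|\dot{A}\|_F^2$ does not depend on $F$, I would first rewrite
\begin{equation*}
\|\dot{A}-FF^\top\|_F^2 = \|\dot{A}\|_F^2 - 2\,\langle \dot{A},\,FF^\top\rangle + \|FF^\top\|_F^2,
\end{equation*}
and treat the two $f$-dependent pieces separately.

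For the linear (inner-product) term, I would write $\langle \dot{A},FF^\top\rangle = \sum_{x,x'} \dot{A}_{xx'}\,\mathbf{f}_x^\top \mathbf{f}_{x'}$ and observe that plugging $\mathbf{f}_x=\sqrt{w_x}f(x)$ and $\dot{A}_{xx'}=w_{xx'}/\sqrt{w_x w_{x'}}$ collapses the normalizers, yielding $\sum_{x,x'} w_{xx'}\,f(x)^\top f(x')$. Then I would substitute the two-case definition of $w_{xx'}$ from Eq.~\eqref{eq:def_wxx} and interchange the sums over $\mathcal X\times\mathcal X$ with the expectations. In the labeled case the double sum over $(x,x')$ factorizes through the augmentation distributions $\mathcal{T}(\cdot\mid\bar x_l)$ and $\mathcal{T}(\cdot\mid\bar x_l')$, so the $(x,x')$-sum becomes the expectation over $x\sim \mathcal T(\cdot\mid\bar x_l),\,x^+\sim \mathcal T(\cdot\mid\bar x_l')$; summed over classes $i\in\mathcal Y_l$ this is exactly $\mathcal{L}_1(f)$. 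The same manipulation on the unlabeled term gives $\mathcal{L}_2(f)$. Altogether the linear piece contributes $-2\alpha\mathcal{L}_1(f) - 2\beta\mathcal{L}_2(f)$.

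For the quadratic term I would expand $\|FF^\top\|_F^2 = \sum_{x,x'} (\mathbf{f}_x^\top \mathbf{f}_{x'})^2 = \sum_{x,x'} w_x w_{x'}\,(f(x)^\top f(x'))^2$. The key observation here is that $w_x = \sum_{x'} w_{xx'}$ factorizes nicely because $\sum_{x'}\mathcal{T}(x'\mid \bar x')=1$ for any natural $\bar x'$; so $w_x = \alpha\sum_{i\in\mathcal Y_l}\mathbb{E}_{\bar x_l\sim \mathcal P_{l_i}}[\mathcal T(x\mid \bar x_l)] + \beta\,\mathbb{E}_{\bar x_u\sim \mathcal P_u}[\mathcal T(x\mid \bar x_u)]$. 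Taking the product $w_x w_{x'}$ produces four groups of terms: labeled–labeled ($\alpha^2$), labeled–unlabeled and unlabeled–labeled (combining by the $(x,x')$ symmetry into a single $2\alpha\beta$ term), and unlabeled–unlabeled ($\beta^2$). For each group I would again push the $(x,x')$-sum into the augmentation expectations, recognizing the resulting expressions as $\mathcal{L}_3,\,\mathcal{L}_4,\,\mathcal{L}_5$ respectively.

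Combining the two pieces gives $\mathcal{L}_{\mathrm{mf}}(F,A) = \|\dot{A}\|_F^2 + \mathcal{L}_{nscl}(f)$, which establishes the equivalence since $\|\dot A\|_F^2$ is independent of $f$. The calculation itself is routine; the only real obstacle is the bookkeeping in the quadratic expansion, in particular (i) verifying that the off-diagonal labeled/unlabeled products combine with the correct coefficient $2\alpha\beta$ using the symmetry $x\leftrightarrow x'$, and (ii) making sure the ``$\sum_{x'}\mathcal T(x'\mid\bar x')=1$'' identity is applied to exactly the right marginal so that $w_x$ genuinely factors as a sum of two single-variable expectations. Once these are handled, matching each resulting expression to one of $\mathcal{L}_1,\ldots,\mathcal{L}_5$ is a direct read-off from the definitions.
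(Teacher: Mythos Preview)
Your proposal is correct and follows essentially the same approach as the paper: expand $\|\dot A - FF^\top\|_F^2$ entrywise, use $\mathbf{f}_x=\sqrt{w_x}f(x)$ to cancel the normalizers so that the linear term becomes $\sum_{x,x'}w_{xx'}f(x)^\top f(x')$ and the quadratic term becomes $\sum_{x,x'}w_xw_{x'}(f(x)^\top f(x'))^2$, compute $w_x$ via $\sum_{x'}\mathcal T(x'\mid\bar x')=1$, and then substitute the definition of $w_{xx'}$ to read off $\mathcal L_1,\ldots,\mathcal L_5$ with the stated coefficients. The paper's proof carries out exactly these steps, including the $2\alpha\beta$ factor from the symmetric labeled/unlabeled cross term.
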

\begin{proof} (\textit{sketch})
We can expand $\mathcal{L}_{\mathrm{mf}}(F, A)$ and obtain
\begin{align*}
&\mathcal{L}_{\mathrm{mf}}(F, A) =\sum_{x, x^{\prime} \in \mathcal{X}}\left(\frac{w_{x x^{\prime}}}{\sqrt{w_x w_{x^{\prime}}}}-\*f_x^{\top} \*f_{x^{\prime}}\right)^2 = const + \\
&\sum_{x, x^{\prime} \in \mathcal{X}}\left(-2 w_{x x^{\prime}} f(x)^{\top} {f}\left(x^{\prime}\right)+w_x w_{x^{\prime}}\left(f(x)^{\top}{f}\left(x^{\prime}\right)\right)^2\right)
\end{align*} 
The form of $\mathcal{L}_{nscl}(f)$ is derived from plugging $w_{xx'}$ (defined in Eq.~\eqref{eq:def_wxx}) and $w_x$. 
We include the details in Appendix~\ref{sec:proof-nscl}. 
\end{proof}

\textbf{Interpretation of $\mathcal{L}_{nscl}(f)$.} 
At a high level, $\mathcal{L}_1$ and $\mathcal{L}_2$ push the embeddings of \textbf{positive pairs} to be closer while $\mathcal{L}_3$, $\mathcal{L}_4$ 
 and $\mathcal{L}_5$ pull away the embeddings of \textbf{negative pairs}. In particular, $\mathcal{L}_1$ samples two random augmentation views of two images from labeled data with the \textbf{same} class label, and $\mathcal{L}_2$ samples two views from the same image in $\mathcal{X}_{u}$. For negative pairs, $\mathcal{L}_3$ uses two augmentation views from two samples in $\mathcal{X}_{l}$ with \textbf{any} class label. $\mathcal{L}_4$ uses two views of one sample in $\mathcal{X}_{l}$ and another one in $\mathcal{X}_{u}$. $\mathcal{L}_5$ uses two views from two random samples in $\mathcal{X}_{u}$. 

\section{Theoretical Analysis}
\label{sec:theory}
So far we have presented a spectral approach for NCD based on the augmentation graph. Under this formulation, we now formally investigate and analyze:
\emph{\textbf{when and how does the known class help discover novel class?}}  We start by showing that analyzing the linear probing performance is equivalent to analyzing the regression residual using singular vectors of $\Dot{A}$ in Sec.~\ref{sec:setup}. We then construct a toy example to illustrate and verify the key insight in Sec.~\ref{sec:toy}. We finally provide a formal theory for the general case in Sec.~\ref{sec:theory_main}.

\subsection{Theoretical Setup}
\label{sec:theory_setup}
\textbf{Representation for unlabeled data.} We apply NCD spectral learning objective $\mathcal{L}_{nscl}(f)$ in Equation~\ref{eq:def_nscl} and assume the optimizer is capable to obtain the representation that minimizes the loss. We can then obtain the $F^*$ s.t. $F^*F^{*\top}$ are the top-$k$ components of $\Dot{A}$’s SVD decomposition. 
To ease the analysis, we will focus on the top-$k$ singular vectors $V^* \in \mathbb{R}^{N\times k}$ of $\Dot{A}$ such that $F^* = V^* \sqrt{\Sigma_k}$, where $\Sigma_k$ is the diagonal matrix with top-$k$ singular values ($\sigma_1, ..., \sigma_k$). 

Since we are primarily interested in the unlabeled data, we split $V^*$ into two parts: $U^* \in \mathbb{R}^{N_u\times k}$ for unlabeled data and $L^* \in \mathbb{R}^{N_l\times k}$ for labeled data, respectively. Assuming the first $N_l$ rows/columns in $\Dot{A}$ corresponds to the labeled data, we can conveniently rewrite $V^*$ as: 
\begin{equation}
    V^* = \left[\begin{array}{c}
         L^* (\text{labeled part})\\
         U^* (\text{unlabeled part})
    \end{array}\right]
\end{equation}

\textbf{Linear probing evaluation.} With the learned representation for the unlabeled data, we can evaluate NCD quality by the linear probing performance. The strategy is commonly used in self-supervised learning~\cite{chen2020simclr}. Specifically, the weight of a linear classifier is denoted as $\*M \in \mathbb{R}^{k \times |\mathcal{Y}_u|}$.
The class prediction is given by $h(x;f, \*M) = \operatorname{argmax}_{i \in \mathcal{Y}_u} (f(x)^\top \*M)_i$. The linear probing performance is given by the least error of all possible linear classifiers:
\begin{equation}
\mathcal{E}(f)\triangleq\underset{{\*M}\in \mathbb{R}^{k \times |\mathcal{Y}_u|}}{\operatorname{min}}  \underset{{x \in \mathcal{X}_u}}{\sum} \mathbbm{1}\left[y(x) \neq h(x;f, \*M)\right],
\label{eq:def_error}
\end{equation}

where $y(x)$ indicates the ground-truth class of $x$.

\begin{figure*}[t]
    \centering
    \includegraphics[width=0.85\linewidth]{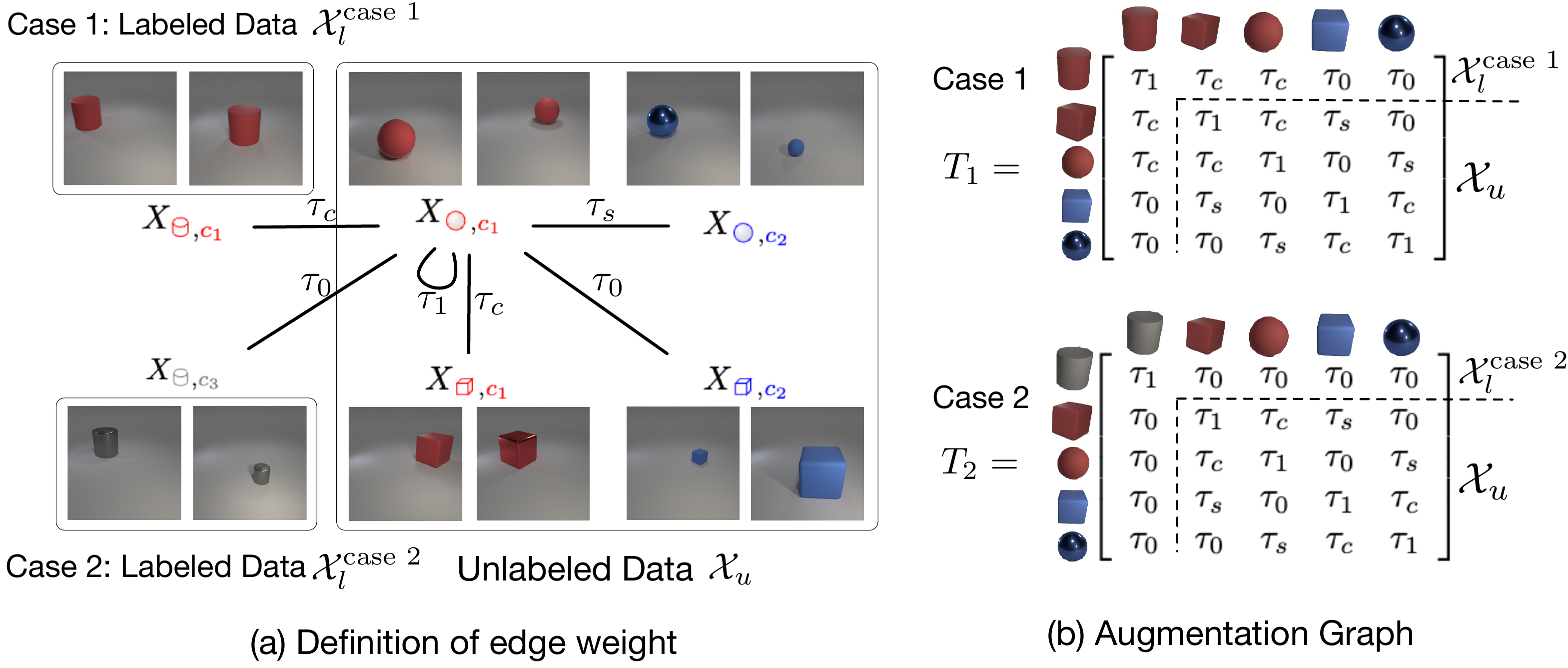}
    \caption{An illustrative example for theoretical analysis. \textbf{(a)} The unlabeled data $\mathcal{X}_u$ consists of 3D objects of sphere/cube with red/blue colors. We consider two cases of labeled data: (1) Case 1 uses a red cylinder $X_{\textcolor{red}{\cylinder{0.4}}, \textcolor{red}{c_1}}$ which is correlated with the target novel class (red). (2) Case 2 uses gray cylinder $X_{\textcolor{gray}{\cylinder{0.4}}, \textcolor{gray}{c_3}}$ which has no correlation with $\mathcal{X}_u$. \textbf{(b)} The augmentation matrices for case 1 and case 2 respectively. See definition in Eq.~\eqref{eq:def_edge}. Best viewed in color.}
    \label{fig:toy_setting}
\end{figure*}

\textbf{Residual analysis.} With defined $U^*$, we can bound the linear probing error $\mathcal{E}(f)$  by the residual of the regression error $\mathcal{R}(U^*)$ as we show in Lemma~\ref{lemma:cls_bound} with proof in Appendix~\ref{sec:sup_cls_bound}. 
\begin{lemma}
Denote the $\mathbf{y}(x) \in \mathbb{R}^{|\mathcal{Y}_u|}$ as a one-hot vector whose $y(x)$-th position is 1 and 0 elsewhere. Let
$\mathbf{Y} \in \mathbb{R}^{N_u \times |\mathcal{Y}_u|}$ as a binary mask whose rows are stacked by $\mathbf{y}(x)$. We have: 
    $$\mathcal{R}(U^*) \triangleq \underset{{\*M}\in \mathbb{R}^{k \times |\mathcal{Y}_u|}}{\operatorname{min}} \|\mathbf{Y} - U^* \*M \|^2_F \geq \frac{1}{2}\mathcal{E}(f).$$
\label{lemma:cls_bound}
\end{lemma}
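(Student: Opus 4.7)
The proof reduces a squared-loss regression guarantee to a 0/1 classification guarantee via a standard per-row argument. The plan is to (i) convert an optimal regressor into a linear classifier with the same argmax behavior, (ii) show that any row on which the argmax is wrong contributes at least $1/2$ to the squared Frobenius residual, and (iii) sum these row-wise contributions and compare to $\mathcal{E}(f)$.

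\textbf{Step 1: Translating regression weights into classifier weights.} Let $\mathbf{M}^{\star}_{\mathrm{reg}} \in \mathbb{R}^{k\times |\mathcal{Y}_u|}$ achieve the minimum in $\mathcal{R}(U^*)$. Recall from the setup that $F^* = V^*\sqrt{\Sigma_k}$ and $\mathbf{f}_x = \sqrt{w_x}\, f(x)$, so for every unlabeled $x$ the row $(U^*)_x$ and the feature $f(x)$ are related by $f(x) = \sqrt{\Sigma_k}\,(U^*)_x^{\top}/\sqrt{w_x}$. Define the classifier weights $\mathbf{M}_{\mathrm{cls}} \triangleq \sqrt{\Sigma_k}^{-1}\mathbf{M}^{\star}_{\mathrm{reg}}$, so that $f(x)^{\top}\mathbf{M}_{\mathrm{cls}} = \tfrac{1}{\sqrt{w_x}}\,(U^*)_x^{\top}\mathbf{M}^{\star}_{\mathrm{reg}}$. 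Since $1/\sqrt{w_x}>0$ is a per-sample positive scalar, it does not affect the argmax, hence $h(x;f,\mathbf{M}_{\mathrm{cls}}) = \arg\max_{j}\bigl(U^*\mathbf{M}^{\star}_{\mathrm{reg}}\bigr)_{x,j}$. In particular, the number of $x\in\mathcal{X}_u$ on which this argmax is wrong is an upper bound on $\mathcal{E}(f)$, because $\mathcal{E}(f)$ is a minimum over \emph{all} linear classifiers.

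\textbf{Step 2: Per-row lower bound on the squared residual.} Fix an $x\in\mathcal{X}_u$ on which the classifier above errs, and let $j^{\star}\neq y(x)$ be an index satisfying $\bigl(U^*\mathbf{M}^{\star}_{\mathrm{reg}}\bigr)_{x,j^{\star}} \geq \bigl(U^*\mathbf{M}^{\star}_{\mathrm{reg}}\bigr)_{x,y(x)}$. Write $a = (U^*\mathbf{M}^{\star}_{\mathrm{reg}})_{x,y(x)}$ and $b = (U^*\mathbf{M}^{\star}_{\mathrm{reg}})_{x,j^{\star}}$. Because $\mathbf{Y}_{x,y(x)}=1$ and $\mathbf{Y}_{x,j^{\star}}=0$, the $x$-th row of $\|\mathbf{Y}-U^*\mathbf{M}^{\star}_{\mathrm{reg}}\|_F^2$ contributes at least $(a-1)^2 + b^2$. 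Using $b\geq a$, minimize $(a-1)^2 + b^2$ over $b\geq a$: the minimum is attained at $b=a$, giving $2a^2 - 2a + 1$, whose minimum over $a\in\mathbb{R}$ equals $1/2$ (at $a=1/2$). Hence every misclassified row contributes at least $1/2$ to the squared Frobenius norm.

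\textbf{Step 3: Summation and conclusion.} Summing the row-wise bound over all misclassified $x$ (and discarding nonnegative contributions from correctly classified rows) gives
\begin{equation*}
\mathcal{R}(U^*) = \bigl\|\mathbf{Y} - U^*\mathbf{M}^{\star}_{\mathrm{reg}}\bigr\|_F^2 \;\geq\; \tfrac{1}{2}\,\#\bigl\{x\in\mathcal{X}_u : h(x;f,\mathbf{M}_{\mathrm{cls}})\neq y(x)\bigr\} \;\geq\; \tfrac{1}{2}\,\mathcal{E}(f),
\end{equation*}
which is the desired inequality. The only mildly delicate step is Step 1, namely checking that the change of basis from $U^*$ to $f$ can be absorbed into the classifier weights without altering the argmax; once this is done the rest is the one-line calculus inequality in Step 2. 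There is no genuine obstacle, but care is needed in tracking the $\sqrt{\Sigma_k}$ and $\sqrt{w_x}$ factors so that the equivalence in Step 1 is rigorous.
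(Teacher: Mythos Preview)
Your proof is correct and follows essentially the same approach as the paper: both convert between $U^*$ and $f$ by absorbing $\sqrt{\Sigma_k}$ into $\mathbf{M}$ and noting that the per-sample factor $\sqrt{w_x}>0$ leaves the argmax unchanged, then use the per-row inequality that a misclassified sample contributes at least $1/2$ to the squared residual. The only cosmetic difference is in how that elementary inequality is established---the paper uses $(1-a)^2+b^2\ge\tfrac12(1-a+b)^2\ge\tfrac12$ (since $b\ge a$ implies $1-a+b\ge1$), whereas you minimize $(a-1)^2+b^2$ directly over the half-plane $b\ge a$.
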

Note that we can rewrite $\mathcal{R}(U^*)$ as the summation of individual residual terms $\mathcal{R}(U^*, \Vec{y}_i)$: 
$
    \mathcal{R}(U^*) = \sum_{i \in \mathcal{Y}_u} \mathcal{R}(U^*, \Vec{y}_i),
$
where
$$ \mathcal{R}(U^*, \Vec{y}_i) \triangleq  \underset{{\Vec{\mu}_i}\in \mathbb{R}^{k}}{\operatorname{min}} \|\Vec{y}_i - U^* \Vec{\mu}_i \|^2_2, $$
and $\Vec{y}_i \in \mathbb{R}^{N_u}$ is the $i$-th column of $\mathbf{Y}$ and $\Vec{\mu}_i \in \mathbb{R}^{k}$ is the $i$-th column of $\*M$. Without losing the generality, our analysis will revolve around the residual term $\mathcal{R}(U^*, \Vec{y}_i)$ for specific class $i$. It is clear that if learned representation $U^*$ encodes more information of the label vector $\Vec{y}_i$, the residual $\mathcal{R}(U^*, \Vec{y}_i)$ becomes smaller\footnote{In an extreme case, if the first column of $U^*$ is exactly the same as $\Vec{y}_i$, one can set $\Vec{\mu}_i = [1, 0 , 0, ...]^{\top}$ to make residual zero.}. Such insight can be used to investigate which type of known class is more helpful for learning the representation of novel classes.

\subsection{An Illustrative Example}
\label{sec:toy}

We consider a toy example that helps illustrate the core idea of our theoretical findings. Specifically, the example aims to cluster 3D objects of different colors and shapes, as shown in Figure~\ref{fig:toy_setting} (a). These images are generated by a 3D rendering software~\cite{johnson2017clevr} with user-defined properties including colors, shape, size, position, etc. 

In what follows, we define two data configurations and corresponding graphs, where the labeled data is correlated  with the attribute of unlabeled data (\textbf{case 1}) vs. not (\textbf{case 2}). We are interested in contrasting the representations (in form of singular vectors) and residuals  derived from both scenarios. The proof of all theorems in this section is provided in Appendix~\ref{sec:proof-eigen}.

\textbf{Motivation and data design.}  For simplicity, we focus on two main properties: color and shape. Formally, the images with shape $s$ and color $c$ are sampled from a generation procedure $\mathcal{G}$:  $$X_{s, c} \sim \mathcal{G}(s, c),$$ 
where $s \in \{\cube{1} (\text{cube}), \sphere{0.7}{gray} (\text{sphere}),     \cylinder{0.6} (\text{cylinder})\}$ and  $c \in \{\textcolor{red}{c_1} (\text{red}), \textcolor{blue}{c_2} (\text{blue}), \textcolor{gray}{c_3} (\text{gray})\}$. We then construct our unlabeled dataset containing red/blue cubes/spheres as: 

$$\mathcal{X}_u \triangleq \{X_{\textcolor{red}{\cube{0.6}}, \textcolor{red}{c_1}}, X_{\textcolor{red}{\sphere{0.5}{red}}, \textcolor{red}{c_1}}, X_{\textcolor{blue}{\cube{0.6}}, \textcolor{blue}{c_2}}, X_{\textcolor{blue}{\sphere{0.5}{blue}}, \textcolor{blue}{c_2}}\}.$$

For simplicity, we assume each element in $\mathcal{X}_u$ is a single example. W.o.l.g, we also assume the red cube and red sphere form the target novel class. Then the corresponding labeling vector on $\mathcal{X}_u$ is defined by: $$\Vec{y} \triangleq \{1,1,0,0\}.$$ 
To answer \textit{``when and how does the known class help discover novel class?''}, we  construct two separate scenarios: one helps and the other one does not. Specifically, in the first case, we let the labeled data $\mathcal{X}_{l}^{\text{case 1}}$ be strongly correlated with the target class (red color) in unlabeled data:
$$\mathcal{X}_{l}^{\text{case 1}} \triangleq \{X_{\textcolor{red}{\cylinder{0.4}}, \textcolor{red}{c_1}}\} (\text{red cylinder}).$$  
In the second case, we construct the labeled data that has no correlation with any novel classes. We use gray cylinders which have no overlap in either shape and color: 
$$\mathcal{X}_{l}^{\text{case 2}} \triangleq \{X_{\textcolor{gray}{\cylinder{0.4}}, \textcolor{gray}{c_3}}\}  (\text{gray cylinder}).$$ 
Putting it together, our entire training dataset is 
$\mathcal{X}^{\text{case 1}} = \mathcal{X}_{l}^{\text{case 1}} \cup \mathcal{X}_{u}$ or  $\mathcal{X}^{\text{case 2}} = \mathcal{X}_{l}^\text{case 2} \cup \mathcal{X}_{u}$. We aim to verify the hypothesis that: 
\textit{the representation learned by $\mathcal{X}^{\text{case 1}}$ provides a much smaller regression residual to $\Vec{y}$ than $\mathcal{X}^{\text{case 2}}$ for color class. }

\textbf{Augmentation graph.} 
Based on the data, we now define the probability of augmenting an image $X_{s, c}$ to another $X'_{s', c'}$:
\begin{align}
    \mathcal{T}\left(X'_{s', c'} \mid X_{s, c} \right)=\left\{\begin{array}{lll}
    \tau_1 & \text { if } & s=s', c=c', \\
    \tau_{s} & \text { if } & s=s', c \neq c', \\
    \tau_{c} & \text { if } & s\neq s', c=c', \\
    \tau_0 & \text { if } & s\neq s', c\neq c', \\
    \end{array}\right. 
    \label{eq:def_edge}
\end{align}
It is natural to assume the magnitude order that follows $\tau_1 \gg \max(\tau_{s},\tau_{c})$ and $\min(\tau_{s},\tau_{c}) \gg \tau_0$.  In two data settings $\mathcal{X}^\text{case 1}$ and $\mathcal{X}^\text{case 2}$, 
the corresponding augmentation matrices ${T}_1, {T}_2$ formed by $\mathcal{T}\left(\cdot|\cdot\right)$ are presented in Fig.~\ref{fig:toy_setting} (b). 
According to Eq.~\eqref{eq:def_wxx}, it can be verified that the adjacency matrices are $A_1 = T_1^2$ and $A_2 = T_2^2$ respectively. 

\textbf{Main analysis.}  
We are primarily interested in analyzing the difference of the representation space derived from $A_1$ vs. $A_2$. Since $\tau_1 \gg \max(\tau_{s},\tau_{c})$, one can show that $A_1$ and $A_2$ are positive-definite. The singular vector is thus equivalent to the eigenvector. Also note that $A_1$ and their square root $T_1$ have the same eigenvectors and order. It is thus equivalent to analyzing the eigenvectors of $T_1$. Same with $A_2$ and $T_2$. In this toy example, we consider the eigenvalue problem of the unnormalized adjacency matrix\footnote{The normalized/unnormalized adjacency matrix corresponds to the NCut/RatioCut problem respectively~\cite{von2007tutorial}.} for simplicity. 

We put analysis on the top-$2$ eigenvectors $V^*_{1},V^*_{2}  \in \mathbb{R}^{5\times 2}$ for  $A_1$/$A_2$ ---- as we will see later, the top-$1$ eigenvector of $T_1/T_2$ usually functions at distinguishing known vs novel data, while the 2nd eigenvector functions at distinguishing color or shape. 

We let $U^*_1 \in \mathbb{R}^{4\times 2}$ contains the last 4 rows of $V^*_{1}$, and corresponds to the ``representation''  for the unlabeled data only. $U^*_2$ is defined in the same way \emph{w.r.t.} $A_2$. We have the following theorem:

\begin{theorem}
   Assume $\tau_1 = 1$, $\tau_0 = 0$, $\tau_s < 1.5\tau_c$. We have
$$
U^*_1=\left[\begin{array}{ccccc}
 a_1 & a_1 & b_1 & b_1 \\
 a_2 & a_2 & b_2 & b_2 \\
\end{array}\right]^{\top},
$$
where $a_1,b_1$ are some positive real numbers, and $a_2,b_2$ has different signs.  
$$U^*_2=\left\{\begin{array}{ll}   
\frac{1}{2}\left[\begin{array}{cccc}
1 & 1 & 1 & 1\\ 1 & 1 & -1 & -1\\
\end{array}\right]^{\top}, & \text{if } \tau_{s} < \tau_{c}, \\
\frac{1}{2}\left[\begin{array}{cccc}
1 & 1 & 1 & 1\\ -1 & 1 & -1 & 1\\
\end{array}\right]^{\top}, & \text{if } \tau_{s} > \tau_{c}, 
\end{array}\right. $$
With label vector $\Vec{y} = \{1,1,0,0\}$, we have 
\begin{equation}
    \mathcal{R}(U^*_1, \Vec{y}) = 0, \mathcal{R}(U^*_2, \Vec{y}) = \left\{\begin{array}{ll}    
    0, & \text{if } \tau_{s} < \tau_{c}\\
     1, &  \text{if } \tau_{s} > \tau_{c}.
    \end{array}\right. 
\end{equation}
    \label{th:toy_extreme}
\end{theorem}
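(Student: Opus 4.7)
The plan is to exploit symmetries of $T_1$ and $T_2$ to block-diagonalize each matrix, reducing the $5\times 5$ spectral problems to smaller ones that can be read off directly. Since $\tau_1=1$ dominates, each $A_i\succ 0$, and because $A_i=T_i^2$, the top-$k$ SVD of $A_i$ coincides with the top-$k$ eigendecomposition of $T_i$, so I can work with $T_i$ throughout.

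I would first handle Case 2, which is easier. $T_2$ is block-diagonal with a $1\times 1$ block on the gray cylinder (eigenvalue $1$) and a $4\times 4$ block $B=I+\tau_c P_c+\tau_s P_s$ on the unlabeled vertices, where $P_c,P_s$ are the commuting involutions swapping within each color group and each shape group. Their joint eigenbasis $\{\tfrac12[1,\varepsilon_1,\varepsilon_2,\varepsilon_1\varepsilon_2]^\top:\varepsilon_i\in\{\pm1\}\}$ diagonalizes $B$ with eigenvalues $1+\varepsilon_1\tau_c+\varepsilon_2\tau_s$. The top two always exceed the isolated eigenvalue $1$: the maximizer is $[1,1,1,1]^\top/2$, and the runner-up is $[1,1,-1,-1]^\top/2$ when $\tau_c>\tau_s$ or (up to a global sign) $[-1,1,-1,1]^\top/2$ when $\tau_s>\tau_c$, exactly matching the claimed $U^*_2$ after restricting to the unlabeled rows.

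For Case 1, I would exploit that $T_1$ commutes with the permutation $\sigma$ swapping cube with sphere within each color, so $T_1$ block-diagonalizes into a $3\times 3$ symmetric block $M_{\mathrm{sym}}$ and a $2\times 2$ antisymmetric block $M_{\mathrm{asym}}$ in an orthonormal basis. A short calculation yields $M_{\mathrm{asym}}$ with diagonal entries $1-\tau_c$ and off-diagonal entries $\tau_s$ (eigenvalues $1-\tau_c\pm\tau_s$), while $M_{\mathrm{sym}}$ has characteristic polynomial $P(\mu)=\mu^3-2\tau_c\mu^2-(\tau_c^2+\tau_s^2)\mu+2\tau_c^3$ in the shifted variable $\mu=\lambda-1$. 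Evaluating $P$ at strategic points ($P(0)=2\tau_c^3>0$, $P(\tau_c)=-\tau_s^2\tau_c<0$, $P(2\tau_c)=-2\tau_c\tau_s^2<0$) together with Vieta's formulas localizes the three real roots as $\mu_1>2\tau_c$, $\mu_2\in(0,\tau_c)$, and $\mu_3<0$. The hypothesis $\tau_s<1.5\tau_c$ enters through the key identity
\begin{equation*}
P(\tau_s-\tau_c)=2\tau_c\tau_s(3\tau_c-2\tau_s),
\end{equation*}
which is positive precisely under this hypothesis; combined with $\tau_s-\tau_c<2\tau_c<\mu_1$ this forces $\tau_s-\tau_c\in(\mu_3,\mu_2)$, giving $\mu_2>\tau_s-\tau_c$ and hence both top eigenvalues of $T_1$ come from $M_{\mathrm{sym}}$. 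For the eigenvectors, Perron--Frobenius applied to the nonnegative irreducible matrix $T_1$ gives $a_1,b_1>0$; solving $(M_{\mathrm{sym}}-(\mu_2+1)I)v=0$ together with the already-established $\mu_2<\tau_c$ yields $a_2>0$ and $b_2<0$, so the signs of $a_2,b_2$ differ as claimed.

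Finally, I would read off the residuals directly from these structures. For $U^*_1$ the coefficient matrix $\bigl(\begin{smallmatrix}a_1 & a_2\\ b_1 & b_2\end{smallmatrix}\bigr)$ has determinant $a_1 b_2-a_2 b_1<0$, so $U^*_1\vec\mu=\vec y$ admits a unique solution and $\mathcal{R}(U^*_1,\vec y)=0$. For $U^*_2$ with $\tau_s<\tau_c$, the choice $\vec\mu=(1,1)^\top$ exactly reproduces $\vec y$. For $U^*_2$ with $\tau_s>\tau_c$, the columns span only vectors of the form $(a,b,a,b)^\top$, and minimizing $\|\vec y-(a,b,a,b)^\top\|^2$ over $a,b$ gives optimum $a=b=1/2$ with residual $1$. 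The hardest step will be the spectral ordering in Case 1: the identity $P(\tau_s-\tau_c)=2\tau_c\tau_s(3\tau_c-2\tau_s)$ is what ties the conclusion to the precise threshold $\tau_s<1.5\tau_c$, and without it one cannot rule out the antisymmetric eigenvector $1-\tau_c+\tau_s$ from overtaking the second symmetric eigenvalue.
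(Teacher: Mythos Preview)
Your proposal is correct and follows essentially the same route as the paper: both reduce the $5\times5$ problem to the same cubic (the paper's $g(z)$ with $t=\tau_c$ coincides with your $P(\mu)$), and the decisive ordering step is exactly the evaluation $P(\tau_s-\tau_c)=2\tau_c\tau_s(3\tau_c-2\tau_s)>0$ under $\tau_s<1.5\tau_c$, which is precisely the paper's threshold $t>\bar t$. The only differences are packaging: you organize the reduction via the commuting involution $\sigma$ and block-diagonalization (the paper instead verifies the two antisymmetric eigenvectors directly and then parametrizes the remaining three), and you invoke Perron--Frobenius for $a_1,b_1>0$ where the paper reads off signs from the explicit formulas $a(\lambda),b(\lambda)$; both shortcuts are sound and slightly cleaner than the paper's direct computation.
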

\textbf{Interpretation of Theorem~\ref{th:toy_extreme}:} 
The discussion can be divided into two cases: (1) 
$\tau_{s} < \tau_{c}$. (2) $\tau_{s} > \tau_{c}$. In the \textbf{first case} $\tau_{s} < \tau_{c}$, the connection between the same-color data pair is already stronger than the same-shape data pair. Thus the eigenvector corresponding to color information ($\frac{1}{2}[1,1,-1,-1]^\top$)  will be more prominent (and ranked higher in $U^*_2$) than ``shape eigenvector'' ($\frac{1}{2}[-1,1,-1,1]^\top$). 
Since the feature $U^*_2$ already encodes sufficient information (color) of the labeling vector $\Vec{y}$, fitting $\Vec{y}$ becomes easy and the residual $\mathcal{R}(U^*_2,\Vec{y})$ becomes 0.

In NCD, \textbf{we are more interested in the second case} ($\tau_{s} > \tau_{c}$), where unlabeled data indeed need some help from labeled data for better clustering. Such help comes from the semantic connection between labeled data and unlabeled data. In our toy example, the semantic connection comes from the first row/column of ${T}_1$ and ${T}_2$. However, the first row/column of ${T}_2$ is $[1,0,0,0,0]$, which means there is no extra information offered from $\mathcal{X}_{l}^\text{case 2}$. It is because $\mathcal{X}_{l}^\text{case 2}$ contains gray cylinders which have neither colors nor shapes connection to unlabeled data $\mathcal{X}_u$. Contrarily, $\mathcal{X}_{l}^\text{case 1}$ with red cylinder provides strong color prior. This allows the  ``color eigenvector'' ($[a_2, a_2, -b_2, -b_2]$) to become a main component in $U^*_1$, making the residual $\mathcal{R}(U^*_1,\Vec{y})=0$ even when $\tau_{s} > \tau_{c}$. 

\textbf{Main takeaway.} In Theorem~\ref{th:toy_extreme}, we have verified the hypothesis that incorporating labeled data $\mathcal{X}_{l}^\text{case 1}$ (red  cylinder) can reduce the residual $\mathcal{R}(U^*_1,\Vec{y})$ more than using  $\mathcal{X}_{l}^\text{case 2}$, especially when color is a weaker signal than shape in unlabeled data. 

\textbf{Extension: A more general result.} 
Note that $T_1$ and $T_2$ are special cases of the following $T(t)$ with $t \in [\tau_0, \tau_c]$:

\begin{equation*}
T(t)=\left[\begin{array}{ccccc}
\tau_1 & t & t & \tau_0 & \tau_0 \\
t & \tau_1 & \tau_{c} & \tau_{s} & \tau_0  \\
t & \tau_{c} & \tau_1 & \tau_0 & \tau_{s}  \\
\tau_0 & \tau_{s} & \tau_0 & \tau_1 & \tau_{c}  \\
\tau_0 & \tau_0 & \tau_{s} & \tau_{c} & \tau_1  \\
\end{array}\right],
\end{equation*}
where $t$ indicates the strength of the connection between labeled data and a novel class in unlabeled data. Let $U^*_t$ be the representation for unlabeled data derived from $T(t)$.  The following theorem indicates that the residual decreases when $t$ increases and the residual becomes 0 when $t$ is larger than a threshold $\Bar{t}$ depending on the gap between $\tau_s$ and $\tau_c$. 
\begin{theorem}
     Assume  $\tau_1 = 1$, $\tau_0 = 0$, $1.5\tau_c > \tau_s > \tau_c$. Let $\Bar{t} = \sqrt{\frac{2(\tau_s-\tau_c)^2\tau_c}{2\tau_c - \tau_s}}$, $r: \mathbb{R} \mapsto (0,1) $ be a real value function, we have 
     \begin{equation}
         \mathcal{R}(U^*_t, \Vec{y}) = \left\{\begin{array}{ll}    
     0, &  \text{if } t \in (\Bar{t}, \tau_s), \\
    r(t), & \text{if } t \in (0, \Bar{t}), \\ 
     1, &  \text{if } t = 0. \end{array}\right. 
     \end{equation}
    \label{th:toy_general}
\end{theorem}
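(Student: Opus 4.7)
\textbf{Proof plan for Theorem~\ref{th:toy_general}.}
The plan is to exploit the shape-swap symmetry of $T(t)$: the permutation $P$ exchanging indices $2\leftrightarrow 3$ and $4\leftrightarrow 5$ commutes with $T(t)$ for every $t$, so $T(t)$ block-diagonalizes into a 2D shape-antisymmetric block that is independent of $t$ and a 3D shape-symmetric block that depends on $t$. On the antisymmetric subspace $\{(0,b,-b,c,-c)^\top\}$, $T(t)$ acts in $(b,c)$ coordinates as the fixed matrix with diagonal entries $1-\tau_c$ and off-diagonal entries $\tau_s$, whose eigenvalues are $1-\tau_c\pm\tau_s$; the top one is carried by the shape mode $(0,1,-1,1,-1)^\top/2$. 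On the symmetric subspace, in the orthonormal basis $\{e_1,(0,1,1,0,0)^\top/\sqrt{2},(0,0,0,1,1)^\top/\sqrt{2}\}$, $T(t)$ reduces to
\[
M_s(t)=\begin{pmatrix} 1 & \sqrt{2}\,t & 0 \\ \sqrt{2}\,t & 1+\tau_c & \tau_s \\ 0 & \tau_s & 1+\tau_c \end{pmatrix}.
\]

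The key step is to pinpoint when the second eigenvalue $\lambda_2(t)$ of $M_s(t)$ crosses the top antisymmetric value $1+\tau_s-\tau_c$. Evaluating the cubic $\det(M_s(t)-\lambda I)=(1-\lambda)[(1+\tau_c-\lambda)^2-\tau_s^2]-2t^2(1+\tau_c-\lambda)$ at $\lambda=1+\tau_s-\tau_c$ simplifies to $4\tau_c(\tau_s-\tau_c)^2-2t^2(2\tau_c-\tau_s)$, which vanishes exactly at $t=\Bar{t}$ (the hypothesis $\tau_s<1.5\tau_c$ ensures $2\tau_c-\tau_s>0$). Evaluating the same cubic at $\lambda=1+\tau_c+\tau_s$ yields $2t^2\tau_s>0$ for $t>0$. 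Since the characteristic polynomial is a cubic in $\lambda$ with negative leading coefficient, a sign analysis at these two distinguished values (combined with continuity of the roots in $t$ and the arrangement at $t=0$) gives $\lambda_1(t)>1+\tau_c+\tau_s$ for all $t>0$, while $\lambda_2(t)$ is strictly below $1+\tau_s-\tau_c$ for $t<\Bar{t}$ and strictly above for $t>\Bar{t}$. Consequently, the top-2 eigenvectors of $T(t)$ are (top-symmetric, shape-antisymmetric) for $t\in(0,\Bar{t})$ and (top-symmetric, second-symmetric) for $t\in(\Bar{t},\tau_s)$.

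Finally I compute the residual in each regime. For $t\in(\Bar{t},\tau_s)$, both top-2 eigenvectors are shape-symmetric, so their restrictions to the unlabeled block have the form $(b,b,c,c)^\top/\sqrt{2}$ and live in the 2D subspace $\{(x,x,y,y)^\top\}\subset\mathbb{R}^4$, which contains $\Vec{y}=(1,1,0,0)^\top$. A short argument via the eigenvalue equations shows that no eigenvector of $M_s(t)$ has zero $a$-component when $t>0$ (otherwise the bottom two rows would force a contradiction), so the two restrictions are linearly independent and span this whole 2D subspace; hence $\mathcal{R}(U^*_t,\Vec{y})=0$. For $t\in(0,\Bar{t})$, the shape-antisymmetric second eigenvector restricts to $(1,-1,1,-1)^\top/2$ which is orthogonal to the shape-symmetric $\Vec{y}$, so only the top-symmetric eigenvector $(a_1,b_1,c_1)$ contributes. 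A direct least-squares computation then yields $r(t)=2c_1^2/(b_1^2+c_1^2)$. The bottom row of the eigenvalue equation gives $c_1=\tau_s b_1/(\lambda_1(t)-1-\tau_c)$, and the bound $\lambda_1(t)>1+\tau_c+\tau_s$ established above forces $0<c_1<b_1$ (choosing $b_1>0$), so $r(t)\in(0,1)$. The case $t=0$ has already been handled in Theorem~\ref{th:toy_extreme} and gives residual $1$.

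The main obstacle is the sign/ordering analysis of the cubic characteristic polynomial of $M_s(t)$: one must rule out spurious eigenvalue crossings as $t$ varies over $(0,\tau_s)$ and confirm that both test evaluations identify the correct interval of roots. This is handled cleanly by evaluating only at the two distinguished points $\lambda=1+\tau_c+\tau_s$ and $\lambda=1+\tau_s-\tau_c$, rather than solving the cubic explicitly or invoking delicate perturbation arguments.
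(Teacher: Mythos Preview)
Your proof is correct and reaches the same destination as the paper, but the packaging is somewhat different. The paper (Lemma~\ref{lemma:sup_eigprob}) simply posits the two antisymmetric eigenvectors $[0,\pm1,\mp1,\pm1,\mp1]^\top$ and the three symmetric ones $[1,a(\lambda),a(\lambda),b(\lambda),b(\lambda)]^\top$ as an ansatz and verifies them directly, then studies the cubic $g(z)=z^3-2\tau_c z^2+(\tau_c^2-\tau_s^2-2t^2)z+2\tau_c t^2$ (which is exactly the negative of your $\det(M_s(t)-\lambda I)$ after the substitution $z=\lambda-1$). Your use of the shape-swap involution $P$ to \emph{derive} the symmetric/antisymmetric split is a cleaner explanation for why those eigenvector forms appear, and it reduces the $5\times5$ problem to the explicit $3\times3$ block $M_s(t)$ from the outset. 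On the sign analysis, the paper evaluates $g$ at five test points ($-\tau_c-\tau_s,\,0,\,\tau_c,\,\tau_s-\tau_c,\,\tau_c+\tau_s$) to pin down all three symmetric roots in every regime, whereas you evaluate only at the two values $1+\tau_c+\tau_s$ and $1+\tau_s-\tau_c$ and invoke continuity in $t$ from the $t=0$ arrangement to rule out spurious crossings; both are valid, and yours is a bit more economical for the purposes of this theorem. Your residual formula $r(t)=2c_1^2/(b_1^2+c_1^2)$ coincides with the paper's closed form $2\tau_s^2/[(\lambda_1-1-\tau_c)^2+\tau_s^2]$ once you substitute $c_1=\tau_s b_1/(\lambda_1-1-\tau_c)$. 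One small point: the step ``$a\neq 0$ for every eigenvector of $M_s(t)$, hence the two unlabeled restrictions are linearly independent'' deserves one more line---after normalizing $a=1$, the first and third rows of the eigenvalue system determine $(b,c)$ as a function of $\lambda$ alone, so distinct eigenvalues force $(b_1,c_1)\not\parallel(b_2,c_2)$; the paper handles this by writing $a(\lambda),b(\lambda)$ explicitly and checking $a_1b_2-a_2b_1\neq 0$.
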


\textbf{Can adding labeled data be harmful?} We exemplify the scenario in Figure~\ref{fig:teaser}, where the umbrella images are
given as a known class, undesirably causing the “mushroom with
umbrella shape” to be grouped together. To formally analyze this case,  we construct \textbf{case 3}:
$$\mathcal{X}_{l}^{\text{case 3}} \triangleq \{X_{\textcolor{gray}{\cube{0.5}}, \textcolor{gray}{c_3}}\}  (\text{gray cube}).$$  
In this case, we have the following Lemma~\ref{th:toy_harmful}. 
\begin{lemma}
    If  $\frac{\tau_c}{\tau_s} \in (1, 1.5)$,  
$
    \mathcal{R}(U^*_3, \Vec{y})- \mathcal{R}(U^*_2, \Vec{y})=1. 
$
    \label{th:toy_harmful}
\end{lemma}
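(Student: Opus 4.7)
The plan is to exploit the $\mathbb{Z}/2$ symmetry $\sigma=(2\,4)(3\,5)$ of $T_3$, which swaps red with blue and leaves the gray cube fixed. Under this symmetry $\mathbb{R}^5$ decomposes into a three-dimensional symmetric subspace (with orthonormal basis $e_1=(1,0,0,0,0)$, $e_2=(0,1,0,1,0)/\sqrt{2}$, $e_3=(0,0,1,0,1)/\sqrt{2}$) and a two-dimensional antisymmetric subspace (basis $(0,1,0,-1,0)/\sqrt{2}$ and $(0,0,1,0,-1)/\sqrt{2}$). First I would compute the block-diagonalized reduction, obtaining
\[
M_s=\begin{bmatrix}1 & \sqrt{2}\tau_s & 0\\ \sqrt{2}\tau_s & 1+\tau_s & \tau_c\\ 0 & \tau_c & 1+\tau_s\end{bmatrix},\qquad M_a=\begin{bmatrix}1-\tau_s & \tau_c\\ \tau_c & 1-\tau_s\end{bmatrix}.
\]
The antisymmetric eigenvalues are $1-\tau_s\pm\tau_c$, so the largest antisymmetric eigenvalue is $1+\tau_c-\tau_s$, with 5D eigenvector $(0,1,1,-1,-1)/2$.

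The key step is showing both top eigenvalues of $T_3$ come from the symmetric block. Let $p(x)=x^3-\tau_s x^2-(\tau_c^2+2\tau_s^2)x+\tau_s\tau_c^2$ be the characteristic polynomial of $M_s$ in the shifted variable $x=1+\tau_s-\lambda$. A direct expansion yields the identity
\[
p(2\tau_s-\tau_c)=2\tau_s\tau_c\,(2\tau_c-3\tau_s),
\]
which is strictly negative precisely under the hypothesis $\tau_c/\tau_s<1.5$. Symmetry of $M_s$ guarantees three real roots for $p$, and evaluating $p(-\tau_c)=2\tau_s^2\tau_c>0$, $p(0)=\tau_s\tau_c^2>0$, $p(\tau_c)=-2\tau_s^2\tau_c<0$ locates them in $(-\infty,-\tau_c)$, $(0,\tau_c)$, and $(\tau_c,\infty)$ respectively. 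Since $0<2\tau_s-\tau_c<\tau_c$ and $p(2\tau_s-\tau_c)<0$, the middle root $\xi_{\mathrm{mid}}$ must satisfy $\xi_{\mathrm{mid}}<2\tau_s-\tau_c$. Translating back via $\lambda=1+\tau_s-x$, the second-largest eigenvalue of $M_s$ is $1+\tau_s-\xi_{\mathrm{mid}}>1+\tau_c-\tau_s$, strictly exceeding the top antisymmetric eigenvalue. Therefore the top two eigenvectors of $T_3$ both live in the symmetric subspace.

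To conclude, I would show the two restricted eigenvectors span the full symmetric subspace of $\mathbb{R}^4$, namely $\{v : v_1=v_3,\, v_2=v_4\}$. The third row of the eigen-equation for $M_s$ forces the ratio $c_i/b_i=\tau_c/(\lambda_i-1-\tau_s)$, which is distinct for distinct eigenvalues, giving linear independence. Decomposing $\vec{y}=(1,1,0,0)=(\tfrac12,\tfrac12,\tfrac12,\tfrac12)+(\tfrac12,\tfrac12,-\tfrac12,-\tfrac12)$ into symmetric and antisymmetric parts, the best linear fit by columns of $U^*_3$ is exactly the symmetric component, so $\mathcal{R}(U^*_3,\vec{y})=\|(\tfrac12,\tfrac12,-\tfrac12,-\tfrac12)\|_2^2=1$. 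Combined with $\mathcal{R}(U^*_2,\vec{y})=0$ from Theorem~\ref{th:toy_extreme} (which applies since $\tau_c>\tau_s$), this yields the claimed difference of $1$.

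The main obstacle is the spectral comparison between symmetric and antisymmetric blocks, since the cubic $p(x)$ does not factor cleanly. The crucial trick is choosing the test point $2\tau_s-\tau_c$: its image under $p$ collapses to a transparent product whose sign precisely encodes the threshold $\tau_c/\tau_s<1.5$ from the hypothesis.
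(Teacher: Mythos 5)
Your proof is correct, and it reaches the paper's conclusion by the same structural route: the top two eigenvectors of $T_3$ restricted to the unlabeled data have the paired form $(a,b,a,b)$, so $U^*_3$ spans $\{u: u_1=u_3,\ u_2=u_4\}$, the label vector $\Vec{y}=(1,1,0,0)$ has antisymmetric component $(\tfrac12,\tfrac12,-\tfrac12,-\tfrac12)$ of squared norm $1$, and $\mathcal{R}(U^*_2,\Vec{y})=0$ follows from Theorem~\ref{th:toy_extreme} since $\tau_s<\tau_c$. The difference is in how that eigenvector structure is justified. The paper's proof simply says to ``follow the same proof as Lemma~\ref{lemma:sup_eigprob}''---a lemma about the one-parameter family $T(t)$, of which $T_3$ is \emph{not} a member---and never exhibits where the hypothesis $\tau_c/\tau_s<1.5$ enters. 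You instead block-diagonalize $T_3$ by the red/blue swap symmetry, reduce the question to comparing the second root of the symmetric cubic with the top antisymmetric eigenvalue $1+\tau_c-\tau_s$, and isolate the threshold via the identity $p(2\tau_s-\tau_c)=2\tau_s\tau_c(2\tau_c-3\tau_s)$, whose sign is exactly the condition $\tau_c<1.5\tau_s$. I verified the reduced blocks $M_s$, $M_a$, the shifted characteristic polynomial, the evaluations at $-\tau_c,0,\tau_c,2\tau_s-\tau_c$, and the root localization; all are correct. Your argument is therefore not just an alternative but a completion of the paper's: it supplies the spectral-ordering step the paper asserts without proof. The only cosmetic omission is that the ratio argument for linear independence of the restricted eigenvectors tacitly assumes $b_i\neq 0$ and $\lambda_i\neq 1+\tau_s$; both are immediate (the latter because $p(0)=\tau_s\tau_c^2\neq 0$, the former because $b_i=0$ would force $c_i=0$ and then $a_i=0$ via the second row of the eigen-equation), but worth a sentence.
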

The residual in case 3 is now larger than in case 2, since the shape is treated as a more important feature than the color feature (which relates to the target class). The main takeaway of this lemma is that the labeled data can be harmful when its connection with unlabeled data is undesirably stronger in the spurious feature dimension.

\begin{figure}[t]
    \centering
    \includegraphics[width=0.9\linewidth]{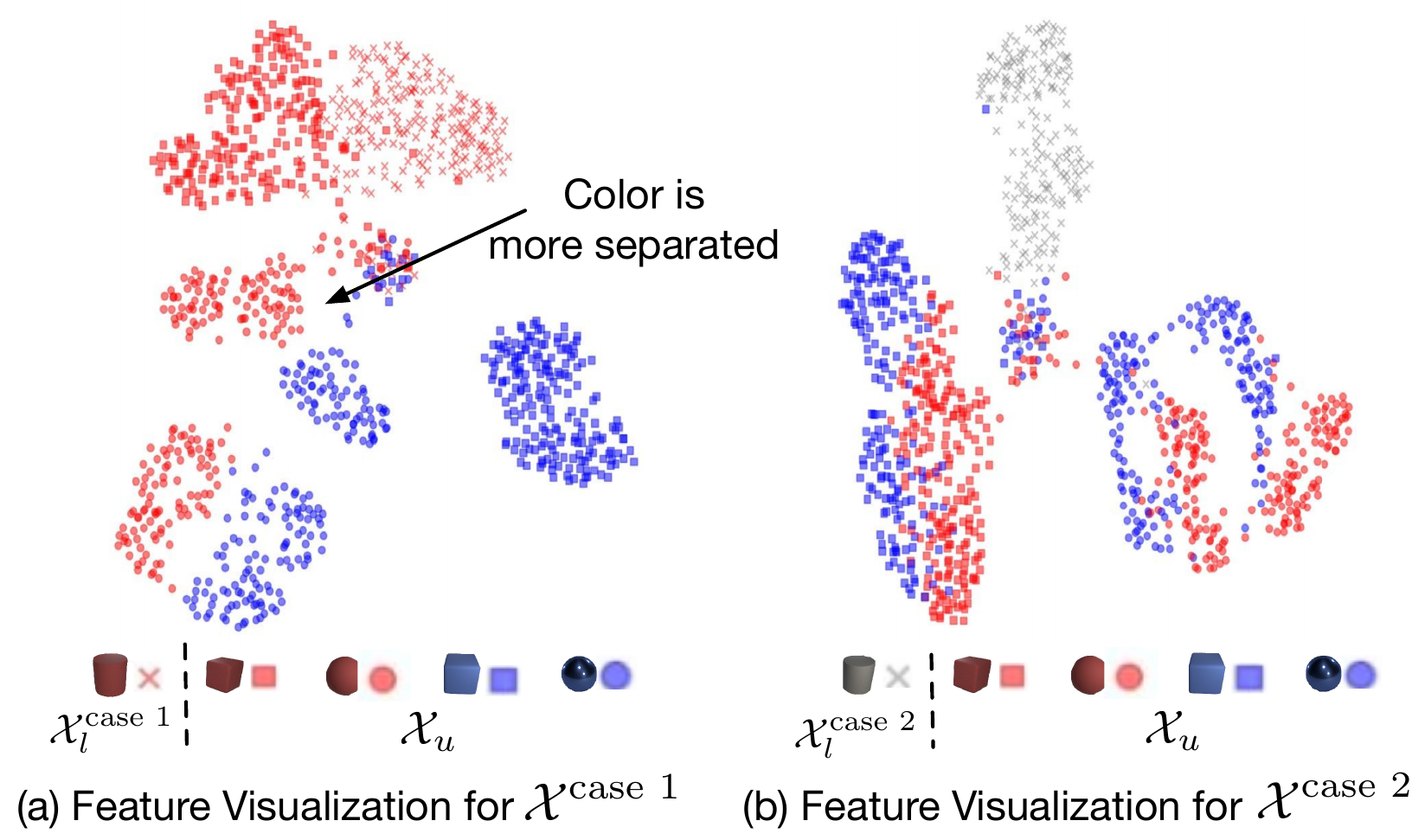}
    \caption{UMAP~\citep{umap} visualization of the feature embedding learned from $\mathcal{X}^\text{case 1}$ and $\mathcal{X}^\text{case 2}$ respectively. The model is trained with NCD Spectral Contrastive Loss. }
    \label{fig:toy_vis}
\end{figure}
\textbf{Qualitative results.} The theoretical results can be verified in our empirical results by visualization in Fig.~\ref{fig:toy_vis}. Due to the space limitation, we include experimental details in Appendix~\ref{sec:sup_exp_vis}. As seen in Fig.~\ref{fig:toy_vis} (a), the features of unlabeled data $\mathcal{X}_u$ jointly learned with red cylinder $\mathcal{X}_{l}^\text{case 1}$ are more distinguishable by color attribute,  as opposed to Fig.~\ref{fig:toy_vis} (b). 

\subsection{Main Theory}
\label{sec:theory_main}

The toy example offers an important insight that using the labeled data help reduce the residual when it provides the missing information of unlabeled data. In this section, we will formalize this insight by extending the toy example to a more general setting with $N$ samples. We start with the definition of notations. 

\textbf{Notations.} Recall that $V^* \in \mathbb{R}^{N \times k}$ is defined as the top-$k$ singular vectors of $\Dot{A}$, which is further split into two parts $L^* = \left[l_1, l_2, \cdots, l_k\right] \in \mathbb{R}^{N_l \times k}$, $U^* = \left[u_1, u_2, \cdots, u_k\right] \in \mathbb{R}^{N_u \times k}$, for labeled and unlabeled samples respectively. Then we let $V^{\flat} \in \mathbb{R}^{N \times (N-k)}$ be the remaining singular vectors of $\Dot{A}$ except top-$k$. Similarly, we split $V^{\flat}$ into two parts ($L^{\flat} = \left[l_{k+1}, l_{k+2}, \cdots, l_N\right] \in \mathbb{R}^{N_l \times (N-k)}$, $U^{\flat} = \left[u_{k+1}, u_{k+2}, \cdots, u_N\right] \in \mathbb{R}^{N_u \times (N-k)}$). 

We now present our first main result in Theorem~\ref{th:no_approx}.
\begin{theorem}
    Denote the projection matrix $\mathsf{P}_{L^{\flat}} = L^{\flat\top}(L^{\flat}L^{\flat\top})^{\dag}L^{\flat}$, where $^{\dag}$ denotes the Moore-Penrose inverse. For any labeling vector $\Vec{y} \in \{0,1\}^{N_u}$, we have
    \begin{equation}
        \mathcal{R}(U^*, \Vec{y}) \leq \|(I-\mathsf{P}_{L^{\flat}}) U^{\flat\top} \Vec{y}\|^2_2.
        \label{eq:R_bound_no_approx}
    \end{equation} 
    \label{th:no_approx}
\end{theorem}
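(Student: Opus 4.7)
The plan is to construct an explicit candidate $\vec{\mu}$ that exploits the orthogonality of the full set of singular vectors of $\Dot{A}$, and then bound the resulting residual. Since $\Dot{A} = D^{-1/2} A D^{-1/2}$ is symmetric, the concatenation $V := [V^*, V^{\flat}] \in \mathbb{R}^{N \times N}$ can be taken orthogonal. Reading off $V V^\top = I_N$ in blocks according to the labeled–unlabeled split yields the three identities
\[
U^* U^{*\top} + U^\flat U^{\flat\top} = I_{N_u}, \quad L^* U^{*\top} + L^\flat U^{\flat\top} = 0, \quad L^* L^{*\top} + L^\flat L^{\flat\top} = I_{N_l},
\]
while $V^\top V = I_N$ gives $U^{\flat\top} U^\flat = I_{N-k} - L^{\flat\top} L^\flat \preceq I_{N-k}$. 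The cross-block identity $L^* U^{*\top} = -L^\flat U^{\flat\top}$ (or its transpose) is the workhorse: though $L^*$ and $L^\flat$ never enter the definition of $\mathcal{R}(U^*, \vec{y})$, they carry exactly the information needed to route the residual through the labeled block.

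The second step is to set $\vec{w} = U^{\flat\top} \vec{y}$ and $\vec{c} = (L^\flat L^{\flat\top})^{\dag} L^\flat \vec{w}$, so that $L^{\flat\top} \vec{c} = \mathsf{P}_{L^\flat} \vec{w}$ by construction, and then take the candidate
\[
\vec{\mu} = U^{*\top} \vec{y} - L^{*\top} \vec{c}.
\]
A direct computation, using $\vec{y} - U^* U^{*\top} \vec{y} = U^\flat U^{\flat\top} \vec{y} = U^\flat \vec{w}$ (first block identity), the transposed cross-block identity $U^* L^{*\top} = -U^\flat L^{\flat\top}$, and $L^{\flat\top} \vec{c} = \mathsf{P}_{L^\flat} \vec{w}$, yields
\[
\vec{y} - U^* \vec{\mu} = U^\flat \vec{w} + U^* L^{*\top} \vec{c} = U^\flat \vec{w} - U^\flat L^{\flat\top} \vec{c} = U^\flat (I - \mathsf{P}_{L^\flat}) \vec{w}.
\]
Finally, $U^{\flat\top} U^\flat \preceq I_{N-k}$ implies $\|U^\flat \vec{z}\|_2 \leq \|\vec{z}\|_2$ for any $\vec{z}$, so $\|\vec{y} - U^* \vec{\mu}\|_2^2 \leq \|(I - \mathsf{P}_{L^\flat}) U^{\flat\top} \vec{y}\|_2^2$, and since $\mathcal{R}(U^*, \vec{y})$ is the minimum over all $\vec{\mu}$, the theorem follows.

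The main obstacle is conceptual rather than computational: recognizing that the labeled-block matrices $L^*$ and $L^\flat$—which do not appear anywhere in the residual $\mathcal{R}(U^*, \vec{y})$—nevertheless govern how much of the unlabeled signal $\vec{y}$ can be captured by $U^*$, via the cross-block orthogonality identity. Once this identity is in hand, the correction term $-L^{*\top} \vec{c}$ is essentially forced by the desire to cancel $U^\flat \mathsf{P}_{L^\flat} \vec{w}$, and everything else is routine block algebra followed by the trivial submatrix-of-orthogonal bound $\sigma_{\max}(U^\flat) \leq 1$.
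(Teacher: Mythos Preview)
Your proof is correct. The paper's own argument is slightly different in presentation: it introduces an extended vector $\vec{y}' = [\vec{\zeta}^\top, \vec{y}^\top]^\top$ with a free placeholder $\vec{\zeta} \in \mathbb{R}^{N_l}$, observes that $\min_{\vec{\mu}} \|\vec{y} - U^*\vec{\mu}\|_2^2 = \min_{\vec{\mu},\vec{\zeta}} \|\vec{y}' - V^*\vec{\mu}\|_2^2$ (because $\vec{\zeta}$ can always be set to $L^*\vec{\mu}$), and then uses the orthonormality of $V^*$ to rewrite the latter as $\min_{\vec{\zeta}} \|V^{\flat\top}\vec{y}'\|_2^2 = \min_{\vec{\zeta}} \|L^{\flat\top}\vec{\zeta} + U^{\flat\top}\vec{y}\|_2^2$, which solves to the stated bound---in fact as an \emph{equality}. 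Your constructive route reaches the same destination by a different path: you build the optimal $\vec{\mu}$ explicitly via the cross-block identity $U^*L^{*\top} = -U^\flat L^{\flat\top}$, rather than obtaining it implicitly from the lifted least-squares problem. It is worth noting that your final step is actually an equality too: since $L^\flat(I-\mathsf{P}_{L^\flat}) = 0$ and $U^{\flat\top}U^\flat = I - L^{\flat\top}L^\flat$, the vector $(I-\mathsf{P}_{L^\flat})\vec{w}$ lies in the subspace on which $U^\flat$ is an isometry, so $\|U^\flat(I-\mathsf{P}_{L^\flat})\vec{w}\|_2 = \|(I-\mathsf{P}_{L^\flat})\vec{w}\|_2$ exactly, recovering the paper's equality rather than merely an upper bound.
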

\textbf{Interpretation of Theorem~\ref{th:no_approx}.} The bound of residual in Ineq.~\eqref{eq:R_bound_no_approx} is composed of two projections:  $U^{\flat\top}$ and $(I-\mathsf{P}_{L^{\flat}})$. 
We first consider the ignorance space formed by the first projection: 
$$\text{\textbf{ignorance space}} \triangleq U^{\flat\top} \Vec{y},$$ 
 which contains the information of the labeling vector $\Vec{y}$ that is not encoded in the learned representation $U^{*}$ of the unlabeled data. Intuitively, when $\mathcal{R}\left(U^*, \vec{y}\right)>0$, the labeling vector $\vec{y}$ does not lie in the span of the existing representation $U^*$. On the other hand, $\mathcal{R}\left([\begin{array}{cc}  U^* & U^{\flat}\end{array}], \vec{y}\right)=0$ since $U^*$ together with $U^{\flat}$ forms a full rank space. We also define a measure of the ignorance degree of the current feature space: 
 $\text{\textbf{ignorance degree}} \triangleq \mathfrak{T}(\Vec{y}) = \frac{\|U^{\flat\top} \Vec{y}\|_2}{\|\Vec{y}\|_2}.$ 

The second projection matrix $(I-\mathsf{P}_{L^{\flat}})$ is composed of $L^{\flat}$, which we deem as the  extra knowledge from known classes: 
\begin{equation*}
\text{\textbf{extra knowledge}} \triangleq L^{\flat}.
\end{equation*} Multiplying the second projection matrix $(I-\mathsf{P}_{L^{\flat}})$ further reduces the norm of the ignorance space by considering the extra knowledge from labeled data, since $\mathsf{P}_{L^{\flat}}$ is a projection matrix that projects a vector to the linear span of $L^{\flat}$. In the extreme case, when $U^{\flat\top} \Vec{y}$ fully lies in the linear span of $L^{\flat}$, the residual $\mathcal{R}(U^*, \Vec{y})$ goes 0.

Next, we present another main theorem that bounds the linear probing error  $\mathcal{E}(f)$ based on the relations between the known and novel classes. See Appendix~\ref{sec:sup_main_proof} for a detailed discussion and assumption.
\begin{theorem} 
Let $[\begin{array}{cc} A_{ul}\in \mathbb{R}^{N_u \times N_l}, A_{uu}\in \mathbb{R}^{N_u \times N_u} 
\end{array}]$ be the sub-matrix of the last $N_u$ rows of $\Dot{A}$, and $q_i$ be the $i$-th eigenvector of $A_{uu}$. 
The linear probing error can be bounded as follows:
    \begin{equation*}
        \mathcal{E}(f) \lesssim \frac{2N_u}{|\mathcal{Y}_u|}\left(\sum_i^{|\mathcal{Y}_u|} \tikzmarknode{igdegree}{\highlight{red}{$\mathfrak{T}(\Vec{y}_i)$}}(1-\tikzmarknode{kappa}{\highlight{blue}{$\kappa(\Vec{y}_i)^2$}}) + \frac{\|\Dot{A} - \bar{A}\|_2}{\sigma_{k} - \sigma_{k+1}} \right), 
    \end{equation*}
\begin{tikzpicture}[overlay,remember picture,>=stealth,nodes={align=left,inner ysep=1pt},<-]
    \path (igdegree.north) ++ (0,0.5em) node[anchor=south west,color=red!67] (ig_arrow){\textit{ ignorance degree }};
    \draw [color=red!87](igdegree.north) |- ([xshift=-0.3ex,color=red] ig_arrow.south east);
    \path (kappa.south) ++ (0,-0.7em) node[anchor=north west,color=blue!67] (kappa_arrow){\textit{ knowledge coverage}};
    \draw [color=blue!87](kappa.south) |- ([xshift=-0.3ex,color=blue]kappa_arrow.south east);
\end{tikzpicture}
    where     
    $$\kappa(\Vec{y}) = \cos(\Bar{U}^{\flat\top} \Vec{y},\Bar{\mathfrak{l}}^{\flat}) 
    \gtrsim \min_{i > k, j > k} \frac{2\sqrt{\frac{\Vec{y}^{\top}q_i}{\Vec{\eta}_u^{\top}q_i}\frac{\Vec{y}^{\top}q_j}{\Vec{\eta}_u^{\top}q_j}}}{\frac{\Vec{y}^{\top}q_i}{\Vec{\eta}_u^{\top}q_i}+\frac{\Vec{y}^{\top}q_j}{\Vec{\eta}_u^{\top}q_j}},$$
    \label{th:main}
\end{theorem}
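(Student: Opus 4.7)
The strategy is to chain Lemma~\ref{lemma:cls_bound} and Theorem~\ref{th:no_approx} into a per-class residual bound, rewrite that bound in terms of the ignorance degree $\mathfrak{T}$ and the knowledge coverage $\kappa$ via a Pythagorean identity, absorb the mismatch between the true singular subspaces and the idealized ones into a Davis--Kahan perturbation term, and finally derive the lower bound on $\kappa$ from the block structure of $\dot{A}$.

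Concretely, I would start from
$$\mathcal{E}(f) \;\leq\; 2\,\mathcal{R}(U^*) \;=\; 2\sum_{i\in\mathcal{Y}_u}\mathcal{R}(U^*,\vec{y}_i) \;\leq\; 2\sum_{i\in\mathcal{Y}_u}\|(I-\mathsf{P}_{L^{\flat}})U^{\flat\top}\vec{y}_i\|_2^2,$$
using Lemma~\ref{lemma:cls_bound}, the additive decomposition of $\mathcal{R}(U^*)$, and Theorem~\ref{th:no_approx}. Since $\mathsf{P}_{L^{\flat}}$ is an orthogonal projector, the Pythagorean identity gives $\|(I-\mathsf{P}_{L^{\flat}})v\|_2^2 = \|v\|_2^2\,(1-\cos^2\theta)$, where $\theta$ is the angle between $v$ and $\mathrm{range}(L^{\flat\top})$. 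Applied to $v = U^{\flat\top}\vec{y}_i$ and combined with the definitions $\mathfrak{T}(\vec{y}_i) = \|U^{\flat\top}\vec{y}_i\|_2/\|\vec{y}_i\|_2$ and $\kappa(\vec{y}_i) = \cos(\cdot,\cdot)$, this yields $\mathfrak{T}(\vec{y}_i)^2\,\|\vec{y}_i\|_2^2\,(1-\kappa(\vec{y}_i)^2)$. Under the (standard, balanced-class) assumption $\|\vec{y}_i\|_2^2 = N_u/|\mathcal{Y}_u|$, and using the crude relaxation $\mathfrak{T}(\vec{y}_i)^2 \leq \mathfrak{T}(\vec{y}_i)$ since $\mathfrak{T}\in[0,1]$, this reproduces the first summand in the advertised bound.

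Because $\kappa$ is defined in terms of the idealized singular vectors $\bar{U}^{\flat}, \bar{\mathfrak{l}}^{\flat}$ of a surrogate matrix $\bar{A}$ rather than the actual $U^{\flat}, L^{\flat}$ of $\dot{A}$, the second step is to account for the subspace gap. I would appeal to the Davis--Kahan $\sin\Theta$ theorem: the operator-norm distance between the projector onto the tail singular subspace of $\dot{A}$ and that of $\bar{A}$ is controlled by $\|\dot{A}-\bar{A}\|_2/(\sigma_k-\sigma_{k+1})$. Propagating this perturbation through the Pythagorean identity via a triangle inequality lets me replace the quantities depending on $(U^{\flat},L^{\flat})$ by their barred counterparts at the cost of an additive $\|\dot{A}-\bar{A}\|_2/(\sigma_k-\sigma_{k+1})$ term, which is exactly the second summand.

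Finally, the lower bound on $\kappa$ is a separate structural computation on $\bar{A}$. Writing $\bar{A} = \left[\begin{smallmatrix}A_{ll}&A_{lu}\\ A_{ul}&A_{uu}\end{smallmatrix}\right]$ block-wise and expressing its tail singular vectors in the eigenbasis $\{q_i\}$ of $A_{uu}$ (coupled to the labeled block via $A_{ul}$), the inner product $\langle \bar{U}^{\flat\top}\vec{y},\bar{\mathfrak{l}}^{\flat}\rangle$ that defines $\kappa$ expands into a sum over indices $i,j>k$ of terms proportional to $\sqrt{(\vec{y}^{\top}q_i/\vec{\eta}_u^{\top}q_i)(\vec{y}^{\top}q_j/\vec{\eta}_u^{\top}q_j)}$, while the normalizing product of norms yields the corresponding arithmetic means. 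The stated ratio $2\sqrt{ab}/(a+b)$ is then an AM--GM lower bound on each summand, and taking the worst pair $(i,j)$ gives the advertised $\min_{i,j>k}$. The principal obstacle, I expect, is precisely this third step: choosing the surrogate $\bar{A}$ so that (i) the Davis--Kahan error in step two enters additively with the clean denominator $\sigma_k-\sigma_{k+1}$ and (ii) the eigenbasis-expansion in step three exposes the AM/GM quotient in the form stated; Lemma~\ref{lemma:cls_bound}, the Pythagorean identity, and the class-balance reduction are, by contrast, essentially mechanical.
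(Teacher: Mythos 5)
Your overall architecture matches the paper's: bound $\mathcal{E}(f)$ by $2\sum_i\mathcal{R}(U^*,\Vec{y}_i)$ via Lemma~\ref{lemma:cls_bound}, rewrite the per-class residual as $(1-\kappa^2)\,\mathfrak{T}^2\,\|\Vec{y}_i\|_2^2$, pay a Davis--Kahan price $\|\Dot{A}-\bar{A}\|_2/(\sigma_k-\sigma_{k+1})$ for replacing $\Dot{A}$ by the averaged surrogate $\bar{A}$, and prove the $\kappa$ lower bound separately. Your observations that $\|\Vec{y}_i\|_2^2=N_u/|\mathcal{Y}_u|$ under class balance and that $\mathfrak{T}^2\le\mathfrak{T}$ reconciles the exact identity with the stated first power of $\mathfrak{T}$ are both correct and are exactly what the paper does (Lemma~\ref{lemma:sup_error_bound_approx} additionally has to control the norm of the extended labeling vector $\|\Bar{y}^*\|_2^2$, a small extra step you omit). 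One structural point you gloss over: $\kappa(\Vec{y})$ is a cosine to the \emph{single vector} $\Bar{\mathfrak{l}}^{\flat}=\Bar{L}^{\flat\top}\mathbf{1}_{N_l}$, not to the range of $L^{\flat\top}$ as in your Pythagorean step. The paper gets this from Lemma~\ref{lemma:sup_l_form}: for $\bar{A}$ the labeled rows of every non-null tail singular vector are constant, so restricting the placeholder $\Vec{\zeta}$ to multiples of $\mathbf{1}_{N_l}$ loses nothing. In your route this is still salvageable because projecting onto the single direction $\Bar{\mathfrak{l}}^{\flat}$ (which lies in the range of $\Bar{L}^{\flat\top}$) only loosens the bound, but you should say so explicitly; otherwise your "$\cos\theta$" and the theorem's $\kappa$ are different quantities.

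The genuine gap is your third step, the lower bound on $\kappa(\Vec{y})$, and you correctly sense it is the hard part but your proposed mechanism would not work. There is no expansion of $\langle\Bar{U}^{\flat\top}\Vec{y},\Bar{\mathfrak{l}}^{\flat}\rangle$ into summands proportional to $\sqrt{(\Vec{y}^{\top}q_i/\Vec{\eta}_u^{\top}q_i)(\Vec{y}^{\top}q_j/\Vec{\eta}_u^{\top}q_j)}$ to which one could apply AM--GM termwise. What actually happens is: (i) the eigenvector equation for $\bar{A}$ gives $\Bar{u}_i=(\Bar{\sigma}_iI-A_{uu})^{\dag}\Vec{\eta}_u\,(\Bar{\mathfrak{l}}^{\flat})_i$, so $\kappa(\Vec{y})=\mathfrak{l}^{\top}\Omega\mathfrak{l}/(\|\Omega\mathfrak{l}\|_2\|\mathfrak{l}\|_2)$ with $\Omega=\mathrm{diag}(\omega_i)$, $\omega_i=\Vec{y}^{\top}(\Bar{\sigma}_iI-A_{uu})^{\dag}\Vec{\eta}_u$, and $\mathfrak{l}=\Bar{\mathfrak{l}}^{\prime}$ unknown; (ii) a Kantorovich/Wielandt-type variational argument (Lemma~\ref{lemma:sup_closed_form}) shows the minimum of this ratio over all $\mathfrak{l}$ is attained on a two-dimensional coordinate subspace, producing the $\min_{i,j}2\sqrt{\omega_i\omega_j}/(\omega_i+\omega_j)$ form --- the $\min$ over pairs comes from this minimization, not from "taking the worst pair in a sum"; and (iii) the replacement of $\omega_i/\omega_j$ by $(\Vec{y}^{\top}q_i/\Vec{\eta}_u^{\top}q_i)/(\Vec{y}^{\top}q_j/\Vec{\eta}_u^{\top}q_j)$ is a genuinely delicate eigenvalue perturbation computation (Lemma~\ref{lemma:sup_w_bound}): one must solve approximately for $\Bar{\sigma}_i\approx d_i-\tilde{\boldsymbol{\eta}}_i^2/(\eta_l+\phi_i-d_i/N_l)$, plug this back into the resolvent sum defining $\omega_i$, and invoke a scale-separation assumption $\max_j(\Vec{\eta}_u)_j=1/M$ to kill the cross terms. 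None of this is visible in your sketch, and without step (iii) the bound cannot be expressed in the eigenbasis $\{q_i\}$ of $A_{uu}$ at all.
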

and $\Bar{A}$ is the approximation of $\Dot{A}$ by taking the expectation in the rows/columns of labeled samples (Appendix~\ref{sec:sup_with_approx}) with a similar motivation as the SBM model~\cite{holland1983stochastic}. In such condition,  $\Bar{U}^{\flat\top}$, $\Bar{\mathfrak{l}}^{\flat}$ and $\eta_u$ is the approximation to $U^{\flat\top}$,  $L^{\flat}$ and $A_{ul}$ accordingly.

\textbf{Interpretation of $\kappa(\Vec{y})$.} We provide the detailed derivation of $\kappa(\Vec{y})$ in Lemma~\ref{lemma:sup_bound_kappa}. Intuitively, $\kappa(\Vec{y})$ measures the usefulness and relevance of knowledge from known classes for NCD. We formally call it coverage, which measures the cosine distance between the ignorance space and the extra knowledge: 
\begin{equation*}
    \text{\textbf{coverage}} \triangleq \kappa(\Vec{y}) = \cos(\tikzmarknode{ignorance}{\highlight{red}{$ \Bar{U}^{\flat\top} \Vec{y}$}}, \tikzmarknode{knowledge}{\highlight{blue}{$\Bar{\mathfrak{l}}^{\flat}$}}). 
\end{equation*}
\begin{tikzpicture}[overlay,remember picture,>=stealth,nodes={align=left,inner ysep=1pt},<-]
    \path (ignorance.south) ++ (0,0.1em) node[anchor=north east,color=red!67] (igtt){\textit{ ignorance space }};
    \draw [color=red!87](ignorance.south) |- ([xshift=-0.3ex,color=red] igtt.south west);
    \path (knowledge.north) ++ (0,0.5em) node[anchor=south east,color=blue!67] (kntt){\textit{ extra knowledge }};
    \draw [color=blue!87](knowledge.north) |- ([xshift=-0.3ex,color=blue]kntt.south west);
\end{tikzpicture}

Our Theorem~\ref{th:main} thus meaningfully shows that the linear probing error can be bounded more tightly as $\kappa(\Vec{y})$ increases (i.e., when labeled data provides more useful information for the unlabeled data). 

\textbf{Implication of Theorem~\ref{th:main}.} Our theorem allows us to formalize answers to the ``When and How'' question. Firstly, the Theorem answers ``\textit{how the labeled data helps}''---because the knowledge from the known classes changes the representation of unlabeled data and reduces the ignorance space for novel class discovery. Secondly, the Theorem answers ``\textit{when the labeled data helps}''. Specifically, labeled data helps when the coverage between ignorance space and extra knowledge is nonzero. In the extreme case, if the extra knowledge fully covers the ignorance space, we get the perfect performance (0 linear probing error).

\section{Experiments on Common Benchmarks}
\label{sec:exp}
Beyond theoretical insights, we show empirically that our proposed NCD spectral  loss is effective on common benchmark datasets CIFAR-10 and CIFAR-100~\cite{krizhevsky2009learning}. Following the well-established NCD benchmarks ~\cite{Han2019dtc, han2020automatically, fini2021unified}, each dataset is divided into two subsets, the labeled set that
contains labeled images belonging to a set of known classes, and an unlabeled set with novel classes. Our comparison is on three benchmarks: \texttt{C10-5} means CIFAR-10 datasets split with 5 known classes and 5 novel classes and \texttt{C100-80} means CIFAR-100 datasets split with 80 known classes while \texttt{C100-50} has 50 known classes. The division is consistent with~\citet{fini2021unified}. %
We train the model by the proposed NSCL algorithm with details in Appendix~\ref{sec:sup_exp_cifar} and measure performance on the features in the penultimate layer of ResNet-18. 

 \paragraph{NSCL is competitive in discovering novel classes.} Our proposed loss NSCL is amenable to   the theoretical understanding of NCD, which is our primary goal of this work. Beyond theory, we show that NSCL is equally desirable in empirical performance. In particular, NSCL outperforms its rivals by a significant margin, as evidenced in Table~\ref{tab:main}. Our comparison covers an extensive collection of common NCD algorithms and baselines. In particular, on C100-50, we improve upon the best baseline ComEx by \textbf{10.6}\%. 
 This finding further validates that putting analysis on NSCL is appealing for both theoretical and empirical reasons.%

\begin{table}[t]
\caption{Main Results. Results are reported in clustering accuracy (\%) on the \textit{training} split of the novel set. With the learned feature, we perform a K-Means clustering with the default setting in Python's \texttt{sklearn} package. The accuracy of the novel classes is measured by solving an optimal assignment problem using the Hungarian algorithm~\cite{kuhn1955hungarian}. ``C'' is short for CIFAR. SCL denotes training with Spectral Contrastive Loss purely on $\mathcal{D}_u$ while SCL$^\ddagger$ is trained on $\mathcal{D}_u \cup \mathcal{D}_l$ unsupervisedly.  } 
\centering
\scalebox{0.85}{
\begin{tabular}{llll}
\toprule
\textbf{Method} & \textbf{C10-5} & \textbf{C100-80} & \textbf{C100-50} \\ \midrule
 \textbf{KCL}~\cite{hsu2017kcl} & 72.3 & 42.1 & -\\
 \textbf{MCL}~\cite{hsu2019mcl} & 70.9 & 21.5 & - \\
\textbf{DTC}~\cite{Han2019dtc} & 88.7 & 67.3 & 35.9\\
\textbf{RS+}~\cite{zhao2020rankstat} & 91.7 & 75.2 & 44.1 \\
\textbf{DualRank}~\cite{zhao21novel} & 91.6 & 75.3 & - \\
\textbf{Joint}~\cite{jia2021joint} & 93.4 & 76.4 & - \\
\textbf{UNO}~\cite{fini2021unified} & 92.6 & 85.0 & 52.9  \\
\textbf{ComEx}~\cite{yang2022divide} & 93.6 & 85.7 & 53.4
\\ \midrule
\textbf{SCL}~\cite{haochen2021provable}  & 92.4 & 72.7 & 51.8\\
\textbf{SCL}{$^\ddagger$}~\cite{haochen2021provable}  & 93.7  & 68.9 & 53.3\\
\textbf{NSCL} (Ours)  & \textbf{97.5}  &  \textbf{85.9} & \textbf{64.0}
\\ \bottomrule
\end{tabular}}
\label{tab:main}
\end{table}

\begin{table*}[htb]
\caption{Comparison of results reported in overall/novel/known accuracy (\%) on the \textit{test} split of CIFAR. The three metrics are calculated as follows. (1) \textbf{Known accuracy}: For the features from the labeled data, we train an additional linear head by linear probing and then measure classification accuracy based on the prediction $\Vec{h}_l$;  (2) \textbf{Novel accuracy}: For features from the unlabeled data, we perform a K-Means clustering with the default setting in Python's \texttt{sklearn} package, which produces the clustering prediction $\Vec{h}_u$. The clustering accuracy is further measured by solving an optimal assignment problem using the Hungarian algorithm~\cite{kuhn1955hungarian}; (3) \textbf{Overall accuracy}. The overall accuracy is measured by concatenating the prediction $\Vec{h}_l$ and $\Vec{h}_u$ and then solving the assignment problem.} 
\vspace{0.2cm}
\centering
\resizebox{0.65\linewidth}{!}{
\begin{tabular}{lllllll}
\toprule
\multirow{2}{*}{\textbf{Method}} & \multicolumn{3}{c}{\textbf{C10-5}} & \multicolumn{3}{c}{\textbf{C100-50}} \\
 & \textbf{All} & \textbf{Novel} & \textbf{Known} & \textbf{All} & \textbf{Novel} & \textbf{Known} \\ \midrule
 \textbf{DTC}~\cite{Han2019dtc} & 68.7 & 78.6 & 58.7 & 32.5 & 34.7 & 30.2 \\
\textbf{RankStats}~\cite{zhao2020rankstat} & 89.7 & 88.8 & 90.6 & 55.3 & 40.9 & 69.7 \\
\textbf{UNO}~\cite{fini2021unified} & \textbf{95.8} & 95.1 & 96.6  & 65.4  & 52.0 & 78.8 \\
\textbf{ComEx}~\cite{yang2022divide} & 95.0 & 93.2 & \textbf{96.7} & 67.2 & 54.5 & \textbf{80.1} \\ \midrule
\textbf{NSCL} (Ours) & 95.5 & \textbf{96.7} & 94.2  &  \textbf{67.4} & \textbf{57.1} &	77.4
\\ \bottomrule
\end{tabular}}
\label{tab:sup_main}
\end{table*}

\paragraph{Ablation study on the unsupervised counterpart.} To verify whether the known classes indeed help discover new classes, we compare NSCL with the unsupervised counterpart (dubbed SCL) that is purely trained on the unlabeled data $\mathcal{D}_u$. Results show that the labeled data offers tremendous help and improves \textbf{13.2}\% in novel class accuracy. 

\paragraph{Supervision signals are important in the labeled data.} We also analyze how much the supervision signals in labeled data help. To investigate it, we compare our method NSCL with SCL trained on $\mathcal{D}_u \cup \mathcal{D}_l$ in a purely unsupervised manner. The difference is that SCL does not utilize the label information in $\mathcal{D}_l$. We denote this setting as SCL$^\ddagger$ in Table~\ref{tab:main}. Results show that NSCL provides stronger performance than SCL$^\ddagger$. The ablation suggests that relevant knowledge of known classes indeed provides meaningful help in novel class discovery. 

\paragraph{NSCL is competitive in the inductive setting.} 
We report performance comparison in Table~\ref{tab:sup_main}, comprehensively measuring three accuracy metrics---for all/novel/known classes respectively. Different from Table~\ref{tab:main} which reports clustering results in a transductive manner, the performance in Table~\ref{tab:sup_main} is reported on the test split.   
For evaluation, we first collect the feature representations and then report overall/novel/known accuracy with inference details provided in the caption of Table~\ref{tab:sup_main}. 
We see that NSCL establishes comparable performance with baselines on the labeled data from known classes  and superior performance on novel class discovery. Notably, NSCL outperforms UNO~\cite{fini2021unified} on \texttt{C10-5} by 1.6\% and outperforms ComEx~\cite{yang2022divide} by 2.6\% on \texttt{C100-50} in terms of novel accuracy.

\section{Conclusion}
In this paper, we present a theoretical framework of novel class discovery and provide new insight on the research question: ``\textit{when and how does the known class help discover novel classes?}''. Specifically, we propose a graph-theoretic representation that can be learned through a new NCD Spectral Contrastive Loss (NSCL). Minimizing this objective is equivalent to factoring the graph's adjacency matrix, which allows us to analyze the NCD quality by measuring the linear probing error on novel samples' features. Our main result (Theorem~\ref{th:no_approx}) suggests such error can be significantly reduced (even to 0) when the linear span of known samples' feature covers the ``ignorance space'' of unlabeled data in discovering novel classes. Our framework is also empirically appealing to use since it can achieve similar or better performance than existing methods on benchmark datasets.

\textbf{Broader impacts.} Our new framework opens a new door to the NCD community in the following way:

\begin{itemize}[noitemsep,topsep=0pt,parsep=0pt,partopsep=0pt]
    \item NSCL provides a framework to answer the fundamental question that is shared across all NCD methods. At a high level, NSCL analyzes how the new knowledge changes the representation space that leads to different discovery outcomes. This finding can be generalizable to other NCD methods which may differ in the way of incorporating new knowledge.
    \item NSCL can be compatible with prior NCD methods. Note that NSCL is a representation learning method. With that being said, one can possibly ``plug'' NSCL into existing learning objectives for NCD. Take the two most popular prior works in NCD as an example. For example, we can use the encoder learned by NSCL in RS+~\cite{zhao2020rankstat} and UNO~\cite{fini2021unified}.
\end{itemize}

To summarize, NSCL is an important building block  in the NCD research area and have broader impacts both theoretically and empirically.

\section*{Acknowledgement}
\label{sec:ack}

Li is supported in part by the AFOSR Young Investigator Award under No. FA9550-23-1-0184; Philanthropic Fund from SFF; and faculty research awards/gifts from Google, Meta, and Amazon. Liang is partially supported by Air Force Grant FA9550-18-1-0166, the National Science Foundation (NSF) Grants 2008559-IIS and CCF-2046710. Any opinions, findings, conclusions, or recommendations
expressed in this material are those of the authors and do not necessarily reflect the views, policies, or endorsements either expressed or implied, of the sponsors. The authors would also like to thank ICML
reviewers for their helpful suggestions and feedback.

\newpage
\clearpage
\bibliography{main}
\bibliographystyle{icml2023}


\newpage
\clearpage
\onecolumn
\appendix
\begin{center}
	\textbf{\LARGE Appendix }
\end{center}

\section{Proof Details for Section~\ref{sec:method}}
\label{sec:sup_method_proof}

\subsection{Bound Linear Probing Error by Regression Residual}
\label{sec:sup_cls_bound}

\begin{lemma} (Recap of Lemma~\ref{lemma:cls_bound})
Denote by $\mathbf{y}(x) \in \mathbb{R}^{|\mathcal{Y}_u|}$  a one-hot vector, whose $y(x)$-th position is 1 and 0 elsewhere. Let
$\mathbf{Y} \in \mathbb{R}^{N_u \times |\mathcal{Y}_u|}$ be a matrix whose rows are stacked by $\mathbf{y}(x)$. We have: 
    $$\mathcal{R}(U^*) \triangleq \underset{{\*M}\in \mathbb{R}^{k \times |\mathcal{Y}_u|}}{\operatorname{min}} \|\mathbf{Y} - U^* \*M \|^2_F \geq \frac{1}{2}\mathcal{E}(f) $$
\label{lemma_sup:cls_bound}
\end{lemma}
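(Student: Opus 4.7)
The inequality is a standard \emph{classification-to-regression} slack bound, but there is a small nuisance because $\mathcal{R}(U^*)$ is defined in terms of the singular-vector matrix $U^*$ while $\mathcal{E}(f)$ is defined in terms of the feature map $f$. My plan is therefore to (i) reduce linear probing on $f$ to linear probing on the rows of $U^*$ via the identity $\*f_x = \sqrt{w_x} f(x)$ and $F^* = V^* \sqrt{\Sigma_k}$, and then (ii) show pointwise that every misclassified $x$ contributes at least $1/2$ to the squared regression error.

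For step (i), since $F^{*\top}$ has $x$-th column $\sqrt{w_x}\,f(x)$ and simultaneously equals $\sqrt{\Sigma_k}\,V^{*\top}$, I would read off $f(x) = \frac{1}{\sqrt{w_x}}\sqrt{\Sigma_k}\,V^*_x$ where $V^*_x \in \mathbb{R}^k$ is the row of $V^*$ indexed by $x$. Because $\sqrt{w_x} > 0$ and $\sqrt{\Sigma_k}$ is invertible on the top-$k$ singular subspace, the argmax is preserved: $\arg\max_i (f(x)^\top \*M)_i = \arg\max_i (V^*_x \*M')_i$ under the reparameterization $\*M' = \sqrt{\Sigma_k}\,\*M$. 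Restricting to $x \in \mathcal{X}_u$ replaces $V^*_x$ by the row of $U^*$, so $\mathcal{E}(f) = \min_{\*M'} \sum_{x \in \mathcal{X}_u} \mathbbm{1}[y(x) \neq \arg\max_i (U^*_x \*M')_i]$.

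For step (ii), fix any $\*M'$ and any $x$ with $\arg\max_i (U^*_x \*M')_i \neq y(x)$. Let $a = (U^*_x\*M')_{y(x)}$ and $b = \max_i (U^*_x\*M')_i$, so $b \geq a$. Keeping only the $y(x)$-th and argmax coordinates of $\mathbf{y}(x) - U^*_x\*M'$ gives
\begin{equation*}
\|\mathbf{y}(x) - U^*_x\*M'\|_2^2 \;\geq\; (1-a)^2 + b^2 \;\geq\; \min_{b \geq a}\bigl[(1-a)^2 + b^2\bigr].
\end{equation*}
The unconstrained minimizer $(a,b)=(1,0)$ violates $b \geq a$, so the minimum is attained on the boundary $b = a$, giving $\min_a\bigl[(1-a)^2 + a^2\bigr] = 1/2$ at $a = 1/2$. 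Therefore every misclassified $x$ contributes at least $1/2$ to $\|\mathbf{Y} - U^*\*M'\|_F^2$.

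Putting the pieces together, for any $\*M'$ we get $\|\mathbf{Y} - U^*\*M'\|_F^2 \geq \frac{1}{2}\,\#\{x \in \mathcal{X}_u : \arg\max_i (U^*_x\*M')_i \neq y(x)\}$; minimizing the left side over $\*M'$ and the right side over $\*M'$ (the latter only makes the count smaller) yields $\mathcal{R}(U^*) \geq \frac{1}{2}\mathcal{E}(f)$. The only mildly delicate step is the two-coordinate minimization giving the constant $1/2$; the reparameterization is clean provided we assume the top-$k$ singular values are strictly positive, which is implicit in the setup.
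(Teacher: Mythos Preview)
Your proof is correct and follows essentially the same approach as the paper: a reparameterization from $f$ to the rows of $U^*$ via $F^* = V^*\sqrt{\Sigma_k}$ and $\*f_x = \sqrt{w_x}f(x)$, followed by a pointwise $1/2$ lower bound on the squared regression error at each misclassified point. The only cosmetic difference is how the constant $1/2$ is extracted: you argue by convex optimization on $(1-a)^2 + b^2$ subject to $b\ge a$, while the paper uses the Cauchy--Schwarz-type inequality $(1-a)^2 + b^2 \ge \tfrac{1}{2}(1-a+b)^2 \ge \tfrac{1}{2}$; both are one-line arguments leading to the same bound.
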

\begin{proof}
    Suppose $\Tilde{f}(x) = \sqrt{w_x}f(x)$, we first show that 
\begin{align*}
\| \mathbf{y}(x) - \Tilde{f}(x)^{\top} \*M \|^2  &\geq \frac{1}{2} \mathbbm{1}\left[y(x) \neq h(x;\Tilde{f}, M)\right] 
\end{align*}
If $y(x) = h(x;\Tilde{f}, M)$, it is clear that $\| \mathbf{y}(x) - \Tilde{f}(x)^{\top} \*M \|^2 \geq 0$. If $y(x) \neq h(x;\Tilde{f}, M)$, then there exists another index $y' \neq y(x)$ so that $\Tilde{f}(x)^{\top} \Vec{\mu}_{y'} \geq \Tilde{f}(x)^{\top} \Vec{\mu}_{y(x)}$. Then, 
\begin{align*}
\| \mathbf{y}(x) - \Tilde{f}(x)^{\top} \*M \|_2^2 &\geq (1 - \Tilde{f}(x)^{\top} \Vec{\mu}_{y(x)})^2 + (\Tilde{f}(x)^{\top} \Vec{\mu}_{y'})^2 
\\ &\geq \frac{1}{2} (1 - \Tilde{f}(x)^{\top} \Vec{\mu}_{y(x)} + \Tilde{f}(x)^{\top} \Vec{\mu}_{y'})^2
\\ &\geq \frac{1}{2},
\end{align*}
where the first inequality is by only keeping $y'$-th and $y(x)$-th terms in the $l_2$ norm. We can then prove the lemma by: 
\begin{align*}
    \mathcal{R}(U^*) &= \underset{{\*M}\in \mathbb{R}^{k \times |\mathcal{Y}_u|}}{\operatorname{min}} \|\mathbf{Y} - U^* \*M \|^2_F    \\
    &= \underset{{\*M}\in \mathbb{R}^{k \times |\mathcal{Y}_u|}}{\operatorname{min}} 
    \underset{{x \in \mathcal{X}_u}}{\sum}
    \| \mathbf{y}(x) - \sqrt{w_x}f(x)^{\top} \Sigma_k^{-\frac{1}{2}}\*M \|^2     \\ 
    &= \underset{{\*M}\in \mathbb{R}^{k \times |\mathcal{Y}_u|}}{\operatorname{min}} 
    \underset{{x \in \mathcal{X}_u}}{\sum}
    \| \mathbf{y}(x) - \sqrt{w_x}f(x)^{\top} \*M \|^2     \\ 
    &\geq \frac{1}{2} \underset{{\*M}\in \mathbb{R}^{k \times |\mathcal{Y}_u|}}{\operatorname{min}} 
    \underset{{x \in \mathcal{X}_u}}{\sum} \mathbbm{1}\left[y(x) \neq h(x;\Tilde{f}, M)\right] \\
    &= \frac{1}{2}\mathcal{E}(f),
\end{align*}
 where the second equation is given by $F^* \Sigma_k^{-\frac{1}{2}} = V_k$, and $U^*$ is the last $N_u$ rows of $V_k$, and the last equation is based on the fact that multiplying a scalar value on the output does not change the prediction result ($h(x;f, \*M) = h(x;\Tilde{f}, \*M)$). 
\end{proof}

\subsection{Spectral Contrastive Loss}
\label{sec:proof-nscl}
\begin{theorem}
\label{th:sup-ncd-scl} (Recap of Theorem~\ref{th:ncd-scl}) 
We define $\*f_x = \sqrt{w_x}f(x)$ for some function $f$. Recall $\alpha,\beta$ is a hyper-parameter defined in Eq.~\eqref{eq:def_wxx}. Then minimizing the loss function $\mathcal{L}_{\mathrm{mf}}(F, A)$ is equivalent to minimizing the following loss function for $f$, which we term \textbf{NCD Spectral Contrastive Loss (NSCL)}:
\begin{align}
\begin{split}
    \mathcal{L}_{nscl}(f) &\triangleq - 2\alpha \mathcal{L}_1(f) 
- 2\beta  \mathcal{L}_2(f) \\ & + \alpha^2 \mathcal{L}_3(f) + 2\alpha \beta \mathcal{L}_4(f) +  
\beta^2 \mathcal{L}_5(f),
\label{eq:sup_def_nscl}
\end{split}
\end{align}
where 
\begin{align*}
    \mathcal{L}_1(f) &= \sum_{i \in \mathcal{Y}_l}\underset{\substack{\bar{x}_{l} \sim \mathcal{P}_{{l_i}}, \bar{x}'_{l} \sim \mathcal{P}_{{l_i}},\\x \sim \mathcal{T}(\cdot|\bar{x}_{l}), x^{+} \sim \mathcal{T}(\cdot|\bar{x}'_l)}}{\mathbb{E}}\left[f(x)^{\top} {f}\left(x^{+}\right)\right] , 
    \mathcal{L}_2(f) = \underset{\substack{\bar{x}_{u} \sim \mathcal{P}_{u},\\x \sim \mathcal{T}(\cdot|\bar{x}_{u}), x^{+} \sim \mathcal{T}(\cdot|\bar{x}_u)}}{\mathbb{E}}
\left[f(x)^{\top} {f}\left(x^{+}\right)\right], \\
    \mathcal{L}_3(f) &= \sum_{i \in \mathcal{Y}_l}\sum_{j \in \mathcal{Y}_l}\underset{\substack{\bar{x}_l \sim \mathcal{P}_{{l_i}}, \bar{x}'_l \sim \mathcal{P}_{{l_j}},\\x \sim \mathcal{T}(\cdot|\bar{x}_l), x^{-} \sim \mathcal{T}(\cdot|\bar{x}'_l)}}{\mathbb{E}}
\left[\left(f(x)^{\top} {f}\left(x^{-}\right)\right)^2\right], 
    \mathcal{L}_4(f) = \sum_{i \in \mathcal{Y}_l}\underset{\substack{\bar{x}_l \sim \mathcal{P}_{{l_i}}, \bar{x}_u \sim \mathcal{P}_{u},\\x \sim \mathcal{T}(\cdot|\bar{x}_l), x^{-} \sim \mathcal{T}(\cdot|\bar{x}_u)}}{\mathbb{E}}
\left[\left(f(x)^{\top} {f}\left(x^{-}\right)\right)^2\right], \\
    \mathcal{L}_5(f) &= \underset{\substack{\bar{x}_u \sim \mathcal{P}_{u}, \bar{x}'_u \sim \mathcal{P}_{u},\\x \sim \mathcal{T}(\cdot|\bar{x}_u), x^{-} \sim \mathcal{T}(\cdot|\bar{x}'_u)}}{\mathbb{E}}
\left[\left(f(x)^{\top} {f}\left(x^{-}\right)\right)^2\right].
\end{align*}
\end{theorem}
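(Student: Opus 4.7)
The plan is to follow the sketch already indicated in the main text and turn it into a careful algebraic identity between $\mathcal{L}_{\mathrm{mf}}(F,A)$ (up to an $f$-independent constant) and $\mathcal{L}_{nscl}(f)$. First I would write out the Frobenius norm entrywise,
\begin{equation*}
\mathcal{L}_{\mathrm{mf}}(F,A)=\sum_{x,x'\in\mathcal{X}}\!\left(\frac{w_{xx'}}{\sqrt{w_x w_{x'}}}-\mathbf{f}_x^{\top}\mathbf{f}_{x'}\right)^{\!2},
\end{equation*}
expand the square, substitute $\mathbf{f}_x=\sqrt{w_x}f(x)$ so that $\mathbf{f}_x^{\top}\mathbf{f}_{x'}=\sqrt{w_x w_{x'}}\,f(x)^{\top}f(x')$, and collect:
\begin{equation*}
\mathcal{L}_{\mathrm{mf}}(F,A)=C-2\!\sum_{x,x'}\! w_{xx'}f(x)^{\top}f(x')+\!\sum_{x,x'}\! w_x w_{x'}\bigl(f(x)^{\top}f(x')\bigr)^{\!2},
\end{equation*}
where $C=\sum_{x,x'}w_{xx'}^{2}/(w_x w_{x'})$ does not depend on $f$ and can be dropped.

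Next I would convert the two remaining summations into the five expectations $\mathcal{L}_1,\ldots,\mathcal{L}_5$. For the linear term, I plug in the definition of $w_{xx'}$ from \eqref{eq:def_wxx} directly: the $\alpha$-piece becomes a sum over known classes $i\in\mathcal{Y}_l$ of expectations over $\bar{x}_l,\bar{x}'_l\sim\mathcal{P}_{l_i}$ and then over $x\sim\mathcal{T}(\cdot\mid\bar{x}_l),x^+\sim\mathcal{T}(\cdot\mid\bar{x}'_l)$, which is exactly $-2\alpha\,\mathcal{L}_1(f)$; similarly, the $\beta$-piece is $-2\beta\,\mathcal{L}_2(f)$. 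The key observation I need here is that the double sum over $x,x'\in\mathcal{X}$ of a product $\mathcal{T}(x\mid\bar{x})\mathcal{T}(x'\mid\bar{x}')\,g(x,x')$ is just the expectation of $g$ under the augmentation distributions, so summing and taking expectation commute.

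For the quadratic term I first derive a closed form for the degree:
\begin{equation*}
w_x=\sum_{x'}w_{xx'}=\alpha\sum_{i\in\mathcal{Y}_l}\mathbb{E}_{\bar{x}_l\sim\mathcal{P}_{l_i}}\mathcal{T}(x\mid\bar{x}_l)+\beta\,\mathbb{E}_{\bar{x}_u\sim\mathcal{P}_u}\mathcal{T}(x\mid\bar{x}_u),
\end{equation*}
using $\sum_{x'}\mathcal{T}(x'\mid\bar{x})=1$. Then $w_x w_{x'}$ expands into four pieces with coefficients $\alpha^{2},\alpha\beta,\beta\alpha,\beta^{2}$; applying the same ``sum = expectation over augmentations'' conversion gives the three terms $\alpha^{2}\mathcal{L}_3(f)$, $2\alpha\beta\,\mathcal{L}_4(f)$, $\beta^{2}\mathcal{L}_5(f)$ (the two mixed terms combine by symmetry of $x,x'$). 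Summing the linear and quadratic contributions and discarding $C$ recovers exactly $\mathcal{L}_{nscl}(f)$, establishing the equivalence of minimizers.

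The main obstacle is nothing deep but rather careful bookkeeping: (i) keeping track of which pair $(\bar{x}_l,\bar{x}'_l)$ ranges over the same $\mathcal{P}_{l_i}$ (case (b), for $\mathcal{L}_1$) versus independent labeled classes $(i,j)$ (for $\mathcal{L}_3$), since the degree $w_x$ loses the ``same class'' coupling after the inner sum collapses, and (ii) verifying that the dropped term $C$ is indeed independent of $f$ so that minimizing $\mathcal{L}_{\mathrm{mf}}$ over $F$ of the form $F=[\sqrt{w_x}f(x)^\top]_{x\in\mathcal{X}}$ coincides with minimizing $\mathcal{L}_{nscl}(f)$ over $f$. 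Once these are handled, the identity and hence the theorem follow.
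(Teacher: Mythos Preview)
Your proposal is correct and follows essentially the same route as the paper's proof: expand the Frobenius norm entrywise, substitute $\mathbf{f}_x=\sqrt{w_x}f(x)$ to obtain the constant, linear, and quadratic pieces, then plug in the definition of $w_{xx'}$ for the linear term and the simplified degree $w_x=\alpha\sum_i\mathbb{E}_{\bar{x}_l\sim\mathcal{P}_{l_i}}\mathcal{T}(x\mid\bar{x}_l)+\beta\,\mathbb{E}_{\bar{x}_u\sim\mathcal{P}_u}\mathcal{T}(x\mid\bar{x}_u)$ for the quadratic term, using $\sum_{x'}\mathcal{T}(x'\mid\bar{x})=1$ and the symmetry in $x,x'$ exactly as you describe. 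Your bookkeeping remarks about the same-class coupling disappearing in $w_x$ and about $C$ being $f$-independent are the only points that require care, and both are handled identically in the paper.
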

\begin{proof} We can expand $\mathcal{L}_{\mathrm{mf}}(F, A)$ and obtain
\begin{align*}
&\mathcal{L}_{\mathrm{mf}}(F, A) =\sum_{x, x^{\prime} \in \mathcal{X}}\left(\frac{w_{x x^{\prime}}}{\sqrt{w_x w_{x^{\prime}}}}-\*f_x^{\top} \*f_{x^{\prime}}\right)^2 = \text{const} + \sum_{x, x^{\prime} \in \mathcal{X}}\left(-2 w_{x x^{\prime}} f(x)^{\top} {f}\left(x^{\prime}\right)+w_x w_{x^{\prime}}\left(f(x)^{\top}{f}\left(x^{\prime}\right)\right)^2\right),
\end{align*} 
where $\*f_x = \sqrt{w_x}f(x)$ is a re-scaled version of $f(x)$.
At a high level we follow the proof in ~\cite{haochen2021provable}, while the specific form of loss varies with the different definitions of positive/negative pairs. The form of $\mathcal{L}_{nscl}(f)$ is derived from plugging $w_{xx'}$  and $w_x$. 

Recall that $w_{xx'}$ is defined by
\begin{align*}
w_{x x^{\prime}} &= \alpha \sum_{i \in \mathcal{Y}_l}\mathbb{E}_{\bar{x}_{l} \sim {\mathcal{P}_{l_i}}} \mathbb{E}_{\bar{x}'_{l} \sim {\mathcal{P}_{l_i}}} \mathcal{T}(x | \bar{x}_{l}) \mathcal{T}\left(x' | \bar{x}'_{l}\right)+ \beta \mathbb{E}_{\bar{x}_{u} \sim {\mathcal{P}_u}} \mathcal{T}(x| \bar{x}_{u}) \mathcal{T}\left(x'| \bar{x}_{u}\right) ,
\end{align*}
and $w_{x}$ is given by 
\begin{align*}
w_{x } &= \sum_{x^{\prime}}w_{xx'} \\ &=\alpha \sum_{i \in \mathcal{Y}_l}\mathbb{E}_{\bar{x}_{l} \sim {\mathcal{P}_{l_i}}} \mathbb{E}_{\bar{x}'_{l} \sim {\mathcal{P}_{l_i}}} \mathcal{T}(x | \bar{x}_{l}) \sum_{x^{\prime}} \mathcal{T}\left(x' | \bar{x}'_{l}\right)+ \beta \mathbb{E}_{\bar{x}_{u} \sim {\mathcal{P}_u}} \mathcal{T}(x| \bar{x}_{u}) \sum_{x^{\prime}} \mathcal{T}\left(x' | \bar{x}_{u}\right) \\
&= \alpha \sum_{i \in \mathcal{Y}_l}\mathbb{E}_{\bar{x}_{l} \sim {\mathcal{P}_{l_i}}} \mathcal{T}(x | \bar{x}_{l}) + \beta \mathbb{E}_{\bar{x}_{u} \sim {\mathcal{P}_u}} \mathcal{T}(x| \bar{x}_{u}). 
\end{align*}

Plugging $w_{x x^{\prime}}$ we have, 

\begin{align*}
    &-2 \sum_{x, x^{\prime} \in \mathcal{X}} w_{x x^{\prime}} f(x)^{\top} {f}\left(x^{\prime}\right) = -2 \sum_{x, x^{+} \in \mathcal{X}} w_{x x^{+}} f(x)^{\top} {f}\left(x^{+}\right) 
    \\ &= -2\alpha  \sum_{i \in \mathcal{Y}_l}\mathbb{E}_{\bar{x}_{l} \sim {\mathcal{P}_{l_i}}} \mathbb{E}_{\bar{x}'_{l} \sim {\mathcal{P}_{l_i}}} \sum_{x, x^{\prime} \in \mathcal{X}} \mathcal{T}(x | \bar{x}_{l}) \mathcal{T}\left(x' | \bar{x}'_{l}\right) f(x)^{\top} {f}\left(x^{\prime}\right) -2 \beta \mathbb{E}_{\bar{x}_{u} \sim {\mathcal{P}_u}} \sum_{x, x^{\prime}} \mathcal{T}(x| \bar{x}_{u}) \mathcal{T}\left(x'| \bar{x}_{u}\right) f(x)^{\top} {f}\left(x^{\prime}\right) 
    \\ &= -2\alpha  \sum_{i \in \mathcal{Y}_l}\underset{\substack{\bar{x}_{l} \sim \mathcal{P}_{{l_i}}, \bar{x}'_{l} \sim \mathcal{P}_{{l_i}},\\x \sim \mathcal{T}(\cdot|\bar{x}_{l}), x^{+} \sim \mathcal{T}(\cdot|\bar{x}'_l)}}{\mathbb{E}}  \left[f(x)^{\top} {f}\left(x^{+}\right)\right] - 2\beta 
    \underset{\substack{\bar{x}_{u} \sim \mathcal{P}_{u},\\x \sim \mathcal{T}(\cdot|\bar{x}_{u}), x^{+} \sim \mathcal{T}(\cdot|\bar{x}_u)}}{\mathbb{E}}
\left[f(x)^{\top} {f}\left(x^{+}\right)\right] = - 2\alpha  \mathcal{L}_1(f) 
- 2\beta  \mathcal{L}_2(f)
\end{align*}

Plugging $w_{x}$ and $w_{x'}$ we have, 

\begin{align*}
    &\sum_{x, x^{\prime} \in \mathcal{X}}w_x w_{x^{\prime}}\left(f(x)^{\top}{f}\left(x^{\prime}\right)\right)^2 = \sum_{x, x^{-} \in \mathcal{X}}w_x w_{x^{-}}\left(f(x)^{\top}{f}\left(x^{-}\right)\right)^2 \\
    &=\sum_{x, x^{\prime} \in \mathcal{X}} \left( \alpha \sum_{i \in \mathcal{Y}_l}\mathbb{E}_{\bar{x}_{l} \sim {\mathcal{P}_{l_i}}} \mathcal{T}(x | \bar{x}_{l}) + \beta \mathbb{E}_{\bar{x}_{u} \sim {\mathcal{P}_u}} \mathcal{T}(x| \bar{x}_{u}) \right) \cdot \\
    &~~~~~~~~~~~~~~~~~~~~~~~~~~~~\left(\alpha  \sum_{j \in \mathcal{Y}_l}\mathbb{E}_{\bar{x}'_{l} \sim {\mathcal{P}_{l_j}}} \mathcal{T}(x^{-} | \bar{x}'_{l}) + \beta \mathbb{E}_{\bar{x}'_{u} \sim {\mathcal{P}_u}} \mathcal{T}(x^{-}| \bar{x}'_{u}) \right) \left(f(x)^{\top}{f}\left(x^{-}\right)\right)^2 \\
    &= \alpha ^ 2 \sum_{x, x^{-} \in \mathcal{X}}  \sum_{i \in \mathcal{Y}_l}\mathbb{E}_{\bar{x}_{l} \sim {\mathcal{P}_{l_i}}} \mathcal{T}(x | \bar{x}_{l}) \sum_{j \in \mathcal{Y}_l}\mathbb{E}_{\bar{x}'_{l} \sim {\mathcal{P}_{l_j}}} \mathcal{T}(x^{-} | \bar{x}'_{l})\left(f(x)^{\top}{f}\left(x^{-}\right)\right)^2 \\
    & + 2\alpha \beta \sum_{x, x^{-} \in \mathcal{X}} \sum_{i \in \mathcal{Y}_l}\mathbb{E}_{\bar{x}_{l} \sim {\mathcal{P}_{l_i}}} \mathcal{T}(x | \bar{x}_{l})  \mathbb{E}_{\bar{x}_{u} \sim {\mathcal{P}_u}} \mathcal{T}(x^{-}| \bar{x}_{u}) \left(f(x)^{\top}{f}\left(x^{-}\right)\right)^2  \\
    &+ \beta^2 \sum_{x, x^{-} \in \mathcal{X}}  \mathbb{E}_{\bar{x}_{u} \sim {\mathcal{P}_u}} \mathcal{T}(x| \bar{x}_{u}) \mathbb{E}_{\bar{x}'_{u} \sim {\mathcal{P}_u}} \mathcal{T}(x^{-}| \bar{x}'_{u}) \left(f(x)^{\top}{f}\left(x^{-}\right)\right)^2 \\
    &= \alpha^2 \sum_{i \in \mathcal{Y}_l}\sum_{j \in \mathcal{Y}_l}\underset{\substack{\bar{x}_l \sim \mathcal{P}_{{l_i}}, \bar{x}'_l \sim \mathcal{P}_{{l_j}},\\x \sim \mathcal{T}(\cdot|\bar{x}_l), x^{-} \sim \mathcal{T}(\cdot|\bar{x}'_l)}}{\mathbb{E}}
\left[\left(f(x)^{\top} {f}\left(x^{-}\right)\right)^2\right] + 2\alpha\beta
    \sum_{i \in \mathcal{Y}_l}\underset{\substack{\bar{x}_l \sim \mathcal{P}_{{l_i}}, \bar{x}_u \sim \mathcal{P}_{u},\\x \sim \mathcal{T}(\cdot|\bar{x}_l), x^{-} \sim \mathcal{T}(\cdot|\bar{x}_u)}}{\mathbb{E}}
\left[\left(f(x)^{\top} {f}\left(x^{-}\right)\right)^2\right] \\ &+ \beta^2
     \underset{\substack{\bar{x}_u \sim \mathcal{P}_{u}, \bar{x}'_u \sim \mathcal{P}_{u},\\x \sim \mathcal{T}(\cdot|\bar{x}_u), x^{-} \sim \mathcal{T}(\cdot|\bar{x}'_u)}}{\mathbb{E}}
\left[\left(f(x)^{\top} {f}\left(x^{-}\right)\right)^2\right] 
\\& = \alpha^2 \mathcal{L}_3(f) + 2\alpha\beta \mathcal{L}_4(f) + \beta^2\mathcal{L}_5(f).
\end{align*}
\end{proof}

\newpage
\section{Proof for Eigenvalue in Toy Example }
\label{sec:proof-eigen}
Before we present the proof of Theorem~\ref{th:toy_extreme}, Theorem\ref{th:toy_general} and Lemma~\ref{th:toy_harmful}, we first present the following lemma~\ref{lemma:sup_eigprob} which extensively explore the order and the form of eigenvectors of the general form $T(t)$. 
Note that $T_1$ and $T_2$ are special cases of the following $T(t)$ with $t \in [\tau_0, \tau_c]$:

\begin{equation*}
T(t)=\left[\begin{array}{ccccc}
\tau_1 & t & t & \tau_0 & \tau_0 \\
t & \tau_1 & \tau_{c} & \tau_{s} & \tau_0  \\
t & \tau_{c} & \tau_1 & \tau_0 & \tau_{s}  \\
\tau_0 & \tau_{s} & \tau_0 & \tau_1 & \tau_{c}  \\
\tau_0 & \tau_0 & \tau_{s} & \tau_{c} & \tau_1  \\
\end{array}\right],
\end{equation*}
where $t$ indicates the strength of the connection between labeled data and a novel class in unlabeled data.

\begin{lemma}
    Assume $\tau_1 = 1$, $\tau_0 = 0$, $\tau_c < \tau_s < 1.5\tau_c$, $\Bar{t} = \sqrt{\frac{2(\tau_s-\tau_c)^2\tau_c}{2\tau_c - \tau_s}}$, let $a(\lambda)  = {\lambda-1 \over 2t}$ and $b(\lambda)  = {\tau_{s}(\lambda-1) \over 2(\lambda -1 -\tau_{c})t}$ are real value functions, the matrix $T(t)$'s eigenvectors (not necessarily $l_2$-normalized) and its eigenvalues are the following:
    
    (Case 1): If  $t \in (\Bar{t}, \tau_c]$,  
    \begin{align*}
    \begin{array}{ll}
        v_1 = [1, a(\lambda_1), a(\lambda_1), b(\lambda_1), b(\lambda_1)]^{\top}, &\lambda_1 > 1 + \tau_{s} + \tau_{c}, \\
        v_2 = [1, a(\lambda_2), a(\lambda_2), b(\lambda_2), b(\lambda_2)]^{\top}, &\lambda_2 \in [1 + \tau_{s} - \tau_{c}, 1 + \tau_{c}) \\
        v_3 = [0, -1, 1, -1, 1]^{\top}, &\lambda_3 = 1 + \tau_{s} - \tau_{c}, \\
        v_4 = [1, a(\lambda_4), a(\lambda_4), b(\lambda_4), b(\lambda_4)]^{\top}, &\lambda_4 \in (1 - \tau_{s} - \tau_{c},  1) \\
        v_5 = [0, 1, -1, -1, 1]^{\top}, &\lambda_5 = 1 - \tau_{s} - \tau_{c}, \\
    \end{array}
    \end{align*}
    
    (Case 2): If  $t \in (0, \Bar{t})$, 
        \begin{align*}
    \begin{array}{ll}
        v_1 = [1, a(\lambda_1), a(\lambda_1), b(\lambda_1), b(\lambda_1)]^{\top}, &\lambda_1 > 1 + \tau_{s} + \tau_{c}, \\
        v_2 = [0, -1, 1, -1, 1]^{\top}, &\lambda_2 = 1 + \tau_{s} - \tau_{c}, \\
        v_3 = [1, a(\lambda_3), a(\lambda_3), b(\lambda_3), b(\lambda_3)]^{\top}, &\lambda_3 \in [1, 1 + \tau_{s} - \tau_{c}) \\
        v_4 = [1, a(\lambda_4), a(\lambda_4), b(\lambda_4), b(\lambda_4)]^{\top}, &\lambda_4 \in (1 - \tau_{s} - \tau_{c},1) \\
        v_5 = [0, 1, -1, -1, 1]^{\top}, &\lambda_5 = 1 - \tau_{s} - \tau_{c}, \\
    \end{array}
    \end{align*}

        (Case 3): If  $t = 0$, 
        \begin{align*}
    \begin{array}{ll}
        v_1 = [0, 1, 1, 1, 1]^{\top}, &\lambda_1 = 1 + \tau_{s} + \tau_{c}, \\
        v_2 = [0, -1, 1, -1, 1]^{\top}, &\lambda_2 = 1 + \tau_{s} - \tau_{c}, \\
        v_3 = [1, 0, 0, 0, 0]^{\top}, &\lambda_3 = 1 \\
        v_4 = [0, 1, 1, -1, -1]^{\top}, &\lambda_4 = 1 - \tau_{s} + \tau_{c} \\
        v_5 = [0, 1, -1, -1, 1]^{\top}, &\lambda_5 = 1 - \tau_{s} - \tau_{c}, \\
    \end{array}
    \end{align*}
    \label{lemma:sup_eigprob}
\end{lemma}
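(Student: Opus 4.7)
The plan is to exploit the involutive symmetry of $T(t)$ under the permutation $P=(2,3)(4,5)$ that swaps index pairs $2\!\leftrightarrow\! 3$ and $4\!\leftrightarrow\! 5$. A direct inspection of the matrix entries shows $[T(t),P]=0$, so $T(t)$ preserves both eigenspaces of $P$: the $3$-dimensional space $V_+=\{v: v_2=v_3,\ v_4=v_5\}$ and the $2$-dimensional space $V_-=\{v: v_1=0,\ v_2=-v_3,\ v_4=-v_5\}$. This decouples the $5\times 5$ spectral problem into a $2\times 2$ one and a $3\times 3$ one. Crucially, the block acting on $V_-$ will turn out to be independent of $t$, which is precisely why the lemma lists two $t$-invariant eigenvectors.

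First I will work out the $V_-$ block. In the orthonormal basis $\{(e_2-e_3)/\sqrt{2},\,(e_4-e_5)/\sqrt{2}\}$, a line of arithmetic shows $T(t)$ restricted to $V_-$ equals the symmetric matrix $\bigl(\begin{smallmatrix}1-\tau_c & \tau_s\\ \tau_s & 1-\tau_c\end{smallmatrix}\bigr)$, independent of $t$. Its eigenvalues are $1-\tau_c\pm\tau_s$ with eigenvectors $[1,\pm 1]^\top$. Lifting back to $\mathbb{R}^5$ gives exactly $[0,-1,1,-1,1]^\top$ with eigenvalue $1+\tau_s-\tau_c$ and $[0,1,-1,-1,1]^\top$ with eigenvalue $1-\tau_s-\tau_c$. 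For the $V_+$ block, using the orthonormal basis $\{e_1,(e_2+e_3)/\sqrt{2},(e_4+e_5)/\sqrt{2}\}$, $T(t)$ acts as the symmetric matrix
\[
\widetilde M(t)\;=\;\begin{pmatrix}1 & \sqrt{2}\,t & 0\\ \sqrt{2}\,t & 1+\tau_c & \tau_s\\ 0 & \tau_s & 1+\tau_c\end{pmatrix},
\]
with characteristic polynomial $Q(\lambda,t) \triangleq (1-\lambda)\bigl[(1+\tau_c-\lambda)^2-\tau_s^2\bigr]-2t^2(1+\tau_c-\lambda)$. Solving $\widetilde M u=\lambda u$ componentwise gives $u=\bigl[1,\ \tfrac{\lambda-1}{2t},\ \tfrac{\tau_s(\lambda-1)}{2t(\lambda-1-\tau_c)}\bigr]^\top$ whenever $\lambda\neq 1+\tau_c$; pulling this back through the basis yields precisely $[1,a(\lambda),a(\lambda),b(\lambda),b(\lambda)]^\top$. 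The degenerate $\lambda=1$ case (which occurs only at $t=0$) forces $u=[1,0,0]^\top$, i.e.\ $[1,0,0,0,0]^\top$.

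Next I would localize the three roots of $Q(\cdot,t)$ by evaluating $Q$ at six test points and invoking the Intermediate Value Theorem. The essential computations are
\[
Q(1+\tau_s+\tau_c,t)=2t^2\tau_s,\quad Q(1+\tau_c,t)=\tau_c\tau_s^2,\quad Q(1,t)=-2t^2\tau_c,\quad Q(1+\tau_c-\tau_s,t)=-2t^2\tau_s,
\]
\[
Q(1+\tau_s-\tau_c,t)=4\tau_c(\tau_s-\tau_c)^2-2t^2(2\tau_c-\tau_s),\qquad Q(1-\tau_s-\tau_c,t)=4\tau_c(\tau_s+\tau_c)^2-2t^2(2\tau_c+\tau_s).
\]
Since the leading term of $Q$ is $-\lambda^3$, the largest root always exceeds $1+\tau_s+\tau_c$. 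The sign of $Q(1+\tau_s-\tau_c,t)$ governs the Case 1 / Case 2 split: it is negative exactly when $t>\bar t\triangleq\sqrt{2(\tau_s-\tau_c)^2\tau_c/(2\tau_c-\tau_s)}$, in which case the IVT (together with the sign of $Q$ at $1+\tau_c$) forces the middle root into $(1+\tau_s-\tau_c,\,1+\tau_c)$; conversely for $t<\bar t$ the middle root is pinned into $(1,\,1+\tau_s-\tau_c)$ by the signs at $1$ and $1+\tau_s-\tau_c$. For the smallest root, I will verify $Q(1-\tau_s-\tau_c,t)>0$ for all $t\in(0,\tau_c]$, which reduces (after substituting $r=\tau_s/\tau_c$) to the elementary inequality $r(2r-3)<0$ on $r\in(1,1.5)$; this inequality simultaneously certifies $\bar t<\tau_c$, so the Case-1 interval $(\bar t,\tau_c]$ is nonempty. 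Combined with $Q(1+\tau_c-\tau_s,t)<0$, this forces the smallest root into $(1-\tau_s-\tau_c,\,1+\tau_c-\tau_s)\subset(1-\tau_s-\tau_c,1)$ for both cases. Finally, Case 3 ($t=0$) is handled directly: $T(0)$ becomes block diagonal with the singleton block $[1]$ and a $4\times 4$ block whose symmetry group $\mathbb{Z}_2\times\mathbb{Z}_2$ yields the four explicit Hadamard-like eigenvectors, recovering the listed form.

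The main obstacle will be the bookkeeping in the root-localization step: one needs all six sign evaluations to be simultaneously consistent with a degree-$3$ polynomial, and the degenerate boundary $t=\bar t$ (where $1+\tau_s-\tau_c$ is a double eigenvalue of $T(t)$ shared by both $V_+$ and $V_-$) needs to be acknowledged so that the half-open brackets in the lemma are justified. Beyond this, the proof is an exercise in IVT and basic linear algebra.
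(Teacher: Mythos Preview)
Your approach is essentially the same as the paper's: the paper also verifies the two $t$-independent eigenvectors directly and then localizes the three remaining eigenvalues by sign-testing the same cubic (written there as $g(z)$ with $z=\lambda-1$, so $Q(\lambda,t)=-g(\lambda-1)$) at the same test points, using the sign of $g(\tau_s-\tau_c)$ to obtain the Case~1/Case~2 split at $\bar t$. Your commuting-permutation framing via $P=(2\,3)(4\,5)$ is a cleaner explanation for \emph{why} those two eigenvectors exist, but note two small slips to fix: the intermediate $V_+$-eigenvector should read $u_2=(\lambda-1)/(\sqrt{2}\,t)$ (your pull-back conclusion is still correct after dividing by $\sqrt{2}$), and the positivity of $Q(1-\tau_s-\tau_c,t)$ on $(0,\tau_c]$ reduces to the trivial $2r^2+3r>0$, not to $r(2r-3)<0$---the latter inequality is what certifies $\bar t<\tau_c$.
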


\begin{proof}
For $t=0$, Case 3, we can verify by direct calculation. 

Now for Case 1 and Case 2, we consider $t\in (0, \tau_{c})$. For  any $i \in [5]$, denote $\hat{\lambda}_i$ as unordered eigenvalue and $\hat{v_i}$ is its corresponding eigenvector.
We can direct verify that
\begin{align}
    \hat{\lambda}_1 =& 1 + \tau_{s} - \tau_{c}\\
    \hat{\lambda}_2 = & 1 - \tau_{s} - \tau_{c},
\end{align}
are two eigenvalues of $\Tilde{A}_t$ and 
\begin{align}
    \hat{v}_1 =& [0, -1, 1, -1, 1]^{\top}\\
    \hat{v}_2 = &[0, 1, -1, -1, 1]^{\top}, 
\end{align}
are two corresponding eigenvectors. Now, we prove for $i \in \{3,4,5\}$,  $\hat{v}_i = [1, a(\hat{\lambda}_i), a(\hat{\lambda}_i), b(\hat{\lambda}_i), b(\hat{\lambda}_i)]^{\top}$ are eigenvector for $\hat{\lambda}_i$.
For $i \in \{3,4,5\}$ we only need to show
\begin{align}
    \begin{cases}
    1+2t a(\hat{\lambda}_i) & = \hat{\lambda}_i \\
    t + (1+\tau_{c})a(\hat{\lambda}_i) + \tau_{s}b(\hat{\lambda}_i) & = \hat{\lambda}_i a(\hat{\lambda}_i) \\
    \tau_{s}a(\hat{\lambda}_i) + (1+\tau_{c})b(\hat{\lambda}_i) & = \hat{\lambda}_i b(\hat{\lambda}_i). 
    \end{cases}
\end{align}
Equivalently to
\begin{align}
    \begin{cases}
    1+2t a(\hat{\lambda}_i) -\hat{\lambda}_i & = 0 \\
    t + (1+\tau_{c}+\tau_{s}-\hat{\lambda}_i)(a(\hat{\lambda}_i)d + b(\hat{\lambda}_i)) & = 0 \\
    t + (1+\tau_{c}-\tau_{s}-\hat{\lambda}_i)(a(\hat{\lambda}_i) - b(\hat{\lambda}_i)) & = 0 .
    \end{cases}
\end{align}
Let $z_i = \hat{\lambda}_i - 1$. Equivalently to
\begin{align}
    \begin{cases}
    1+2t a(\hat{\lambda}_i) -\hat{\lambda}_i & = 0 \\
    (\hat{\lambda}_i -1 -\tau_{c})b(\hat{\lambda}_i) -\tau_{s}a(\hat{\lambda}_i)  & = 0\\
    z_i^3-2\tau_{c}z_i^2+(\tau_{c}^2-\tau_{s}^2-2t^2)z_i+2\tau_{c}t^2 &=0 .
    \end{cases}
\end{align}
Let $g(z) = z^3-2\tau_{c}z^2+(\tau_{c}^2-\tau_{s}^2-2t^2)z+2\tau_{c}t^2$, we can verify that $g(-\infty)<0, ~~ g(-\tau_{c}-\tau_{s}) = -4\tau_{c}(\tau_{c}+\tau_{s})^2+4t^2\tau_{c}+2t^2\tau_{s} < 0, ~~ g(0) = 2\tau_{c}t^2 > 0, ~~ g(\tau_{c}) = -\tau_{s}^2\tau_{c}<0 , ~~g(\tau_{c}+\tau_{s}) = -2\tau_{s}t^2 < 0 , ~~g(+\infty)>0$. Thus, we have three solutions and satisfying $1-\tau_{c}-\tau_{s} < \hat{\lambda}_5 < 1 < \hat{\lambda}_4 < 1+\tau_{c}< 1+\tau_{c}+\tau_{s}<\hat{\lambda}_3$. As $\hat{\lambda}_i \neq 1+\tau_{c}$ for $i \in \{3,4,5\}$, thus,  equivalently to
\begin{align}
    \begin{cases}
    a(\hat{\lambda}_i)  & = {\hat{\lambda}_i-1 \over 2t} \\
    b(\hat{\lambda}_i)  & = {\tau_{s}(\hat{\lambda}_i-1) \over 2(\hat{\lambda}_i -1 -\tau_{c})t}\\
    (\hat{\lambda}_i-1)^3-2\tau_{c}(\hat{\lambda}_i-1)^2+(\tau_{c}^2-\tau_{s}^2-2t^2)(\hat{\lambda}_i-1)+2\tau_{c}t^2 &=0 .
    \end{cases}
\end{align}

When $t > \Bar{t}$, we have $g(\tau_{s} - \tau_{c})>0$. Thus, we have $1-\tau_{c}-\tau_{s} < \hat{\lambda}_5 < 1+\tau_{s} - \tau_{c} < \hat{\lambda}_4 < 1+\tau_{c}+\tau_{s}<\hat{\lambda}_3$. By reorder, we finish Case 1.

When $t < \Bar{t}$, we have $g(\tau_{s} - \tau_{c})<0$. Thus, we have $1-\tau_{c}-\tau_{s} < \hat{\lambda}_5 < 1 < \hat{\lambda}_4 < 1+\tau_{s} - \tau_{c}< 1+\tau_{c}+\tau_{s}<\hat{\lambda}_3$. By reorder the eigenvectors w.r.t the size of eigenvalues, we finish Case 2.
\end{proof}

\begin{theorem}
   (Recap of Theorem~\ref{th:toy_extreme}) Assume $\tau_1 = 1$, $\tau_0 = 0$, $\tau_s < 1.5\tau_c$. We have
$$
U^*_1=\left[\begin{array}{ccccc}
 a_1 & a_1 & b_1 & b_1 \\
 a_2 & a_2 & b_2 & b_2 \\
\end{array}\right]^{\top},
$$
where $a_1,b_1$ are some positive real numbers, and $a_2,b_2$ has different signs. 
$$U^*_2=\left\{\begin{array}{ll}   
\frac{1}{2}\left[\begin{array}{cccc}
1 & 1 & 1 & 1\\ 1 & 1 & -1 & -1\\
\end{array}\right]^{\top}, & \text{if } \tau_{s} < \tau_{c}, \\
\frac{1}{2}\left[\begin{array}{cccc}
1 & 1 & 1 & 1\\ -1 & 1 & -1 & 1\\
\end{array}\right]^{\top}, & \text{if } \tau_{s} > \tau_{c}, 
\end{array}\right. $$
With label vector $\Vec{y} = \{1,1,0,0\}$, we have 
\begin{equation}
    \mathcal{R}(U^*_1, \Vec{y}) = 0, \mathcal{R}(U^*_2, \Vec{y}) = \left\{\begin{array}{ll}    
    0, & \text{if } \tau_{s} < \tau_{c}\\
     1, &  \text{if } \tau_{s} > \tau_{c}.
    \end{array}\right. 
\end{equation}
    \label{th:sup_toy_extreme}
\end{theorem}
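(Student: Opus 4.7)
The plan is to reduce Theorem~\ref{th:sup_toy_extreme} to a direct application of Lemma~\ref{lemma:sup_eigprob}, which supplies closed-form eigenpairs of the template matrix $T(t)$. First I would identify $T_1=T(\tau_c)$ (the red cylinder in $\mathcal{X}_l^{\text{case 1}}$ shares color with both red unlabeled samples, giving augmentation weight $\tau_c$) and $T_2=T(0)$ (the gray cylinder in $\mathcal{X}_l^{\text{case 2}}$ shares neither shape nor color with any unlabeled sample). Since $A_i=T_i^2$ and each $T_i$ is symmetric and positive definite under $\tau_1=1\gg\tau_s,\tau_c$, the singular vectors of $A_i$ coincide with the eigenvectors of $T_i$ in the same order, so computing $V_i^*$ reduces to listing the top-$2$ eigenvectors given by the lemma, and $U_i^*$ is the bottom-$4$ block.

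For $U_2^*$, I would invoke Case~3 of the lemma, whose five explicit eigenpairs can be ordered by hand. The dominant eigenvalue is always $1+\tau_s+\tau_c$ with eigenvector $\tfrac{1}{2}[0,1,1,1,1]^\top$. The runner-up switches at $\tau_s=\tau_c$: when $\tau_s<\tau_c$ it is $1-\tau_s+\tau_c$ with eigenvector $\tfrac{1}{2}[0,1,1,-1,-1]^\top$, and when $\tau_s>\tau_c$ it is $1+\tau_s-\tau_c$ with eigenvector $\tfrac{1}{2}[0,-1,1,-1,1]^\top$. Dropping the first coordinate yields the two claimed forms of $U_2^*$.

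For $U_1^*$ I would first verify that $\tau_s<1.5\tau_c$ forces $\tau_c>\bar{t}$: since $2\tau_c-\tau_s>0$, the inequality $\tau_c^2>\bar{t}^2$ is equivalent to $\tau_c(2\tau_c-\tau_s)>2(\tau_s-\tau_c)^2$, which after expansion reduces to $3\tau_s\tau_c>2\tau_s^2$, i.e.\ $\tau_s<1.5\tau_c$. This places us in Case~1 of the lemma, so the top-$2$ eigenvectors both have the form $[1,a(\lambda),a(\lambda),b(\lambda),b(\lambda)]^\top$, immediately yielding the claimed block structure of $U_1^*$. The signs then follow from the closed forms $a(\lambda)=(\lambda-1)/(2\tau_c)$ and $b(\lambda)=\tau_s(\lambda-1)/(2(\lambda-1-\tau_c)\tau_c)$: for $\lambda_1>1+\tau_s+\tau_c>1+\tau_c$ both factors are positive, so $a_1,b_1>0$; for $\lambda_2\in(1,1+\tau_c)$, $\lambda_2-1>0$ while $\lambda_2-1-\tau_c<0$, forcing $a_2>0>b_2$. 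The main technical obstacle is that the lemma is stated only for $\tau_c<\tau_s<1.5\tau_c$, whereas the theorem also allows $\tau_s<\tau_c$; I would handle this by redoing the sign analysis of the lemma's cubic $g(z)$ at $t=\tau_c$, checking that it still has a root above $\tau_c+\tau_s$ and a root in $(0,\tau_c)$, so that the two largest eigenvalues of $T_1$ remain cubic roots lying in $(1+\tau_c+\tau_s,\infty)$ and $(1,1+\tau_c)$ respectively.

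Finally, the residuals reduce to tiny linear systems. For $U_1^*$, the equations $a_1\mu_1+a_2\mu_2=1,\;b_1\mu_1+b_2\mu_2=0$ have determinant $a_1b_2-a_2b_1$, which is nonzero by the sign pattern, so $\vec{y}$ lies in the column span of $U_1^*$ and $\mathcal{R}(U_1^*,\vec{y})=0$. For $U_2^*$ with $\tau_s<\tau_c$, choosing $\mu_1=\mu_2=1$ fits $\vec{y}$ exactly. For $U_2^*$ with $\tau_s>\tau_c$, the two columns are orthonormal and the orthogonal projection of $(1,1,0,0)^\top$ onto their span equals $(1/2,1/2,1/2,1/2)^\top$, so the residual vector $(1/2,1/2,-1/2,-1/2)^\top$ has squared norm $1$, matching the theorem.
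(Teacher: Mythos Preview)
Your proposal is correct and follows essentially the same route as the paper: both reduce to Lemma~\ref{lemma:sup_eigprob} by identifying $T_1=T(\tau_c)$ and $T_2=T(0)$, read off the top-$2$ eigenvectors from Cases~1 and~3, verify the signs of $a_i,b_i$ from the closed forms $a(\lambda),b(\lambda)$, and then solve the resulting $2\times 2$ linear systems for the residuals. Your treatment is in fact slightly more careful than the paper's in that you explicitly flag and patch the gap where the lemma's hypothesis $\tau_c<\tau_s$ fails (by redoing the sign analysis of the cubic $g(z)$ to locate the second root in $(0,\tau_c)$), whereas the paper handles this extension more informally.
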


\begin{proof}
    In the Case 1 and Case 3 of Lemma~\ref{lemma:sup_eigprob}, we have shown the $U^*_1$ and $U^*_2$ case when $\tau_s > \tau_c$ respectively. In this proof, we just need to show the case when $\tau_s < \tau_c$. For $U^*_2$ and $\tau_s < \tau_c$,  since  $t = 0$, we can directly prove by giving the eigenvectors with order: 
        \begin{align*}
    \begin{array}{ll}
        v_1 = [0, 1, 1, 1, 1]^{\top}, &\lambda_1 = 1 + \tau_{s} + \tau_{c}, \\
        v_2 = [0, 1, 1, -1, -1]^{\top}, &\lambda_2 = 1 - \tau_{s} + \tau_{c} \\
        v_3 = [1, 0, 0, 0, 0]^{\top}, &\lambda_3 = 1 \\
        v_4 = [0, -1, 1, -1, 1]^{\top}, &\lambda_4 = 1 + \tau_{s} - \tau_{c}, \\
        v_5 = [0, 1, -1, -1, 1]^{\top}, &\lambda_5 = 1 - \tau_{s} - \tau_{c}, \\
    \end{array}
    \end{align*}
    
    For $U_1^*$, one can see that in the Case 1 of Lemma~\ref{lemma:sup_eigprob}, we still have $\lambda_2 > \lambda_3$ since $\tau_s<1.5\tau_c<2\tau_c$ holds. Therefore the order of $v_2$ and $v_3$ does not change. Then $U_1^*$ is the concatenation of the last four dimensions of $v_2$ and $v_1$.
    
    Now we would like to show that $a_1, b_1$ are positive and $a_2, b_2$ have different signs. We have shown in Lemma~\ref{lemma:sup_eigprob} that $a(\lambda)  = {\lambda-1 \over 2t}$ and $b(\lambda)  = {\tau_{s}(\lambda-1) \over 2(\lambda -1 -\tau_{c})t}$. Since $a_1 = a(\lambda_1)$ and $b_1 = b(\lambda_1)$, one can show that $a_1 > 0,b_1>0$ since $\lambda_1 > 1 + \tau_s + \tau_c$. For $\lambda_2 \in [1 + \tau_{s} - \tau_{c}, 1 + \tau_{c})$, it is clear that $a_2 = a(\lambda_2) > 0 > b(\lambda_2) = b_2$ when $\tau_{s} > \tau_{c}$, and conversely we have $a_2 = a(\lambda_2) < 0 < b(\lambda_2) = b_2$ when $\tau_{s} < \tau_{c}$. So $a_2$ and $b_2$ have different signs in both cases. 

    Recall $\mathcal{R}(U^*, \Vec{y})$ is defined as:
    $$ \mathcal{R}(U^*, \Vec{y}) =  \underset{{\Vec{\mu}}\in \mathbb{R}^{k}}{\operatorname{min}} \|\Vec{y} - U^* \Vec{\mu} \|^2_2, $$
    Let $\Vec{\mu} = [\frac{b_2}{a_1b_2 - a_2b_1}, \frac{-b_1}{a_1b_2 - a_2b_1}]^{\top}$, $\mathcal{R}(U_1^*, \Vec{y}) = 0$. If $\tau_s < \tau_c$, let $\Vec{\mu} = [1, 1]^{\top}$, then $\mathcal{R}(U_2^*, \Vec{y}) = 0$. If $\tau_s > \tau_c$, $\Vec{\mu}^* = U_2^{*\top} \Vec{y} = [1, 0]^{\top}$ is the minimizer and we have $\mathcal{R}(U_2^*, \Vec{y}) = 1$. 
    
\end{proof}

\begin{theorem}
     (Recap of Theorem~\ref{th:toy_general}) Assume  $\tau_1 = 1$, $\tau_0 = 0$, $1.5\tau_c > \tau_s > \tau_c$. Let $\Bar{t} = \sqrt{\frac{2(\tau_s-\tau_c)^2\tau_c}{2\tau_c - \tau_s}}$, $r: \mathbb{R} \mapsto (0,1) $ as a real value function, we have 
     \begin{equation}
         \mathcal{R}(U^*_t, \Vec{y}) = \left\{\begin{array}{ll}    
     0, &  \text{if } t \in (\Bar{t}, \tau_s), \\
    r(t), & \text{if } t \in (0, \Bar{t}) \\ 
     1, &  \text{if } t = 0. \end{array}\right. 
     \end{equation}
    \label{th:sup_toy_general}
\end{theorem}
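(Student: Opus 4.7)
The plan is to invoke Lemma~\ref{lemma:sup_eigprob}, which gives the explicit eigenstructure of $T(t)$ in three regimes separated by the threshold $\Bar{t}$, and then compute the least-squares residual $\mathcal{R}(U^*_t, \Vec{y})$ in each regime by projecting $\Vec{y} = [1,1,0,0]^\top$ onto the column span of $U^*_t$, i.e.\ the last four rows of the top-two eigenvectors of $T(t)$.

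\textbf{Regime $t \in (\Bar{t}, \tau_s)$.} I would apply Case 1 of Lemma~\ref{lemma:sup_eigprob}, under which the two leading eigenvectors take the form $[1, a(\lambda_j), a(\lambda_j), b(\lambda_j), b(\lambda_j)]^\top$ for $j = 1,2$. This is structurally identical to the $U^*_1$ analyzed in Theorem~\ref{th:sup_toy_extreme}, and since $\tau_s > \tau_c$ forces $a_2 := a(\lambda_2)$ and $b_2 := b(\lambda_2)$ to have opposite signs, the $2\times 2$ matrix with rows $(a_1,b_1)$ and $(a_2,b_2)$ is invertible. The explicit choice $\mu = [b_2,-b_1]^\top / (a_1 b_2 - a_2 b_1)$ then gives $U^*_t \mu = \Vec{y}$ exactly, so $\mathcal{R}(U^*_t, \Vec{y}) = 0$. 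A small complement is required because Lemma~\ref{lemma:sup_eigprob} is stated for $t \in (0, \tau_c]$; the cubic sign analysis of $g(z)$ in its proof extends verbatim to $t \in (\tau_c, \tau_s)$, and the ordering of eigenvalues is unchanged on this extended range.

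\textbf{Regime $t = 0$.} I invoke Case 3 of Lemma~\ref{lemma:sup_eigprob} directly. The two leading eigenvectors restrict to the unlabeled block as $u_1 = [1,1,1,1]^\top$ and $u_2 = [-1,1,-1,1]^\top$, which are orthogonal; moreover $u_2$ is orthogonal to $\Vec{y}$. Therefore the optimal regression coefficients are $\mu_1 = \langle u_1, \Vec{y}\rangle / \|u_1\|^2 = 1/2$ and $\mu_2 = 0$, yielding the residual $\|\Vec{y}\|^2 - (1/2)\langle u_1, \Vec{y}\rangle = 2 - 1 = 1$.

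\textbf{Regime $t \in (0, \Bar{t})$.} I use Case 2 of Lemma~\ref{lemma:sup_eigprob}, in which the top-two eigenvectors are $v_1 = [1, a_1, a_1, b_1, b_1]^\top$ and the shape vector $v_2 = [0,-1,1,-1,1]^\top$. On unlabeled rows these give orthogonal columns $u_1 = [a_1,a_1,b_1,b_1]^\top$ and $u_2 = [-1,1,-1,1]^\top$, and since $\langle u_2, \Vec{y}\rangle = 0$ the projection computation collapses to
\begin{equation*}
r(t) \;=\; \mathcal{R}(U^*_t, \Vec{y}) \;=\; \|\Vec{y}\|^2 - \frac{\langle u_1, \Vec{y}\rangle^2}{\|u_1\|^2} \;=\; \frac{2\, b_1^2}{a_1^2 + b_1^2}.
\end{equation*}
Using the closed form $b_1/a_1 = \tau_s/(\lambda_1 - 1 - \tau_c)$ from Lemma~\ref{lemma:sup_eigprob} together with the strict inequality $\lambda_1 > 1 + \tau_s + \tau_c$ supplied by Case 2, one obtains $(b_1/a_1)^2 < 1$, hence $r(t) < 1$; strict positivity $r(t) > 0$ follows from $b_1 \neq 0$. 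The main obstacle is this intermediate regime: while the formula for $r(t)$ is a routine projection, verifying that $r(t) \in (0,1)$ strictly rests on the inequality $\lambda_1(t) - 1 - \tau_c > \tau_s$, which requires tracking the largest root of the cubic $g(z)$ in the proof of Lemma~\ref{lemma:sup_eigprob} as $t$ varies over $(0, \Bar{t})$, and observing that the change in eigenvalue ordering happens exactly at $t = \Bar{t}$ (where $g(\tau_s - \tau_c) = 0$).
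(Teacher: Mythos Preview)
Your proposal is correct and follows essentially the same route as the paper's own proof: both apply the three cases of Lemma~\ref{lemma:sup_eigprob} and then compute the least-squares residual directly, obtaining the identical closed form $r(t)=2b_1^2/(a_1^2+b_1^2)=2\tau_s^2/((\lambda_1-1-\tau_c)^2+\tau_s^2)\in(0,1)$ in the intermediate regime. Your observation that the lemma is only stated for $t\le\tau_c$ while the theorem allows $t$ up to $\tau_s$ is a genuine point of care that the paper glosses over; as you note, the cubic sign analysis extends without change.
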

\begin{proof}
    According to Lemma~\ref{lemma:sup_eigprob}, if $t \in (\Bar{t}, \tau_s)$, $$
U^*_t=\left[\begin{array}{ccccc}
 a_1 & a_1 & b_1 & b_1 \\
 a_2 & a_2 & b_2 & b_2 \\
\end{array}\right]^{\top},
$$
where $a_1,b_1$ are some positive real numbers, and $a_2,b_2$ has different signs. Let $\Vec{\mu} = [\frac{b_2}{a_1b_2 - a_2b_1}, \frac{-b_1}{a_1b_2 - a_2b_1}]^{\top}$, $\mathcal{R}(U_t^*, \Vec{y}) = 0$. If $t = 0$, $\mathcal{R}(U_t^*, \Vec{y}) = 0$, which is proved in Theorem~\ref{th:sup_toy_extreme} when $\tau_s > \tau_c$. If $t \in (0, \Bar{t})$, as shown in Lemma~\ref{lemma:sup_eigprob}, we have 
$$
U^*_t=\left[\begin{array}{ccccc}
 {\lambda_1-1 \over 2t} & {\lambda_1-1 \over 2t} & {\tau_{s}(\lambda_1-1) \over 2(\lambda_1 -1 -\tau_{c})t} & {\tau_{s}(\lambda_1-1) \over 2(\lambda_1 -1 -\tau_{c})t} \\
 -1 & 1 & -1 & 1 \\
\end{array}\right]^{\top},
$$ where $\lambda_1 > 0$. $\Vec{\mu}_* = (U^{*\top}_tU^*_t)^{\dag}U^{*\top}_t\Vec{y} = [\frac{{\lambda_1-1 \over 2t}}{({\lambda_1-1 \over 2t})^2 + ({\tau_{s}(\lambda_1-1) \over 2(\lambda_1 -1 -\tau_{c})t})^2}, 0]^{\top}$, then: 

$$\mathcal{R}(U_t^*, \Vec{y}) = \frac{2\tau_s^2}{({\lambda_1 - 1 - \tau_c})^2 + \tau_s^2} = r(\lambda_1) \in (0,1). $$
Note that $\lambda_1$ is a value dependent on $t$, therefore $r(\lambda_1)$ can be represented as $r(t)$.

\end{proof}

\begin{lemma}  (Recap of Lemma~\ref{th:toy_harmful})
    If  $\tau_{s} < \tau_{c} < 1.5\tau_s$,  
$
    \mathcal{R}(U^*_3, \Vec{y}) = 1, \mathcal{R}(U^*_2, \Vec{y}) = 0. 
$
    \label{th:sup_toy_harmful}
\end{lemma}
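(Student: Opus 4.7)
The plan is to exploit a shape--color symmetry to reduce $T_3$ to an already-handled case, and then to observe that after this reduction the label vector $\Vec{y}$ lies transverse to the learned feature span. The claim $\mathcal{R}(U^*_2,\Vec{y})=0$ is immediate from the $\tau_s<\tau_c$ branch of Theorem~\ref{th:sup_toy_extreme}, so the real work is to show $\mathcal{R}(U^*_3,\Vec{y})=1$.

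First, I would write out $T_3$ explicitly using Eq.~\eqref{eq:def_edge}. The gray cube shares its shape with the red and blue cubes in $\mathcal{X}_u$, contributing $\tau_s$ in the first row/column at those two positions, and shares nothing with the two spheres, giving $\tau_0=0$ there; the unlabeled $4\times 4$ block is identical to that of $T_1$ and $T_2$. Applying the permutation $\pi$ that reorders the unlabeled samples to (red cube, blue cube, red sphere, blue sphere) makes $T_3^\pi$ coincide structurally with the matrix $T(t)$ of Lemma~\ref{lemma:sup_eigprob}, but with the roles of $\tau_s$ and $\tau_c$ interchanged and $t=\tau_s$. Under the hypothesis $\tau_s<\tau_c<1.5\tau_s$, the swapped parameters $(\tau_c^{\mathrm{new}},\tau_s^{\mathrm{new}})=(\tau_s,\tau_c)$ satisfy the Lemma's regime, and a short algebraic check reduces the Case~1 condition $t=\tau_s>\bar t^{\mathrm{new}}$ to exactly $\tau_c<1.5\tau_s$.

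Case~1 of Lemma~\ref{lemma:sup_eigprob} then delivers the top two eigenvectors of $T_3^\pi$ as $v_i=[1,a_i,a_i,b_i,b_i]^\top$ for $i=1,2$, corresponding to distinct eigenvalues; their restrictions to the four unlabeled coordinates remain linearly independent (the determinant $a_1b_2-a_2b_1$ is nonzero from the explicit formulas for $a(\lambda),b(\lambda)$ in Lemma~\ref{lemma:sup_eigprob}). Undoing $\pi$ yields
\begin{equation*}
U_3^* = \begin{bmatrix} a_1 & a_2 \\ b_1 & b_2 \\ a_1 & a_2 \\ b_1 & b_2 \end{bmatrix},
\end{equation*}
so the column span is $\mathsf{S}=\{[p,q,p,q]^\top:p,q\in\mathbb{R}\}$---geometrically, $U_3^*$ encodes only the shape partition of the unlabeled data and is completely blind to color. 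The residual is then a one-line orthogonal projection: the nearest point in $\mathsf{S}$ to $\Vec{y}=[1,1,0,0]^\top$ is $\tfrac12[1,1,1,1]^\top$, giving $\mathcal{R}(U_3^*,\Vec{y})=\|[\tfrac12,\tfrac12,-\tfrac12,-\tfrac12]^\top\|_2^2=1$.

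The main obstacle will be the bookkeeping in the second step: identifying the correct $\pi$, verifying that $T_3^\pi$ really is $T(\tau_s)$ with $\tau_s\leftrightarrow\tau_c$ interchanged, and confirming that the top two eigenvectors after the swap remain within the $(v_1,v_2)$ pair of Case~1 rather than being overtaken by the ``color-sensitive'' eigenvector $v_3=[0,-1,1,-1,1]^\top$. That ordering is governed precisely by the Case~1 threshold $t>\bar t^{\mathrm{new}}$, which the hypothesis $\tau_c<1.5\tau_s$ is designed to guarantee. Once these alignments are checked, the residual computation is mechanical and the lemma follows.
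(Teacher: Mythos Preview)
Your proposal is correct and follows essentially the same route as the paper: both reduce $T_3$ to an instance of the eigenvector analysis already carried out in Lemma~\ref{lemma:sup_eigprob} via the shape--color swap, obtain $U_3^*$ with column span $\{[p,q,p,q]^\top\}$, and then read off the residual. The paper simply writes ``follow the same proof in Lemma~\ref{lemma:sup_eigprob}'' and quotes the resulting $U_3^*$, whereas you make the permutation $\pi$ and the $\tau_s\leftrightarrow\tau_c$ interchange explicit and verify the Case~1 threshold $t>\bar t^{\mathrm{new}}$ reduces to $\tau_c<1.5\tau_s$; this extra bookkeeping is helpful but does not constitute a different argument.
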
 
\begin{proof}
    When $\mathcal{X}_{l}^{\text{case 3}} \triangleq \{X_{\textcolor{gray}{\cube{0.5}}, \textcolor{gray}{c_3}}\}  (\text{gray cube})$, we have     
\begin{equation*}
T_3=\left[\begin{array}{ccccc}
\tau_1 & \tau_{s} & \tau_0 & \tau_{s} & \tau_0 \\
\tau_{s} & \tau_1 & \tau_{c} & \tau_{s} & \tau_0  \\
\tau_0 & \tau_{c} & \tau_1 & \tau_0 & \tau_{s}  \\
\tau_{s} & \tau_{s} & \tau_0 & \tau_1 & \tau_{c}  \\
\tau_0 & \tau_0 & \tau_{s} & \tau_{c} & \tau_1  \\
\end{array}\right],
\end{equation*}
    
    Follow the same proof in Lemma~\ref{lemma:sup_eigprob}, one can show that $$
U^*_3=\left[\begin{array}{ccccc}
 a_1 &  b_1 &  a_1 & b_1 \\
 a_2 &  b_2 &  a_2 & b_2 \\
\end{array}\right]^{\top},
$$
where $a_1,b_1$ are some positive real numbers, and $a_2,b_2$ has different signs. Note that $U^*_3$ forms the same linear span as 
$$ \frac{1}{2}\left[\begin{array}{cccc}
1 & 1 & 1 & 1\\ -1 & 1 & -1 & 1\\
\end{array}\right]^{\top}.$$ Therefore, we have $\mathcal{R}(U^*_3, \Vec{y}) = 1$ as proved in Theorem~\ref{th:sup_toy_extreme}.
\end{proof}

\section{Additional Details for Section~\ref{sec:theory_main}}

This section acts as an expanded version of Section~\ref{sec:theory_main}. We will first show in Section~\ref{sec:sup_no_approx} with the background and proof for Theorem~\ref{th:no_approx} with the original adjacency matrix $\Dot{A}$. Then we present the analysis based on the approximation matrix $\Bar{A}$ in Section~\ref{sec:sup_with_approx}. Finally, we show the formal proof of our main Theorem~\ref{th:main} in Section~\ref{sec:sup_main_proof}. The proof of Theorem~\ref{th:main} requires two important ingredients (Lemma~\ref{lemma:sup_error_bound_approx} and Lemma~\ref{lemma:sup_bound_kappa}) with proof deferred in Section~\ref{sec:sup_error_bound_approx} and Section~\ref{sec:sup_kappa_proof}   respectively.  

\subsection{Sufficient and Necessary Condition for Perfect Residual}

\label{sec:sup_no_approx}

We first present the formal analysis in Theorem~\ref{th:sup_no_approx} which is an extended version of  Theorem~\ref{th:no_approx} without approximation and we start with the recap of definitions. 

\textbf{Notations.}  Recall that $V^* \in \mathbb{R}^{N \times k}$ is defined as the top-$k$ singular vectors of $\Dot{A}$ and we split the eigen-matrix into two parts for labeled and unlabeled samples respectively:
$${V}^*=\left[\begin{array}{cc} 
{L}^* \in \mathbb{R}^{N_l \times k}\\ {U}^* \in \mathbb{R}^{N_u \times k}
\end{array}\right] = \left[\begin{array}{cccc}
{l}_1 & {l}_2 & \cdots &{l}_k \\ {u}_1 & {u}_2 & \cdots & {u}_k
\end{array}\right]$$
for labeled and unlabeled samples respectively. Then we let $V^{\flat} \in \mathbb{R}^{N \times (N-k)}$ be the remaining singular vectors of $\Dot{A}$ except top-$k$. Similarly, we split $V^{\flat}$ into two parts:
$${V}^{\flat}=\left[\begin{array}{cc} 
{L}^{\flat} \in \mathbb{R}^{N_l \times (N-k)} \\ {U}^{\flat} \in \mathbb{R}^{N_u \times (N-k)}
\end{array}\right] = \left[\begin{array}{cccc}
{l}_{k+1} & {l}_{k+2} & \cdots & {l}_{N} \\ {u}_{k+1} & {u}_{k+2} & \cdots & {u}_{N}
\end{array}\right].$$
We can also split the matrix $\Dot{A}$ at the $N_l$-th row and the $N_l$-th column and we obtain $A_{ll} \in \mathbb{R}^{N_l \times N_l},A_{ul} \in \mathbb{R}^{N_u \times N_l},A_{uu} \in \mathbb{R}^{N_u \times N_u}$ with
$$\Dot{A} = \left[\begin{array}{cc} 
A_{ll} & A^{\top}_{ul} \\ 
 A_{ul} & A_{uu}
\end{array}\right].$$

\begin{theorem}
    (\textbf{No approximation})  
    Denote the projection matrix $\mathsf{P}_{L^{\flat}} = L^{\flat\top}(L^{\flat}L^{\flat\top})^{\dag}L^{\flat}$, where $^{\dag}$ denotes the Moore-Penrose inverse. For any labeling vector $\Vec{y} \in \{0,1\}^{N_u}$, we have
    \begin{equation}
        \mathcal{R}(U^*, \Vec{y}) \leq \|(I-\mathsf{P}_{L^{\flat}}) U^{\flat\top} \Vec{y}\|^2_2.
    \label{eq:sup_R_bound_no_approx}
    \end{equation}
    The sufficient and necessary condition for $\mathcal{R}(U^*, \Vec{y}) = 0$ is $\Vec{\omega} \in \mathbb{R}^{N_l}$ such that   
    \begin{equation}
        \forall i = k+1, \ldots, N, \langle \Vec{y}^{\top} (\sigma_i I - A_{uu})^{\dag}A_{ul}, l_i \rangle =  \langle\Vec{\omega}, l_i \rangle
    \label{eq:sup_complex_condition}
    \end{equation}       
    where $\sigma_i$ is the $i$-th largest eigenvalue of $\Dot{A}$. 
    \label{th:sup_no_approx}
\end{theorem}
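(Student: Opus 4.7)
The plan is to derive the bound from four block identities implied by orthogonality of $V=[V^{*}~V^{\flat}]$, and then read the necessary/sufficient condition directly off the same identities before translating it into the eigenvector form stated in \eqref{eq:sup_complex_condition}.

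First, I would record the identities obtained from $V^{*\top}V^{\flat}=0$ and $VV^{\top}=I_N$, split into $N_l$/$N_u$ blocks:
\begin{align*}
U^{*\top}U^{\flat} &= -L^{*\top}L^{\flat}, &
U^{*}L^{*\top} &= -U^{\flat}L^{\flat\top},\\
U^{*}U^{*\top}+U^{\flat}U^{\flat\top} &= I_{N_u}, &
U^{\flat\top}U^{\flat}+L^{\flat\top}L^{\flat} &= I_{N-k}.
\end{align*}
For the bound, I would feed the test vector $\vec{\mu}=U^{*\top}\vec{y}+L^{*\top}\vec{\omega}$ (with $\vec{\omega}\in\mathbb{R}^{N_l}$ free) into $\vec{y}-U^{*}\vec{\mu}$ and use the first three identities to collapse it to
\begin{equation*}
\vec{y}-U^{*}\vec{\mu} \;=\; U^{\flat}\bigl(U^{\flat\top}\vec{y}+L^{\flat\top}\vec{\omega}\bigr).
\end{equation*}
The fourth identity gives $U^{\flat\top}U^{\flat}\preceq I_{N-k}$, hence $\|U^{\flat}z\|_2^2\le\|z\|_2^2$. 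Minimising the resulting upper bound over $\vec{\omega}$ is a classical linear least-squares problem whose optimum equals $\|(I-\mathsf{P}_{L^{\flat}})U^{\flat\top}\vec{y}\|_2^2$, giving the inequality \eqref{eq:sup_R_bound_no_approx}.

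For the sufficient and necessary condition, the ``$\Leftarrow$'' direction is immediate from the inequality just proved. For ``$\Rightarrow$'', if $\vec{y}=U^{*}\vec{\mu}^{*}$, I would transpose $U^{*\top}U^{\flat}=-L^{*\top}L^{\flat}$ to obtain $U^{\flat\top}\vec{y}=-L^{\flat\top}(L^{*}\vec{\mu}^{*})$, which exhibits the required $\vec{\omega}=-L^{*}\vec{\mu}^{*}$ so that $U^{\flat\top}\vec{y}=L^{\flat\top}\vec{\omega}$. Reading this vector equation coordinate-wise (for $i>k$) gives $\langle\vec{y},u_i\rangle=\langle\vec{\omega},l_i\rangle$. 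Finally, splitting the eigenvector relation $\Dot{A}v_i=\sigma_i v_i$ into its unlabeled block yields $A_{ul}l_i=(\sigma_i I-A_{uu})u_i$, and hence $u_i=(\sigma_i I-A_{uu})^{\dag}A_{ul}l_i$; substituting into $\langle\vec{y},u_i\rangle$ reproduces the left-hand side of \eqref{eq:sup_complex_condition}.

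The main technical wrinkle I expect is the pseudo-inverse step at the end: when $\sigma_i$ is also an eigenvalue of $A_{uu}$, $(\sigma_i I-A_{uu})$ is singular and $u_i$ is determined only modulo its kernel, so identifying $u_i$ with $(\sigma_i I-A_{uu})^{\dag}A_{ul}l_i$ requires either adopting the minimum-norm convention or arguing that the kernel component of $u_i$ vanishes (e.g.\ by a generic perturbation of $\sigma_i$ or by the distinct-eigenvalue structure of $\Dot{A}$). Once that convention is fixed, the translation between the intrinsic condition $U^{\flat\top}\vec{y}\in\operatorname{row}(L^{\flat})$ and the explicit form \eqref{eq:sup_complex_condition} is purely algebraic; every other step is a short block-matrix computation rather than a genuinely nontrivial inequality.
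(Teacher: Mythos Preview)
Your proof is correct and rests on the same orthogonality identities for $V=[V^{*}~V^{\flat}]$ that the paper uses, but the paper packages them differently and obtains a sharper result. Rather than plugging in a test vector, the paper lifts $\vec{y}$ to $\vec{y}'=[\vec{\zeta}^{\top},\vec{y}^{\top}]^{\top}$ with a free placeholder $\vec{\zeta}\in\mathbb{R}^{N_l}$ and works in the full space $\mathbb{R}^{N}$ where $V$ is orthonormal:
\[
\mathcal{R}(U^{*},\vec{y})
=\min_{\vec{\mu},\vec{\zeta}}\|\vec{y}'-V^{*}\vec{\mu}\|_2^2
=\min_{\vec{\zeta}}\|V^{\flat\top}\vec{y}'\|_2^2
=\min_{\vec{\zeta}}\|L^{\flat\top}\vec{\zeta}+U^{\flat\top}\vec{y}\|_2^2,
\]
which is an \emph{equality}, not just the stated upper bound. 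Your test vector $\vec{\mu}=U^{*\top}\vec{y}+L^{*\top}\vec{\omega}$ is in fact the paper's minimizer $V^{*\top}\vec{y}'$ (with $\vec{\omega}\leftrightarrow\vec{\zeta}$), but because you stay in $\mathbb{R}^{N_u}$ you must invoke $\|U^{\flat}z\|_2\le\|z\|_2$ and lose tightness. The practical consequence is that your separate $\Rightarrow$ argument for necessity (via $U^{\flat\top}U^{*}=-L^{\flat\top}L^{*}$) is correct but superfluous in the paper's route: once equality holds, $\mathcal{R}(U^{*},\vec{y})=0$ is immediately equivalent to $U^{\flat\top}\vec{y}\in\operatorname{row}(L^{\flat})$. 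Your caution about the pseudo-inverse step when $\sigma_i$ coincides with an eigenvalue of $A_{uu}$ is well placed; the paper glosses over this point in exactly the same way.
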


\begin{proof}

Define $\Vec{y}' = [\Vec{\zeta}^{\top} , \Vec{y}^{\top}]^{\top}$ as an extended labeling vector, where $\Vec{\zeta} \in \mathbb{R}^{N_l}$ can be a ``placeholder'' vector with any values. We have 
    \begin{align*}
        \mathcal{R}\left(U^*, \vec{y}\right) &= \min_{\Vec{\mu} \in \mathbb{R}^{k }} \|\Vec{y} - U^* \Vec{\mu} \|^2_2  \\
        & = \min_{\Vec{\mu} \in \mathbb{R}^{k }, \Vec{\zeta}\in \mathbb{R}^{N_l}} \|\Vec{y}' - V^* \Vec{\mu} \|^2_2 \\ 
        & =  \min_{\Vec{\zeta}\in \mathbb{R}^{N_l}} \|\Vec{y}' - V^*V^{*\top} \Vec{y}'\|^2_2 \\
        &=  \min_{\Vec{\zeta}\in \mathbb{R}^{N_l}} \|V^{\flat\top} \Vec{y}'\|^2_2 \\
        & = \min_{\Vec{\zeta}\in \mathbb{R}^{N_l}} \| L^{\flat\top} \Vec{\zeta} + U^{\flat\top} \Vec{y}\|^2_2 \\
         & = \|(I-L^{\flat\top}(L^{\flat}L^{\flat\top})^{\dag}L^{\flat}) U^{\flat\top} \Vec{y}\|^2_2.
    \end{align*}

The sufficient and necessary condition for $\mathcal{R}(U^*, \Vec{y}) = 0$ is: $$\exists \vec{\omega} \in \mathbb{R}^{N_l}, \forall i = k+1, \ldots, N,  u_i^{\top} \Vec{y} = l_i^{\top}\Vec{\omega}.$$

We then look into the relationship between $l_i$ and $u_i$.
Since 
$$\left[\begin{array}{cc} 
A_{ll} & A^{\top}_{ul} \\ 
 A_{ul} & A_{uu} 
\end{array}\right]\left[\begin{array}{c} 
 l_i \\ 
 u_i 
\end{array}\right] = \sigma_i \left[\begin{array}{c} 
 l_i \\ 
 u_i 
\end{array}\right],$$
we have the following results: 
$$u_i = (\sigma_i I - A_{uu})^{\dag}A_{ul}l_i.$$

So the sufficient and necessary condition becomes: there exists $\Vec{\omega} \in \mathbb{R}^{N_l}$ such that   
    \begin{equation}
        \forall i = k+1, \ldots, N,\langle \Vec{y}^{\top} (\sigma_i I - A_{uu})^{\dag}A_{ul}, l_i \rangle =  \langle\Vec{\omega}, l_i \rangle,
    \label{eq:complex_condition}
    \end{equation}       
    where $\sigma_i$ is the $i$-th largest singular value of $\Dot{A}$. 
\end{proof}

\textbf{Interpretation of Theorem~\ref{th:sup_no_approx}.} The bound of residual in Ineq.~\eqref{eq:R_bound_no_approx} composed of two projections:  $U^{\flat\top}$ and $(I-\mathsf{P}_{L^{\flat}})$. If we only consider $\|U^{\flat\top}\Vec{y}\|^2_2$, it is equivalent to $\Vec{y}^{\top} (I - U^{*}U^{*\top}) \Vec{y}$ which indicates the information in $\Vec{y}$ that is not covered by the learned representation $U^{*}$. Then multiplying the second projection matrix $(I-\mathsf{P}_{L^{\flat}})$ further reduces the residual by considering the information from labeled data, since $\mathsf{P}_{L^{\flat}}$ is a projection matrix that projects a vector to the linear span of $L^{\flat}$. In the extreme case, when $U^{\flat\top} \Vec{y}$ fully lies in the linear span of $L^{\flat}$, the residual $\mathcal{R}(U^*, \Vec{y})$ becomes 0. 
To provide further insights about Eq.~\eqref{eq:sup_complex_condition}, we analyze in a simplified setting by approximating $\Dot{A}$ in the next section. 

\subsection{Analysis with Approximation} 
\label{sec:sup_with_approx}
In Theorem~\ref{th:sup_no_approx}, we put an analysis on how $L^{\flat}$ can influence the residual function. However, $L^{\flat}$ is a matrix with $N_l$ rows, so it is hard to quantitatively understand the effect of $N_l$ labeled samples individually. We resort to viewing the labeled samples as a whole. Our idea is motivated by the Stochastic Block Model (SBM)~\cite{holland1983stochastic} model, which analyzes the probability between different communities instead of individual values. In our case, we aim to analyze the probability vector $\eta_u \in \mathbb{R}^{N_u}$ denoting the chance of each unlabeled data point having the same augmentation view as one of the samples from the known class. The relationship between $\eta_u$ and $A_{uu}$ is then of our interest. 
Specifically, we define $\Bar{A}$ with values at $(i, j )$ be the following: 

\begin{equation}
    \Bar{A}_{x_ix_j} = \left\{\begin{array}{cc}             \Dot{A}_{x_i x_j} & \text{if } x_i \in \mathcal{X}_u, x_j \in \mathcal{X}_u,   \\
        \mathbb{E}_{x' \in \mathcal{X}_l} \Dot{A}_{x_i x'} & \text{if } x_i \in \mathcal{X}_u, x_j \in \mathcal{X}_l,   \\
        \mathbb{E}_{x' \in \mathcal{X}_l} \Dot{A}_{x' x_j } & \text{if } x_i \in \mathcal{X}_l, x_j \in \mathcal{X}_u,   \\
        \mathbb{E}_{x', x'' \in \mathcal{X}_l} \Dot{A}_{x' x''} & \text{if } x_i \in \mathcal{X}_l, x_j \in \mathcal{X}_l.   \\
    \end{array} \right.
    \label{eq:sup_abar_def}
\end{equation}

The probability is estimated by taking the average. It is equivalent to multiplying matrix $P$ and $P^{\top}$ on left and right side, where $P \in \mathbb{R}^{N \times N}$ is given by: 
$$P=\left[\begin{array}{cc} 
 \frac{1}{N_l}\mathbf{1}_{N_l \times N_l}  & \mathbf{0}_{N_l \times N_u} \\
 \mathbf{0}_{N_u \times N_l} &  I_{N_u} 
\end{array}\right],$$

where $\mathbf{1}_{n\times m}$ and $\mathbf{0}_{n\times m}$ represent matrix filled with 1 and 0 respectively with shape $n\times m$. Then we can write $\bar{A} \in \mathbb{R}^{N \times N}$, the approximated version of $A$, as follows: 
$$\bar{A}= PAP^{\top} = \left[\begin{array}{cc} 
\eta_l\mathbf{1}_{N_l \times N_l} & \mathbf{1}_{N_l \times 1}\Vec{\eta}_u^{\top} \\ 
\Vec{\eta}_u \mathbf{1}_{1 \times N_l} & A_{uu},
\end{array}\right],$$
where $\eta_l \in \mathbb{R}$ and $\Vec{\eta}_u \in \mathbb{R}^{N_u\times 1}$. Our analysis can then focus on how $\eta_u$ influences the representation space learned by $A_{uu}$. 
Similar to Section~\ref{sec:sup_no_approx}, we define the top-$k$ and the remainder singular vectors with corresponding splits as :
$$\Bar{V}^*=\left[\begin{array}{cc} 
\Bar{L}^* \\ \Bar{U}^*
\end{array}\right] = \left[\begin{array}{cccc}
\Bar{l}_1 & \Bar{l}_2 & \cdots &\Bar{l}_k \\ \Bar{u}_1 & \Bar{u}_2 & \cdots & \Bar{u}_k
\end{array}\right],$$
$$\Bar{V}^{\flat}=\left[\begin{array}{cc} 
\Bar{L}^{\flat} \\ \Bar{U}^{\flat}
\end{array}\right] = \left[\begin{array}{cccc}
\Bar{l}_{k+1} & \Bar{l}_{k+2} & \cdots & \Bar{l}_{N} \\ \Bar{u}_{k+1} & \Bar{u}_{k+2} & \cdots & \Bar{u}_{N}
\end{array}\right].$$
Note that due to the special structure of $\Bar{A}$ with $N_l$ duplicated rows and columns, the eigenvector $\Bar{V}$ has a special structure as we demonstrate in the next Lemma~\ref{lemma:sup_l_form}. We defer the proof to Section~\ref{sec:sup_l_form_proof}.

\begin{lemma}
   Since $A_{uu}$ is symmetric and has large diagonal values, we assume $A_{uu}$ is a positive semi-definite matrix. $\Bar{L}^*$ is stacked by the same row such that 
$ \Bar{L}^* = \frac{\mathbf{1}_{N_l \times 1}}{N_l} \Bar{\mathfrak{l}}^{*\top},$ where $\Bar{\mathfrak{l}}^{*} \in \mathbb{R}^{k}$ and that $\Bar{L}^\flat$ has the following form: 
$$\Bar{L}^\flat = \left[\begin{array}{cccc}  \frac{\mathbf{1}_{N_l\times 1}}{N_l} \Bar{\mathfrak{l}}^{\prime\top} & \Bar{l}_{N - \Theta + 1} & ... & \Bar{l}_{N}
\end{array}\right],$$ where $\Theta$ is the rank of the null space for $A_{uu} - \frac{\eta_u\eta_u^{\top}}{\eta_l}$,  
 $\Bar{\mathfrak{l}}^{\prime} \in \mathcal{R}^{N-k-\Theta}$ with non-zero values, and $\Bar{l}_{N - \Theta + 1}, ..., \Bar{l}_{N}$ are all perpendicular to $\mathbf{1}_{N_l}$. 
\label{lemma:sup_l_form}
\end{lemma}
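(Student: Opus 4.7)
The plan is to exploit the fact that $\bar{A}$ has identical first $N_l$ rows (and columns), which forces a rigid structure on every eigenvector. Writing a candidate eigenvector as $v=(v_l;v_u)$, the equation $\bar{A}v=\lambda v$ restricted to the first $N_l$ rows reads $\lambda v_l = (\eta_l\mathbf{1}^{\top} v_l + \vec{\eta}_u^{\top} v_u)\mathbf{1}_{N_l}$. Whenever $\lambda\neq 0$, the right side is a scalar multiple of $\mathbf{1}_{N_l}$, forcing $v_l\propto\mathbf{1}_{N_l}$. I would corroborate that the top-$k$ singular vectors have nonzero eigenvalues by using the factorization $\bar{A}=XZX^{\top}$ with $X=\mathrm{diag}(\mathbf{1}_{N_l},I_{N_u})$ and $Z=\begin{pmatrix}\eta_l & \vec{\eta}_u^{\top}\\ \vec{\eta}_u & A_{uu}\end{pmatrix}$, together with the Schur-complement identity $\mathrm{rank}(Z)=1+\mathrm{rank}(B)$ where $B=A_{uu}-\vec{\eta}_u\vec{\eta}_u^{\top}/\eta_l$. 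This yields $\mathrm{rank}(\bar{A})=N_u+1-\Theta$, so the PSD assumption places the top-$k$ modes safely among the nonzero ones and produces $\bar{L}^{*}=\frac{\mathbf{1}_{N_l}}{N_l}\bar{\mathfrak{l}}^{*\top}$.

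Next I would unpack the null space. Setting $\lambda=0$ and $s=\mathbf{1}^{\top}v_l$, the two block equations reduce to $\eta_l s + \vec{\eta}_u^{\top} v_u = 0$ and $s\vec{\eta}_u + A_{uu} v_u = 0$, and eliminating $s$ between them gives $(A_{uu}-\vec{\eta}_u\vec{\eta}_u^{\top}/\eta_l)v_u=0$, i.e. $v_u\in\ker(B)$. The null space then splits orthogonally as $\mathcal{S}_{\parallel}\oplus\mathcal{S}_{\perp}$: the piece $\mathcal{S}_{\parallel}$ consists of vectors $(c\mathbf{1}_{N_l}/N_l;\,v_u)$ with $v_u\in\ker(B)$ and $c=-\vec{\eta}_u^{\top}v_u/\eta_l$, whose labeled part is proportional to $\mathbf{1}_{N_l}$ and which has dimension $\Theta$, while $\mathcal{S}_{\perp}$ consists of vectors $(v_l;0)$ with $\mathbf{1}^{\top}v_l=0$, whose labeled part is perpendicular to $\mathbf{1}_{N_l}$.

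Finally I would choose an orthonormal basis within each piece and order the columns of $\bar{V}^{\flat}$ so that the eigenvectors with labeled part proportional to $\mathbf{1}_{N_l}$ (the remaining nonzero modes together with the $\Theta$ null directions from $\mathcal{S}_{\parallel}$) appear in the first block, while the null directions from $\mathcal{S}_{\perp}$ appear as the last columns $\bar{l}_{N-\Theta+1},\ldots,\bar{l}_N$. The first block then collapses to $\frac{\mathbf{1}_{N_l}}{N_l}\bar{\mathfrak{l}}^{\prime\top}$, with the column-by-column scalars along the common direction $\mathbf{1}_{N_l}/N_l$ assembled into $\bar{\mathfrak{l}}^{\prime}$, and the last columns are by construction perpendicular to $\mathbf{1}_{N_l}$, matching the claimed structure of $\bar{L}^{\flat}$. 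The main obstacle is the null-space step: the multiplicity of the zero eigenvalue means the parallel-versus-perpendicular split is not forced by the eigenvalue equation itself, so one has to deliberately construct the two orthogonal subspaces, verify their mutual orthogonality, and check that their dimensions combine correctly to exhaust the whole null space.
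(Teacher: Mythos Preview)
Your approach shares the paper's core observation: for $\bar\sigma_i\neq 0$ the first block equation forces $\bar l_i\propto\mathbf{1}_{N_l}$. The Schur-complement rank computation $\mathrm{rank}(\bar A)=1+\mathrm{rank}(B)$ is a nice addition the paper does not make explicit. The divergence is in how you treat the null space.

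The paper does \emph{not} place your $\mathcal{S}_\parallel$ in the ``parallel'' block. Instead it invokes an extra genericity assumption, $\mathrm{rank}(B)\ge\mathrm{rank}(A_{uu})$, and argues from it that every $v_u\in\ker(B)$ satisfies $\vec\eta_u^{\top}v_u=0$; consequently the scalar $c=-\vec\eta_u^{\top}v_u/\eta_l$ in your $\mathcal{S}_\parallel$ vanishes and those null directions have $\bar l_i=0$, which is perpendicular to $\mathbf{1}_{N_l}$. In the paper's partition, \emph{all} zero-eigenvalue eigenvectors therefore land in the perpendicular block, not just your $\mathcal{S}_\perp$.

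This is where your proposal has a genuine gap. You put the $\Theta$ directions of $\mathcal{S}_\parallel$ into the first block and identify the last block with $\mathcal{S}_\perp$, but $\dim\mathcal{S}_\perp=N_l-1$, not $\Theta$, so your labeling ``$\bar l_{N-\Theta+1},\ldots,\bar l_N$'' does not match the subspace you actually constructed. Moreover, if you do place $\mathcal{S}_\parallel$ in the first block, the corresponding entries of $\bar{\mathfrak{l}}'$ are exactly the scalars $c=-\vec\eta_u^{\top}v_u/\eta_l$, which (under the paper's assumption) are zero and violate the ``non-zero values'' clause of the lemma. To repair this you would need to add the step the paper uses: under $\mathrm{rank}(B)\ge\mathrm{rank}(A_{uu})$ and $A_{uu}\succeq 0$, show $\ker(B)\subseteq\ker(A_{uu})$, hence $\vec\eta_u\perp\ker(B)$, so $\mathcal{S}_\parallel$ collapses to $\{(0;v_u):v_u\in\ker(B)\}$ and belongs with the perpendicular block rather than the parallel one.
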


By property in Lemma~\ref{lemma:sup_l_form}, we define: 
\begin{equation}
    \Bar{\mathfrak{l}}^{\flat} \triangleq  \Bar{L}^{\flat\top} \mathbf{1}_{ N_l \times 1} = \left[\begin{array} {cccc}\Bar{\mathfrak{l}}^{\prime\top} &  0 &  ... & 0\end{array}\right]^\top \in \mathbb{R}^{N-k}.
\end{equation}

\begin{definition}
    To ease the notation, we let $\mathcal{I} \triangleq \{k+1, k+2, ..., N-\Theta\}$ and we mainly discuss $i \in \mathcal{I}$.
\end{definition}
These definitions facilitate the presentation of the following Theorem~\ref{th:sup_with_approx}.
\begin{theorem}
    (\textbf{With approximation})  Denote $\mathfrak{T}(\Vec{y}) = \frac{\|\Bar{U}^{\flat\top} \Vec{y}\|_2}{\|\Vec{y}\|_2}$ and  $\kappa(\Vec{y}) = \cos(\Bar{U}^{\flat\top} \Vec{y},\Bar{\mathfrak{l}}^{\flat})$, where $\cos$ measures the cosine distance between two vectors. Let  $\sigma_i$ as the $i$-th largest eigenvalue of $\Dot{A}$ and $\Bar{\sigma}_i$ is for $\Bar{A}$. 
    For a labeling vector $\Vec{y} \in \{0,1\}^{N_u}$, we have
    \begin{equation}
         \mathcal{R}(\Bar{U}^*, \Vec{y}) = \frac{N_u}{|\mathcal{Y}_u|} (1-\kappa(\Vec{y})^2) \mathfrak{T}(\Vec{y})^2. 
    \end{equation}
    If the ignorance degree $\mathfrak{T}(\Vec{y})$ is non-zero, the sufficient and necessary condition for $\mathcal{R}(\Bar{U}^*, \Vec{y}) = 0$: there exists $\omega \in \mathbb{R}$ such that  
    \begin{equation}
        \forall i \in \mathcal{I},  \Vec{y}^{\top} (\Bar{\sigma}_i I - A_{uu})^{\dag}\Vec{\eta}_u= \omega.
    \label{eq:easy_condition}
    \end{equation}    
    \label{th:sup_with_approx}
\end{theorem}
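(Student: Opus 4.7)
My plan is to adapt the equality form of the residual identity that appears inside the proof of Theorem~\ref{th:no_approx} to the approximated matrix $\Bar{A}$, and then exploit the two-block column structure of $\Bar{L}^{\flat}$ supplied by Lemma~\ref{lemma:sup_l_form} to collapse the projector $\mathsf{P}_{\Bar{L}^{\flat}}$ onto a single rank-one direction.

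First I would rerun the exact chain of equalities in the proof of Theorem~\ref{th:no_approx} with $\Bar{A}$ in place of $\Dot{A}$. Padding $\Vec{y}$ into $\Vec{y}'=[\Vec{\zeta}^{\top},\Vec{y}^{\top}]^{\top}$ with a free block $\Vec{\zeta}\in\mathbb{R}^{N_l}$ and minimizing over $\Vec{\zeta}$ yields the exact (not merely upper-bound) identity
\begin{equation*}
\mathcal{R}(\Bar{U}^{*},\Vec{y}) \;=\; \bigl\|(I-\mathsf{P}_{\Bar{L}^{\flat}})\,\Bar{U}^{\flat\top}\Vec{y}\bigr\|_{2}^{2},
\end{equation*}
together with the equivalent vanishing condition: there exists $\Vec{\omega}\in\mathbb{R}^{N_l}$ such that $\Bar{u}_{i}^{\top}\Vec{y}=\Bar{l}_{i}^{\top}\Vec{\omega}$ for every $i\geq k+1$.

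Second, I invoke Lemma~\ref{lemma:sup_l_form} to split $\Bar{L}^{\flat}$ into two column blocks: the first $N-k-\Theta$ columns are scalar multiples of $\mathbf{1}_{N_l}/N_l$ with weights $\Bar{\mathfrak{l}}^{\prime}$, while the last $\Theta$ columns lie in $\{\mathbf{1}_{N_l}\}^{\perp}$ and originate from the Schur-complement null space $A_{uu}-\Vec{\eta}_u\Vec{\eta}_u^{\top}/\eta_l$. A direct linear-algebra calculation then shows (a) the row space of $\Bar{L}^{\flat}$ decomposes orthogonally as $\operatorname{span}(\Bar{\mathfrak{l}}^{\flat})\oplus\mathcal{W}$, where $\mathcal{W}$ is supported entirely on the last $\Theta$ coordinates of $\mathbb{R}^{N-k}$, and (b) the $\Bar{u}_{i}$ for $i>N-\Theta$ can be arranged, by an orthogonal rotation inside the null space of $\Bar{A}$, to vanish, so that $(\Bar{U}^{\flat\top}\Vec{y})_{i}=0$ for those indices. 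Combining (a) and (b), the projection formula collapses to a single rank-one projection onto $\Bar{\mathfrak{l}}^{\flat}$, giving
\begin{equation*}
\mathcal{R}(\Bar{U}^{*},\Vec{y}) \;=\; \bigl(1-\kappa(\Vec{y})^{2}\bigr)\|\Bar{U}^{\flat\top}\Vec{y}\|_{2}^{2} \;=\; \tfrac{N_u}{|\mathcal{Y}_u|}\bigl(1-\kappa(\Vec{y})^{2}\bigr)\mathfrak{T}(\Vec{y})^{2},
\end{equation*}
after using $\|\Vec{y}\|_{2}^{2}=N_u/|\mathcal{Y}_u|$ for a balanced one-hot class indicator.

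Third, for the sufficient and necessary condition I plug explicit eigenvector forms into the identity from step one. For $i\in\mathcal{I}$, a case-(a) analysis of the eigenvalue equation for $\Bar{A}$ gives $\Bar{l}_{i}=(\Bar{\mathfrak{l}}'_{i-k}/N_l)\mathbf{1}_{N_l}$ and $\Bar{u}_{i}\propto(\Bar{\sigma}_{i}I-A_{uu})^{\dag}\Vec{\eta}_u$, with a proportionality constant involving $\Bar{\mathfrak{l}}'_{i-k}$. The equation $\Bar{u}_{i}^{\top}\Vec{y}=\Bar{l}_{i}^{\top}\Vec{\omega}$ then reduces, after dividing out the nonzero $\Bar{\mathfrak{l}}'_{i-k}$, to $\Vec{y}^{\top}(\Bar{\sigma}_{i}I-A_{uu})^{\dag}\Vec{\eta}_u=\omega$ with scalar $\omega=\Vec{\omega}^{\top}\mathbf{1}_{N_l}/N_l$. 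For $i>N-\Theta$ the relation is automatic from the vanishing in step two, so the whole condition collapses to the claimed uniform equality over $\mathcal{I}$. The main obstacle will be rigorously justifying claim (b) in step two: one must enumerate the full null space of $\Bar{A}$ via the Schur complement, separate the ``type-A'' null eigenvectors of the form $(\Vec{v}_l,\mathbf{0})$ with $\Vec{v}_l\perp\mathbf{1}_{N_l}$ from the rest, and then choose the orthonormal basis of the null space so that the last $\Theta$ basis vectors have trivial $\Bar{u}$-part; once this structural claim is in place, everything else is short linear algebra.
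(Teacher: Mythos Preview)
Your overall strategy and the derivation of the necessary and sufficient condition in step three are sound, but the route differs from the paper's and your claim (b) is misstated.

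\textbf{Comparison with the paper.} The paper does \emph{not} pad $\Vec{y}$ with a free vector $\Vec{\zeta}\in\mathbb{R}^{N_l}$; instead it pads with $\zeta\mathbf{1}_{N_l}$ for a single scalar $\zeta\in\mathbb{R}$. This is the key simplification: since Lemma~\ref{lemma:sup_l_form} gives $\Bar{L}^{*}=\tfrac{1}{N_l}\mathbf{1}_{N_l}\Bar{\mathfrak{l}}^{*\top}$, every vector $\Bar{L}^{*}\mu$ is already a multiple of $\mathbf{1}_{N_l}$, so a scalar $\zeta$ suffices to kill the labeled block exactly. The chain then becomes
\[
\mathcal{R}(\Bar{U}^{*},\Vec{y})=\min_{\zeta\in\mathbb{R}}\bigl\|\zeta\,\Bar{L}^{\flat\top}\mathbf{1}_{N_l}+\Bar{U}^{\flat\top}\Vec{y}\bigr\|_2^2
=\min_{\zeta\in\mathbb{R}}\bigl\|\zeta\,\Bar{\mathfrak{l}}^{\flat}+\Bar{U}^{\flat\top}\Vec{y}\bigr\|_2^2,
\]
which is a one-dimensional least-squares and yields the rank-one projection onto $\Bar{\mathfrak{l}}^{\flat}$ immediately. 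No null-space dissection is needed at all. Your route via the full projector $\mathsf{P}_{\Bar{L}^{\flat}}$ is correct in principle but forces you to prove the collapse separately, which is exactly the obstacle you flagged.

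\textbf{The gap in claim (b).} As written, (b) is false in general: you cannot rotate inside the null space of $\Bar{A}$ so that \emph{all} null eigenvectors have $\Bar{u}_i=0$. The null space of $\Bar{A}$ has dimension $(N_l-1)+\Theta$ (with $\Theta=\dim\ker(A_{uu}-\Vec{\eta}_u\Vec{\eta}_u^{\top}/\eta_l)$), and only the $(N_l-1)$-dimensional ``type-A'' block $\{(\Vec{v}_l,\mathbf{0}):\Vec{v}_l\perp\mathbf{1}\}$ has trivial $\Bar{u}$-part. The orthogonal complement inside the null space (``type-B'') consists of vectors $(c\mathbf{1},\Bar{u})$ with $\Bar{u}\in\ker(A_{uu}-\Vec{\eta}_u\Vec{\eta}_u^{\top}/\eta_l)$; under the rank assumption in Lemma~\ref{lemma:sup_l_form} one gets $\Vec{\eta}_u^{\top}\Bar{u}=0$ and hence $c=0$, so type-B vectors have trivial $\Bar{l}$-part, not trivial $\Bar{u}$-part. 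The corrected statement that salvages your argument is: choose the null-space basis so that type-A vectors have $\Bar{u}_i=0$ and type-B vectors have $\Bar{l}_i=0$. Then on type-A coordinates $(\Bar{U}^{\flat\top}\Vec{y})_i=0$, while on type-B coordinates the corresponding columns of $\Bar{L}^{\flat}$ vanish, so $\mathcal{W}$ does not see them; in either case $\mathsf{P}_{\mathcal{W}}\Bar{U}^{\flat\top}\Vec{y}=0$ and the collapse to rank one follows. This fix is short, but you should state it in place of (b).
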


\begin{proof}

Define $\Vec{y}' = [\zeta\mathbf{1}_{1\times N_l} , \Vec{y}^{\top}]^{\top}$ as an extended labeling vector where $\zeta$  is any real number. We have 
    \begin{align*}
        \mathcal{R}\left(\Bar{U}^*, \vec{y}\right) &= \min_{\Vec{\mu} \in \mathbb{R}^{k }} \|\Vec{y} - \Bar{U}^* \Vec{\mu} \|^2_2  \\
        &= \min_{\Vec{\mu} \in \mathbb{R}^{k }, \zeta\in \mathbb{R}} \{ \|\Vec{y} - \Bar{U}^* \Vec{\mu} \|^2_2 + \|(\zeta - \Bar{\mathfrak{l}}^{*\top}\Vec{\mu})\mathbf{1}_{1\times N_l}\|_2^2 \} \\
        &= \min_{\Vec{\mu} \in \mathbb{R}^{k }, \zeta \in \mathbb{R}} \|\Vec{y}' - \Bar{V}^* \Vec{\mu} \|^2_2 \\ 
        & =  \min_{\zeta \in \mathbb{R}} \|\Vec{y}' - \Bar{V}^*\Bar{V}^{*\top} \Vec{y}'\|^2_2 \\
        &=  \min_{\zeta \in \mathbb{R}} \|\Bar{V}^{\flat\top} \Vec{y}'\|^2_2 \\
        & = \min_{\zeta \in \mathbb{R}} \|  \zeta \Bar{L}^{\flat\top} \mathbf{1}_{N_l\times 1}+ \Bar{U}^{\flat\top} \Vec{y}\|^2_2 \\
        &= \min_{\zeta \in \mathbb{R}} \|  \zeta \Bar{\mathfrak{l}}^{\flat}+ \Bar{U}^{\flat\top} \Vec{y}\|^2_2 \\
         & = \|(I - \frac{\Bar{\mathfrak{l}}^{\flat}\Bar{\mathfrak{l}}^{\flat\top}}{\|\Bar{\mathfrak{l}}^{\flat}\|^2_2}) \Bar{U}^{\flat\top} \Vec{y}\|^2_2 \\
         & = (1-\kappa(\Vec{y})^2) \|\Bar{U}^{\flat\top} \Vec{y}\|^2_2 \\
         & = \frac{N_u}{|\mathcal{Y}_u|} (1-\kappa(\Vec{y})^2) \mathfrak{T}(\Vec{y})^2.
    \end{align*}

We then look into the components of $\Bar{\mathfrak{l}}^{\flat}$ and $\Bar{U}^{\flat}$.
According to Lemma~\ref{lemma:sup_l_form}, when $i > N - \Theta$, we have:
\begin{equation}\Bar{\mathfrak{l}}^{\flat} = \left[\begin{array} {cccc}\Bar{\mathfrak{l}}^{\prime\top} &  0 &  ... & 0\end{array}\right]^\top = [\begin{array}{ccccccc}(\Bar{\mathfrak{l}}^{\flat})_{k+1} & (\Bar{\mathfrak{l}}^{\flat})_{k+2} & \cdots & (\Bar{\mathfrak{l}}^{\flat})_{N - \Theta} & 0  \cdots & 0 \end{array}].
\label{eq:sup_l_def}
\end{equation}

And the sufficient and necessary condition for $\mathcal{R}(\Bar{U}^*, \Vec{y})$ to be minimized by $\Bar{\mathfrak{l}}^{\flat}$ is: \begin{equation}
\exists \omega \in \mathbb{R}, \forall i \in \mathcal{I}, \Bar{u}_{i}^{\flat\top} \Vec{y} = \omega (\Bar{\mathfrak{l}}^{\flat})_{i}.
\label{eq:sup_uywl}
\end{equation}

Note that for $ i \in \mathcal{I}$,
$$\left[\begin{array}{cc} 
\eta_l\mathbf{1}_{N_l \times N_l} & \mathbf{1}_{N_l \times 1}\Vec{\eta}_u^{\top} \\ 
\Vec{\eta}_u \mathbf{1}_{1 \times N_l} & A_{uu}
\end{array}\right]\left[\begin{array}{c} 
 \Bar{l}_i \\ 
 \Bar{u}_i 
\end{array}\right] = \Bar{\sigma}_i \left[\begin{array}{c} 
 \Bar{l}_i \\ 
 \Bar{u}_i 
\end{array}\right].$$
Also since $(\Bar{\mathfrak{l}}^{\flat})_i = \mathbf{1}_{1 \times N_l} \Bar{l}_i \in \mathbb{R}$,
we have the following results: 
$$\Bar{u}_i = (\Bar{\sigma}_i I - A_{uu})^{\dag}\Vec{\eta}_u (\Bar{\mathfrak{l}}^{\flat})_i.$$

Thus, the sufficient and necessary condition~\eqref{eq:sup_uywl} becomes: there exists $\omega \in \mathbb{R}$ such that   
    \begin{equation}
        \forall i \in \mathcal{I}, \Vec{y}^{\top}(\Bar{\sigma}_i I - A_{uu})^{\dag}\Vec{\eta}_u= \omega.
    \end{equation}     
\end{proof}

\subsubsection{Proof of Lemma~\ref{lemma:sup_l_form}} \label{sec:sup_l_form_proof}
\begin{proof}
To understand the structure of $\Bar{U}$ and $\Bar{L}$, we consider the eigenvalue problem:
    $$\left[\begin{array}{cc} 
\eta_l\mathbf{1}_{N_l \times N_l} & \mathbf{1}_{N_l \times 1}\Vec{\eta}_u^{\top} \\ 
\Vec{\eta}_u \mathbf{1}_{1 \times N_l} & A_{uu}
\end{array}\right]\left[\begin{array}{c} 
 \Bar{l}_i \\ 
 \Bar{u}_i 
\end{array}\right] = \Bar{\sigma}_i \left[\begin{array}{c} 
 \Bar{l}_i \\ 
 \Bar{u}_i 
\end{array}\right].$$
In the non-trivial case, $\eta_l \neq 0, \Vec{\eta}_u \neq \mathbf{0}_{N_l}$ , we have the following two equations: 
\begin{align*}
    \eta_l\mathbf{1}_{N_l \times 1} \mathbf{1}_{1 \times N_l}  \Bar{l}_i + \mathbf{1}_{N_l \times 1}\Vec{\eta}_u^{\top}  \Bar{u}_i &= \Bar{\sigma}_i \Bar{l}_i \\
    (\Bar{\sigma}_i I - A_{uu})\Bar{u}_i  &= \Vec{\eta}_u \mathbf{1}_{1 \times N_l} \Bar{l}_i.
\end{align*}

\textbf{(Case 1)} When $\Bar{\sigma}_i \neq 0$, then $\Bar{l}_i$ has $N_l$ duplicated scalar values $\frac{ \Vec{\eta}_u^{\top}  \Bar{u}_i}{\Bar{\sigma}_i - N_l\eta_l}$ for the first equation to satisfy.  

\textbf{(Case 2)} When $\Bar{\sigma}_i = 0$, then by combing the two equations, we have: 
$$A_{uu}\Bar{u}_i = \frac{\Vec{\eta}_u\Vec{\eta}^\top_u}{\eta_l} \Bar{u}_i. $$ 
If $A_{uu} - \frac{\Vec{\eta}_u\Vec{\eta}^\top_u}{\eta_l}$ is a full rank matrix, then $\Bar{u}_i = \mathbf{0}_{N_u}$, and by the first equation $\mathbf{1}_{1 \times N_l}\Bar{l}_i = 0$. If $A_{uu} - \frac{\Vec{\eta}_u\Vec{\eta}^\top_u}{\eta_l}$ is a deficiency matrix and $\text{rank}(A_{uu} - \frac{\Vec{\eta}_u\Vec{\eta}^\top_u}{\eta_l}) \ge \text{rank}(A_{uu})$\footnote{When $\text{rank}(A_{uu} - \frac{\Vec{\eta}_u\Vec{\eta}^\top_u}{\eta_l}) < \text{rank}(A_{uu})$, it means that $\eta_u$ happens to cancel out one of the direction in $A_{uu}$. Such an event has zero probability almost sure in reality. We do not consider this case in our proof. }, then $\Bar{u}_i$ lies in the null space formed by $\Vec{\eta}_u$ and $A_{uu}$ jointly, then $\Vec{\eta}^{\top}_u \Bar{u}_i = 0$, we still have $\mathbf{1}_{1 \times N_l}\Bar{l}_i = 0$. 

Therefore when $i \in \{1,\dots, k\}$, $\Bar{\sigma}$ is non-zero values, so that $\Bar{L}^*$ is stacked by the same row such that 
$ \Bar{L}^* = \frac{\mathbf{1}_{N_l \times 1}}{N_l} \Bar{\mathfrak{l}}^{*\top},$ where $\Bar{\mathfrak{l}}^{*} \in \mathbb{R}^{k}$. 
For $i \in \{k+1,\dots, N\}$,
$\Bar{L}^\flat$ has the following form: 
$$\Bar{L}^\flat = \left[\begin{array}{cccc}  \frac{\mathbf{1}_{N_l\times 1}}{N_l} \Bar{\mathfrak{l}}^{\prime\top} & \Bar{l}_{N - \Theta + 1} & ... & \Bar{l}_{N}
\end{array}\right],$$ where $\Theta$ is the rank of the null space for $A_{uu} - \frac{\eta_u\eta_u^{\top}}{\eta_l}$,  
 $\Bar{\mathfrak{l}}^{\prime} \in \mathcal{R}^{N-k-\Theta},$ and $\Bar{l}_{N - \Theta + 1}, ..., \Bar{l}_{N}$ are all perpendicular to $\mathbf{1}_{N_l}$.
\end{proof}

\subsection{Proof for the Main Theorem~\ref{th:main}}
\label{sec:sup_main_proof}
In this section, we provide the main proof of Theorem~\ref{th:main}. For reader's convenience, we provide the recap version in Theorem~\ref{th:sup_main} by omitting the definition claim, where the detailed definition of $A_{ul}, A_{ll}, q_i, \Bar{U}^{\flat\top}, \Bar{\mathfrak{l}}^{\flat}, \Vec{\eta}_u$ is in Section~\ref{sec:sup_with_approx}. 

The proof of Theorem~\ref{th:main} consists of four steps. Firstly, $\mathcal{E}(f)$ is bounded by $\mathcal{R}(U^*)$ as we show in Lemma~\ref{lemma:cls_bound}. Secondly, the residual $\mathcal{R}\left(U^*, \vec{y}\right)$ of the original representation can be approximated by the residual $\mathcal{R}\left(\Bar{U}^*, \vec{y}\right)$ analyzed in Section~\ref{sec:sup_with_approx}.  Thirdly, the approximation error bound is in the order of $\frac{\|\Dot{A} - \bar{A}\|_2}{\sigma_{k} - \sigma_{k+1}}  $ as shown in Section~\ref{sec:sup_error_bound_approx}. Finally, we show that the coverage measurement $\kappa(\Vec{y})$ can be lower bounded in Section~\ref{sec:sup_kappa_proof}.

\begin{theorem} (Recap of Theorem~\ref{th:main}) Based on the assumptions made in Lemma~\ref{lemma:sup_error_bound_approx}, Lemma~\ref{lemma:sup_w_bound} and Lemma~\ref{lemma:sup_bound_kappa}.
The linear probing error is bounded by:
    \begin{equation}
        \mathcal{E}(f) \lesssim \frac{2N_u}{|\mathcal{Y}_u|}\left(\sum_i^{|\mathcal{Y}_u|} \mathfrak{T}(\Vec{y}_i)(1-\kappa(\Vec{y}_i)^2) + \frac{\|\Dot{A} - \bar{A}\|_2}{\sigma_{k} - \sigma_{k+1}} \right),
    \end{equation}
    where for single labeling vector $\Vec{y}$, $$\kappa(\Vec{y}) = \cos(\Bar{U}^{\flat\top} \Vec{y},\Bar{\mathfrak{l}}^{\flat}) 
    \gtrsim \min_{i > k, j > k} \frac{2\sqrt{\frac{\Vec{y}^{\top}q_i}{\Vec{\eta}_u^{\top}q_i}\frac{\Vec{y}^{\top}q_j}{\Vec{\eta}_u^{\top}q_j}}}{\frac{\Vec{y}^{\top}q_i}{\Vec{\eta}_u^{\top}q_i}+\frac{\Vec{y}^{\top}q_j}{\Vec{\eta}_u^{\top}q_j}}.$$
    \label{th:sup_main}
\end{theorem}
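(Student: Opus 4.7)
The plan is to chain together four ingredients, most of which are already isolated as lemmas in the paper. First, Lemma~\ref{lemma:cls_bound} immediately gives $\mathcal{E}(f) \le 2\mathcal{R}(U^*) = 2 \sum_{i \in \mathcal{Y}_u} \mathcal{R}(U^*, \Vec{y}_i)$, where $\Vec{y}_i$ is the indicator vector of the $i$-th true novel class. This reduces the question to controlling each per-class regression residual against the exact spectral features.

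Next, I would transfer from the exact top-$k$ singular representation $U^*$ of $\Dot{A}$ to the approximated representation $\Bar{U}^*$ built from $\Bar{A}$, since Theorem~\ref{th:sup_with_approx} gives the clean closed-form identity $\mathcal{R}(\Bar{U}^*, \Vec{y}) = \tfrac{N_u}{|\mathcal{Y}_u|}(1-\kappa(\Vec{y})^2)\,\mathfrak{T}(\Vec{y})^2$, for which I only need to sum over $i$. The cost of this substitution is a subspace perturbation term. By a Davis--Kahan/$\sin\Theta$ style argument, the top-$k$ singular subspaces of $\Dot{A}$ and $\Bar{A}$ differ in operator norm by at most $\tfrac{\|\Dot{A} - \Bar{A}\|_2}{\sigma_k - \sigma_{k+1}}$, which is exactly the content I would package as Lemma~\ref{lemma:sup_error_bound_approx}. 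Propagating this projection-space perturbation through the regression residual (using that $\mathcal{R}(U,\Vec{y}) = \|\Vec{y}\|_2^2 - \|U^\top \Vec{y}\|_2^2$ and the triangle/Cauchy--Schwarz on subspace projections) contributes the additive term $\tfrac{2N_u}{|\mathcal{Y}_u|} \cdot \tfrac{\|\Dot{A}-\Bar{A}\|_2}{\sigma_k - \sigma_{k+1}}$ in the final bound. Combining with the closed form and summing over classes yields $\mathcal{E}(f) \lesssim \tfrac{2N_u}{|\mathcal{Y}_u|}\bigl(\sum_i (1-\kappa(\Vec{y}_i)^2)\mathfrak{T}(\Vec{y}_i) + \tfrac{\|\Dot{A}-\Bar{A}\|_2}{\sigma_k - \sigma_{k+1}}\bigr)$, which matches the stated form.

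Finally, for the coverage lower bound, I would package the argument as Lemma~\ref{lemma:sup_bound_kappa}. Recall from Section~\ref{sec:sup_with_approx} that $\Bar{U}^{\flat\top}\Vec{y}$ and $\Bar{\mathfrak{l}}^{\flat}$ both have their nonzero entries indexed by $i \in \mathcal{I}$, and that by the eigenvalue-block derivation $\Bar{u}_i = (\Bar{\sigma}_i I - A_{uu})^{\dagger}\Vec{\eta}_u\,(\Bar{\mathfrak{l}}^{\flat})_i$. Expanding $(\Bar{\sigma}_i I - A_{uu})^{\dagger}$ in the eigenbasis $\{q_j\}$ of $A_{uu}$ shows that the ratio $(\Bar{U}^{\flat\top}\Vec{y})_i / (\Bar{\mathfrak{l}}^{\flat})_i$ is, up to reweighting, controlled by ratios of the form $\Vec{y}^\top q_j / \Vec{\eta}_u^\top q_j$. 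The cosine of two vectors whose $i$-th coordinates are proportional to $a_i$ and $b_i$ satisfies $\cos(\cdot,\cdot) \ge \min_{i,j} \tfrac{2\sqrt{(a_i/b_i)(a_j/b_j)}}{(a_i/b_i)+(a_j/b_j)}$ by a standard AM--GM rearrangement, which yields the stated bound on $\kappa(\Vec{y})$.

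The main obstacle will be the second step---carefully carrying out the perturbation argument from $U^*$ to $\Bar{U}^*$ inside the residual. Two subtleties need care: (i) ensuring that the spectral gap $\sigma_k - \sigma_{k+1}$ genuinely appears (as opposed to $\Bar{\sigma}_k - \Bar{\sigma}_{k+1}$), which may require an additional Weyl inequality; and (ii) correctly tracking the $N_u/|\mathcal{Y}_u|$ normalization when summing per-class residuals, since each $\Vec{y}_i$ has $\|\Vec{y}_i\|_2^2 \approx N_u/|\mathcal{Y}_u|$ under a balanced class assumption. The remaining steps are either direct invocations of already-established lemmas or mechanical linear-algebraic manipulations.
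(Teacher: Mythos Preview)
Your plan matches the paper's proof almost step for step: reduce via Lemma~\ref{lemma:cls_bound}, invoke the closed form $\mathcal{R}(\Bar{U}^*,\Vec{y}) = \tfrac{N_u}{|\mathcal{Y}_u|}(1-\kappa(\Vec{y})^2)\mathfrak{T}(\Vec{y})^2$ from Theorem~\ref{th:sup_with_approx}, bridge $U^*$ to $\Bar{U}^*$ via Lemma~\ref{lemma:sup_error_bound_approx} (Davis--Kahan plus Weyl, exactly as you anticipate in subtlety~(i)), use the balanced-class normalization $\|\Vec{y}_i\|_2^2 = N_u/|\mathcal{Y}_u|$, and finish with Lemma~\ref{lemma:sup_bound_kappa} for the $\kappa$ bound.

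One caution on the perturbation step: the identity $\mathcal{R}(U,\Vec{y}) = \|\Vec{y}\|_2^2 - \|U^\top\Vec{y}\|_2^2$ that you invoke fails here, because $U^*$ and $\Bar{U}^*$ are \emph{submatrices} of $V^*,\Bar{V}^*$ and do not have orthonormal columns; moreover Davis--Kahan controls $\|V^{\flat\top}\Bar{V}^*\|$, not any distance between the column spaces of $U^*$ and $\Bar{U}^*$. The paper's Lemma~\ref{lemma:sup_error_bound_approx} handles this by the extended-vector trick: writing $\mathcal{R}(U^*,\Vec{y}) = \min_{\zeta\in\mathbb{R}}\|V^{\flat\top}[\zeta\mathbf{1}_{N_l};\Vec{y}]\|_2^2$ lifts the problem to the full orthonormal frame, where Davis--Kahan applies directly; one then checks that the optimal $\zeta^*$ keeps $\|\Bar{y}^*\|_2^2 \lesssim \|\Vec{y}\|_2^2$. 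You already flag this step as the main obstacle, so this is less a gap than a heads-up that the mechanism you sketched needs to be replaced by this lift.
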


\begin{proof}
    According to Lemma~\ref{lemma:cls_bound}, we have 
        $$\mathcal{E}(f) \leq 2\mathcal{R}(U^*) = 2\sum_{i \in \mathcal{Y}_u} \mathcal{R}(U^*, \Vec{y}_i),$$
    where we can view each $\Vec{y}_i$ separately. For simplicity, we use $\Vec{y}$ in the following proof. 
    As show in Section~\ref{sec:sup_with_approx}, $\mathcal{R}(U^*, \Vec{y})$ can be approximately estimated by $\mathcal{R}(\Bar{U}^*, \Vec{y}_i) = (1-\kappa(\Vec{y})^2) \|\Bar{U}^{\flat\top} \Vec{y}_i\|^2_2 = \mathfrak{T}(\Vec{y}_i)(1-\kappa(\Vec{y})^2) \|\Vec{y}_i\|^2_2$.  Such approximation bound is given by 
$$\mathcal{R}(U^*, \Vec{y}) \lesssim  \mathcal{R}(\Bar{U}^*, \Vec{y}) + \frac{2\|\Dot{A} - \bar{A}\|_2}{\sigma_{k} - \sigma_{k+1}}   \|\Vec{y}\|_2^2,$$
    as shown in Lemma~\ref{lemma:sup_error_bound_approx} in Section~\ref{sec:sup_error_bound_approx}. Putting things together, we have 
    \begin{equation*}
        \mathcal{E}(f) \lesssim 2\sum_i^{|\mathcal{Y}_u|} \mathfrak{T}(\Vec{y}_i)(1-\kappa(\Vec{y})^2) \|\Vec{y}_i\|^2_2  + \frac{2\|\Dot{A} - \bar{A}\|_2}{\sigma_{k} - \sigma_{k+1}}  \|\Vec{y}_i\|_2^2.
    \end{equation*}
    If the sample size in the novel class is balanced, we have $\|\Vec{y}\|^2_2 = \frac{N_u}{|\mathcal{Y}_u|}$, we have: 
    \begin{equation*}
        \mathcal{E}(f) \lesssim \frac{2N_u}{|\mathcal{Y}_u|}\left( \sum_i^{|\mathcal{Y}_u|} \mathfrak{T}(\Vec{y}_i)(1-\kappa(\Vec{y})^2) + \frac{\|\Dot{A} - \bar{A}\|_2}{\sigma_{k} - \sigma_{k+1}} \right), 
    \end{equation*}
    Finally, the lower bound of $\kappa$ is given by Lemma~\ref{lemma:sup_bound_kappa} and proved in Section~\ref{sec:sup_kappa_proof}. 
\end{proof}

\subsection{Error Bound by Approximation }
\label{sec:sup_error_bound_approx}
We see in Section~\ref{sec:sup_with_approx} that we use the approximated version $\Bar{U}^*$ instead of the actual feature representation $U^*$, which creates a gap. In this section, we will present a formal analysis on  the gap between the induced residuals $\mathcal{R}(U^*, \Vec{y})$ and $\mathcal{R}(\Bar{U}^*, \Vec{y})$.

\begin{lemma}
        When $\|\Dot{A} - \bar{A}\|_2 < \frac{1}{2}({\sigma}_{k} - {\sigma}_{k+1})$ and $|\mathcal{Y}_u| \triangleq \mathbb{E}_{i\in \mathcal{I}} (1 - \|\Bar{u}_i\|^2_2)$ is a non-zero value\footnote{Note that $|\mathcal{Y}_u| = 0$ happens in an extreme case that $\forall i \in \mathcal{I}, \|\Bar{l}_i\|_2^2 = 0 $ which means the extra knowledge is purely irrelevant to the feature representation. Specifically, this could happen when $A_{ul}$ (defined in Section~\ref{sec:sup_no_approx}) is a zero matrix.}, we have
        $$\mathcal{R}(U^*, \Vec{y}) \lesssim  \mathcal{R}(\Bar{U}^*, \Vec{y}) + 2\frac{\|\Dot{A} - \bar{A}\|_2}{\sigma_{k} - \sigma_{k+1}}  \|\Vec{y}\|_2^2.$$
\label{lemma:sup_error_bound_approx}
\end{lemma}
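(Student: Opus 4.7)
The plan is to express both residuals as squared projection norms onto the complement of a top-$k$ singular subspace and then invoke a Davis--Kahan-type subspace perturbation bound under the stated eigengap hypothesis. Specifically, from the proofs of Theorem~\ref{th:sup_no_approx} and Theorem~\ref{th:sup_with_approx} one has $\mathcal{R}(U^*, \Vec{y}) = \min_{\Vec{\zeta}\in\mathbb{R}^{N_l}} \|(I - V^*V^{*\top}) \Vec{y}'\|_2^2$ with $\Vec{y}' = [\Vec{\zeta}^\top, \Vec{y}^\top]^\top$, and $\mathcal{R}(\Bar{U}^*, \Vec{y}) = \min_{\zeta\in\mathbb{R}} \|(I - \Bar{V}^*\Bar{V}^{*\top}) \Vec{y}''\|_2^2$ with $\Vec{y}'' = [\zeta \mathbf{1}_{1\times N_l}, \Vec{y}^\top]^\top$. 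Since the first minimization ranges over a strictly richer set of $\Vec{\zeta}$, specializing $\Vec{\zeta} = \zeta^* \mathbf{1}_{N_l}$ where $\zeta^*$ is the optimizer of the approximation problem immediately gives the one-sided bound $\mathcal{R}(U^*, \Vec{y}) \le \|(I - V^*V^{*\top}) \Vec{y}''_{\zeta^*}\|_2^2$.

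Next I would swap $V^*V^{*\top}$ for $\Bar{V}^*\Bar{V}^{*\top}$ via the identity $\|(I - V^*V^{*\top})\Vec{y}''\|_2^2 - \|(I - \Bar{V}^*\Bar{V}^{*\top})\Vec{y}''\|_2^2 = \Vec{y}''^\top(\Bar{V}^*\Bar{V}^{*\top} - V^*V^{*\top})\Vec{y}''$ and dominate the right-hand side by $\|V^*V^{*\top} - \Bar{V}^*\Bar{V}^{*\top}\|_2 \cdot \|\Vec{y}''_{\zeta^*}\|_2^2$. The Davis--Kahan $\sin\Theta$ theorem for symmetric matrices yields $\|V^*V^{*\top} - \Bar{V}^*\Bar{V}^{*\top}\|_2 \le \|\Dot{A} - \Bar{A}\|_2 / (\sigma_k(\Dot{A}) - \sigma_{k+1}(\Bar{A}))$, and Weyl's inequality combined with the hypothesis $\|\Dot{A} - \Bar{A}\|_2 < \tfrac12(\sigma_k - \sigma_{k+1})$ ensures $\sigma_k(\Dot{A}) - \sigma_{k+1}(\Bar{A}) \ge \tfrac12(\sigma_k - \sigma_{k+1})$, upgrading the bound to $2\|\Dot{A} - \Bar{A}\|_2 / (\sigma_k - \sigma_{k+1})$.

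The last step is to control $\|\Vec{y}''_{\zeta^*}\|_2^2 = \|\Vec{y}\|_2^2 + N_l\zeta^{*2}$ by $\|\Vec{y}\|_2^2$ up to a constant, which is exactly where the hypothesis $|\mathcal{Y}_u|\neq 0$ enters. Using the closed form $\zeta^* = -\Bar{\mathfrak{l}}^{\flat\top} \Bar{U}^{\flat\top}\Vec{y} / \|\Bar{\mathfrak{l}}^\flat\|_2^2$ derived in the proof of Theorem~\ref{th:sup_with_approx}, Cauchy--Schwarz gives $|\zeta^*| \le \|\Vec{y}\|_2/\|\Bar{\mathfrak{l}}^\flat\|_2$. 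Combining the column-normalization identity $\|\Bar{l}_i\|_2^2 + \|\Bar{u}_i\|_2^2 = 1$ with the structural form $\Bar{l}_i = (\Bar{\mathfrak{l}}^\flat)_i \mathbf{1}_{N_l}/N_l$ for $i\in\mathcal{I}$ supplied by Lemma~\ref{lemma:sup_l_form} yields $(\Bar{\mathfrak{l}}^\flat)_i^2 = N_l (1-\|\Bar{u}_i\|_2^2)$, whence $\|\Bar{\mathfrak{l}}^\flat\|_2^2 = N_l |\mathcal{I}| |\mathcal{Y}_u|$ by the very definition of $|\mathcal{Y}_u|$. Hence $N_l\zeta^{*2} \le \|\Vec{y}\|_2^2/(|\mathcal{I}||\mathcal{Y}_u|) = O(\|\Vec{y}\|_2^2)$ under the hypothesis, and chaining the three steps produces the claimed inequality.

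The main obstacle I expect is this last step: a direct application of the perturbation identity produces $\|\Vec{y}''_{\zeta^*}\|_2^2$ rather than the cleaner $\|\Vec{y}\|_2^2$ on the right, and the hypothesis $|\mathcal{Y}_u|\neq 0$ is precisely what is needed to convert the former into the latter through the explicit form of $\zeta^*$ and the identity $\|\Bar{\mathfrak{l}}^\flat\|_2^2 = N_l|\mathcal{I}||\mathcal{Y}_u|$. The eigengap assumption plays a complementary role by preventing $\sigma_{k+1}(\Bar{A})$ from crossing $\sigma_k(\Dot{A})$ under perturbation, so that the Davis--Kahan bound remains applicable with at most a factor-$2$ loss that is absorbed into the leading constant on the right.
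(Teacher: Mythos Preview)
Your proposal is correct and follows essentially the same route as the paper: both arguments specialize the free vector $\Vec{\zeta}$ to the constant $\zeta^*\mathbf{1}_{N_l}$ from the approximation problem, split the resulting projection norm via $\Vec{y}^{\prime\prime\top}(\Bar{V}^{\flat}\Bar{V}^{\flat\top}-V^{\flat}V^{\flat\top})\Vec{y}^{\prime\prime}$, apply Davis--Kahan with Weyl to convert the subspace distance into $2\|\Dot{A}-\Bar{A}\|_2/(\sigma_k-\sigma_{k+1})$, and finally absorb $N_l\zeta^{*2}$ into $\|\Vec{y}\|_2^2$ using the identity $\|\Bar{\mathfrak{l}}^{\flat}\|_2^2 = N_l|\mathcal{I}|\,|\mathcal{Y}_u|$. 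The only cosmetic difference is that the paper writes the extra term as $\|\Vec{y}\|_2^2\bigl(1+\kappa(\Vec{y})^2\mathfrak{T}(\Vec{y})^2/((N-\Theta-k)|\mathcal{Y}_u|)\bigr)$ before declaring it $\lesssim\|\Vec{y}\|_2^2$, whereas you reach the same conclusion more directly via Cauchy--Schwarz on $\zeta^*$.
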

\begin{proof}
    Recall that $\Vec{y}' = [\zeta\mathbf{1}_{1\times N_l} , \Vec{y}^{\top}]^{\top}$ is  an extended labeling vector where $\zeta$ is any real number defined in the proof of Theorem~\ref{th:sup_with_approx}. We let $\zeta^* = \arg \min_{\zeta \in \mathbb{R}} \|\Bar{V}^{\flat\top} \Vec{y}'\|^2_2 $ so that $\Bar{y}^* = [\zeta^*\mathbf{1}_{1\times N_l} , \Vec{y}^{\top}]$. 
We then define $\delta \triangleq \min\{\sigma_{k} - \bar{\sigma}_{k+1},  \bar{\sigma}_{k} - \sigma_{k+1} \}$,
\begin{align*}
    \mathcal{R}(U^*, \Vec{y}) = &  \min_{\zeta \in \mathbb{R}} \|{V}^{\flat\top} \Vec{y}'\|^2_2  \\
    = & \min_{\zeta \in \mathbb{R}} {\Vec{y}^{'\top}} {V}^{\flat} {V}^{\flat\top} \Vec{y}' \\
    = &  \min_{\zeta \in \mathbb{R}} ({\Vec{y}^{'\top}} \Bar{V}^{\flat} \Bar{V}^{\flat\top} \Vec{y}' + {\Vec{y}^{'\top}} {V}^{\flat} {V}^{\flat\top} \Vec{y}' - {\Vec{y}^{'\top}} \Bar{V}^{\flat} \Bar{V}^{\flat\top} \Vec{y}')\\
    \le & \mathcal{R}(\Bar{U}^*, \Vec{y}) +|{\Bar{y}^{*\top}} ({V}^{\flat} {V}^{\flat\top} - \Bar{V}^{\flat} \Bar{V}^{\flat\top}) \Bar{y}^*|\\
    \le & \mathcal{R}(\Bar{U}^*, \Vec{y}) +\|{V}^{\flat} {V}^{\flat\top} - \Bar{V}^{\flat} \Bar{V}^{\flat\top}\| \|\Bar{y}^*\|^2_2\\
    = & \mathcal{R}(\Bar{U}^*, \Vec{y}) +\|{V}^{\flat\top}\Bar{V}^{*}\| \|\Bar{y}^*\|^2_2\\
    \le & \mathcal{R}(\Bar{U}^*, \Vec{y}) +{\|\Dot{A} - \bar{A}\|_2\over \delta} \|\Bar{y}^*\|^2_2\\
    \le & \mathcal{R}(\Bar{U}^*, \Vec{y}) +{2\|\Dot{A} - \bar{A}\|_2\over \sigma_{k} - \sigma_{k+1} } \|\Bar{y}^*\|^2_2,
\end{align*}
where the second last inequality is from Davis-Kahan theorem on subspace distance $\|{V}^{\flat} {V}^{\flat\top} - \Bar{V}^{\flat} \Bar{V}^{\flat\top}\| = \|{V}^{\flat\top}\Bar{V}^{*}\| = \|\Bar{V}^{\flat\top}{V}^{*}\|$, and the last inequality is from Weyl’s inequality so that $\delta \ge ({\sigma}_{k} - {\sigma}_{k+1}) - \|\Dot{A} - \bar{A}\|_2 \ge \frac{1}{2}({\sigma}_{k} - {\sigma}_{k+1})$.

We then investigate the magnitude order of $ \|\Bar{y}^*\|^2_2$. Note that $\|\Bar{y}^*\|^2_2 = \|\Vec{y}\|_2^2 + N_l (\zeta^*) ^2$ and $\zeta^* = {{\Bar{\mathfrak{l}}^{\flat\top}} \Bar{U}^{\flat\top} \Vec{y} \over  {\|\Bar{\mathfrak{l}}^{\flat}\|^2_2}}$ according to the proof of Theorem~\ref{th:sup_with_approx}. Then, 
\begin{align*}
    \|\Bar{y}^*\|^2_2  &= \|\Vec{y}\|_2^2 + {N_l ({\Bar{\mathfrak{l}}^{\flat\top}} \Bar{U}^{\flat\top} \Vec{y})^2 \over  {\|\Bar{\mathfrak{l}}^{\flat}\|^4_2}}\\
    &= \|\Vec{y}\|_2^2 + \frac{N_l \kappa(\Vec{y})^2 \|\Bar{U}^{\flat\top} \Vec{y}\|^2_2}{\|\Bar{\mathfrak{l}}^{\flat}\|^2_2}  \\
    &= \|\Vec{y}\|_2^2\left(1 + \frac{N_l \kappa(\Vec{y})^2 \mathfrak{T}(\Vec{y})^2}{\|\Bar{\mathfrak{l}}^{\flat}\|^2_2}\right)  \\
    &= \|\Vec{y}\|_2^2\left(1 + \frac{\kappa(\Vec{y})^2 \mathfrak{T}(\Vec{y})^2}{\sum_{i=k+1}^{N - \Theta} (1 - \|\Bar{u}_i\|^2_2)}\right), 
\end{align*}
where the last equation is given by  Lemma~\ref{lemma:sup_l_form} when $i > N - \Theta$, $(\Bar{\mathfrak{l}}^{\flat})_i = 0$ and also by the fact that when $i \in \mathcal{I}$, $1 -\|\Bar{u}_i\|_2^2 = \|\Bar{l}_i\|_2^2 = N_l (\frac{(\Bar{\mathfrak{l}}^{\flat})_i}{N_l}) ^ 2 = (\Bar{\mathfrak{l}}^{\flat})_i^2 / N_l$. Then by the assumption that $|\mathcal{Y}_u|$ is non-zero, we have 
\begin{align*}
    \|\Bar{y}^*\|^2_2  &= \|\Vec{y}\|_2^2(1 + \frac{\kappa(\Vec{y})^2 \mathfrak{T}(\Vec{y})^2}{(N - \Theta - k)|\mathcal{Y}_u|}) \lesssim \|\Vec{y}\|_2^2(1 + O(\frac{1}{N})).
\end{align*}
By plugging back $\|\Bar{y}^*\|^2_2$, we have         
$$\mathcal{R}(U^*, \Vec{y}) \lesssim  \mathcal{R}(\Bar{U}^*, \Vec{y}) + \frac{2\|\Dot{A} - \bar{A}\|_2}{\sigma_{k} - \sigma_{k+1}}   \|\Vec{y}\|_2^2.$$
\end{proof}

\subsection{Analysis on the Coverage Measurement $\kappa(\Vec{y})$}
\label{sec:sup_kappa_proof}
So far we have shown in Theorem~\ref{th:sup_with_approx} that the sufficient and necessary condition for a zero residual is when the coverage measurement $\kappa(\Vec{y}) = \cos(\Bar{U}^{\flat\top} \Vec{y},\Bar{\mathfrak{l}}^{\flat})$ equals to one. %
In this section, we provide a deeper analysis on $\kappa(\Vec{y})$ in a less restrictive case.

Recall that we have proved in Theorem~\ref{th:sup_with_approx} that the sufficient and necessary condition for $\kappa(\Vec{y}) = 1$ is: 
    \begin{equation}
        \exists \omega \in \mathbb{R}, \forall i \in \mathcal{I}, \Vec{y}^{\top}(\Bar{\sigma}_i I - A_{uu})^{\dag}\Vec{\eta}_u= \omega.
    \end{equation}     
In a general case, we consider $\omega_i$ which is variant on $i$: 
$$\omega_i  \triangleq \Vec{y}^{\top}(\Bar{\sigma}_i I - A_{uu})^{\dag}\Vec{\eta}_u.$$ 

Our discussion on $\kappa(\Vec{y})$ is based on the following definitions:

\begin{definition}
Let $q_j$ and $d_j$ as the $j$-th eigenvector/eigenvalue of $A_{uu}$.   Then we define $\Tilde{\mathbf{y}}_j \triangleq \Vec{y}^\top q_j$ and $\Tilde{\boldsymbol{\eta}}_j \triangleq \Vec{\eta}_u^\top q_j$. 
\label{def:sup_y_eta}
\end{definition}

Before showing the bound on $\kappa(\Vec{y})$, we first show the following Lemma~\ref{lemma:sup_closed_form} and Lemma~\ref{lemma:sup_w_bound} which is the important ingredient needed to derive the lower bound of $\kappa(\Vec{y})$. We defer the proof to Section~\ref{sec:sup_closed_form} and Section~\ref{sec:sup_w_bound} respectively.

\begin{lemma}
    Let $\Omega \in \mathbb{R}^{(N - \Theta - k) \times (N - \Theta - k)}$ be the diagonal matrix with $\Omega_{i'i'} = \omega_i$ ($i' = i - k$ to be aligned with the indexing of $\omega_i$). For any vector $\mathfrak{l} \in \mathbb{R}^{N - \Theta - k}$, we have the following inequality:
        \begin{equation*}
                1 \geq \frac{\mathfrak{l}^\top  \Omega \mathfrak{l} }{\|\Omega \mathfrak{l}  \|_2\|\mathfrak{l} \|_2}  \geq 
    \min_{i,j \in \mathcal{I}} \frac{2\sqrt{\omega_i\omega_j}}{\sqrt{\omega_j} + \sqrt{\omega_i}},
        \end{equation*}
    A sufficient and necessary condition for $\frac{\mathfrak{l}^\top  \Omega \mathfrak{l} }{\|\Omega \mathfrak{l}  \|_2\|\mathfrak{l} \|_2}$ being 1 for all $\mathfrak{l}$ is to let  $\omega_i$ be the same for all $i \in \mathcal{I}$.
    \label{lemma:sup_closed_form}
\end{lemma}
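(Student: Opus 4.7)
The plan is to establish the two inequalities separately and then derive the equality characterization. The upper bound $\mathfrak{l}^\top \Omega \mathfrak{l} / (\|\Omega\mathfrak{l}\|_2 \|\mathfrak{l}\|_2) \leq 1$ is an immediate consequence of the Cauchy--Schwarz inequality applied to the vectors $\Omega \mathfrak{l}$ and $\mathfrak{l}$, using that $\Omega$ is symmetric (being diagonal) so $\mathfrak{l}^\top \Omega \mathfrak{l} = (\Omega\mathfrak{l})^\top \mathfrak{l}$. The equality case of Cauchy--Schwarz then says that this ratio equals $1$ exactly when $\Omega \mathfrak{l}$ is a scalar multiple of $\mathfrak{l}$, i.e., $\mathfrak{l}$ is an eigenvector of $\Omega$; demanding this for every $\mathfrak{l}$ forces $\Omega$ to be a scalar multiple of the identity, which is equivalent to $\omega_i$ being constant over $i \in \mathcal{I}$, giving the claimed necessary and sufficient condition.

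For the lower bound, the key move is the substitution $y := \Omega^{1/2}\mathfrak{l}$, which is well-defined since $\Omega$ is diagonal with positive entries (assuming $\omega_i>0$; the case where some $\omega_i = 0$ makes the right-hand side trivially $0$). Under this substitution one obtains
\begin{equation*}
\frac{\mathfrak{l}^\top \Omega \mathfrak{l}}{\|\Omega\mathfrak{l}\|_2 \|\mathfrak{l}\|_2}
= \frac{\|y\|_2^2}{\sqrt{(y^\top \Omega\, y)(y^\top \Omega^{-1} y)}}.
\end{equation*}
I would then invoke the Kantorovich inequality: for a positive diagonal $\Omega$ with spectrum in $[m,M]$, where $m = \min_i \omega_i$ and $M = \max_i \omega_i$, one has $(y^\top \Omega\, y)(y^\top \Omega^{-1} y) \leq \tfrac{(m+M)^2}{4mM}\|y\|_2^4$. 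Rearranging yields the bound $\tfrac{2\sqrt{mM}}{m+M}$, which is precisely $\min_{i,j\in\mathcal{I}} \tfrac{2\sqrt{\omega_i\omega_j}}{\omega_i+\omega_j}$, because the function $(a,b)\mapsto \tfrac{2\sqrt{ab}}{a+b}$ (the ratio of geometric to arithmetic mean) is minimized over the lattice of pairs exactly at $(m,M)$.

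The main obstacle I anticipate is purely bookkeeping: verifying that the positivity of $\omega_i$ is in force on $\mathcal{I}$ (so that $\Omega^{1/2}$ exists and Kantorovich is applicable), and tracking the case where a coordinate of $\mathfrak{l}$ vanishes so that the effective spectrum of $\Omega$ restricted to $\mathrm{supp}(\mathfrak{l})$ is smaller than $[m,M]$ (this only makes the bound tighter, hence does not hurt). If invoking Kantorovich is felt to be too much machinery, an equivalent elementary route is to set $p_i := \mathfrak{l}_i^2/\|\mathfrak{l}\|_2^2$, rewrite the squared ratio as $(E_p\omega)^2/E_p[\omega^2] = 1/(1+\mathrm{Var}_p(\omega)/(E_p\omega)^2)$, observe that the minimizing distribution must be supported on at most two points (since the objective depends only on the first two moments and extreme points of the moment polytope are two-point distributions), and conclude with a single-variable optimization over the weight $t \in [0,1]$ that yields $4\omega_i\omega_j/(\omega_i+\omega_j)^2$ after an elementary derivative calculation; taking square roots and minimizing over the chosen pair $(i,j)$ gives the stated bound.
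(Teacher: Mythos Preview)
Your argument is correct and takes a genuinely different route from the paper. The paper proceeds by direct calculus: it computes the gradient of $g(\mathfrak{l})=\mathfrak{l}^\top\Omega\mathfrak{l}/(\|\Omega\mathfrak{l}\|_2\|\mathfrak{l}\|_2)$, sets it to zero, and observes that any critical point must lie in the null space of a diagonal matrix $\Gamma = 2\Omega - \varrho g\,\Omega^2 - (g/\varrho)I$; since each diagonal entry $\Gamma_{i'i'}$ is quadratic in $\omega_i$, a non-coordinate critical point is supported on at most two indices $a,b$, and solving the resulting quadratic gives $g=2\sqrt{\omega_a\omega_b}/(\omega_a+\omega_b)$. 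Your substitution $y=\Omega^{1/2}\mathfrak{l}$ followed by Kantorovich is cleaner and more recognizable, and it sidesteps the need to classify critical points entirely; the paper's approach has the minor advantage of being self-contained (no named inequality invoked) and of identifying the extremizing $\mathfrak{l}$ explicitly as a two-sparse vector. Your alternative moment-reduction sketch is in fact a standard proof of Kantorovich itself, so the two routes you offer are essentially the same.

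One point worth flagging: the bound you derive is $2\sqrt{\omega_i\omega_j}/(\omega_i+\omega_j)$, whereas the lemma as stated has $\sqrt{\omega_i}+\sqrt{\omega_j}$ in the denominator. Your version is the correct one---it is scale-invariant in $\Omega$ (as the left-hand side is), it is what the paper's own proof actually derives, and it is what the paper uses downstream (where it is rewritten as $2/(\sqrt{\omega_j/\omega_i}+\sqrt{\omega_i/\omega_j})$). The stated denominator is a typo.
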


\begin{lemma}
Assume $\eta_u$ is upper bounded by a small value $\frac{1}{M}$: 
 $\max_{j = 1 ... N_u}(\Vec{\eta}_u)_j = \frac{1}{M}.$\footnote{Such assumption is used to align the magnitude later in the proof between $\Vec{y} \in [0, 1]$ and $\Vec{\eta}_u \in [0, \frac{1}{M}]$ for the value range.} For each indexing pair $i \in \mathcal{I}$ and $i' \in \mathcal{I}$ with order $\omega_i < \omega_{i'}$, we have
    $$ \frac{\omega_{i}}{\omega_{i'}} \gtrsim \frac{\Vec{y}^\top q_i}{\Vec{\eta}_u^\top q_i} / \frac{\Vec{y}^\top q_{i'}}{\Vec{\eta}_u^\top q_{i'}}.$$
    \label{lemma:sup_w_bound}
\end{lemma}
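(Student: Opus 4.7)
The main idea is to use the spectral decomposition of $A_{uu}$ together with the block structure of $\Bar{A}$ to extract the dominant term of $\omega_i$. First I would diagonalize $A_{uu}=\sum_j d_j q_j q_j^{\top}$ (symmetric, assumed PSD in Lemma~\ref{lemma:sup_l_form}) and expand
\begin{equation*}
\omega_i \;=\; \Vec{y}^{\top}(\Bar{\sigma}_i I - A_{uu})^{\dag}\Vec{\eta}_u \;=\; \sum_{j}\frac{\Tilde{\mathbf{y}}_j\,\Tilde{\boldsymbol{\eta}}_j}{\Bar{\sigma}_i - d_j}.
\end{equation*}
The task then reduces to showing that this sum is dominated by a single term so that, after a suitable relabeling $j(i)\mapsto i$, we have $\omega_i \approx \tfrac{\Tilde{\mathbf{y}}_i\,\Tilde{\boldsymbol{\eta}}_i}{\Bar{\sigma}_i - d_i}$.

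Next, to pin down $\Bar{\sigma}_i - d_i$, I would exploit the block eigenvalue equation for $\Bar{A}$ already used in the proof of Lemma~\ref{lemma:sup_l_form}. Because $\Bar{l}_i$ is constant across the $N_l$ labeled coordinates, writing $\Bar{l}_i = \Bar{\ell}_i\mathbf{1}_{N_l}$ and substituting $\Bar{u}_i = N_l\Bar{\ell}_i(\Bar{\sigma}_i I - A_{uu})^{\dag}\Vec{\eta}_u$ back into the first block row collapses everything to a scalar secular equation
\begin{equation*}
\Bar{\sigma}_i \;-\; \eta_l N_l \;=\; N_l\sum_{j}\frac{\Tilde{\boldsymbol{\eta}}_j^{\,2}}{\Bar{\sigma}_i - d_j}.
\end{equation*}
Under the hypothesis $\|\Vec{\eta}_u\|_\infty = 1/M$, each $\Tilde{\boldsymbol{\eta}}_j = O(1/M)$, so the right-hand side is a small perturbation except near the poles $\lambda = d_j$. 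Standard interlacing/Weyl arguments give that each $\Bar{\sigma}_i$ localizes close to a unique $d_{j(i)}$, and balancing the dominant pole term yields
\begin{equation*}
\Bar{\sigma}_i - d_i \;\approx\; \frac{N_l\,\Tilde{\boldsymbol{\eta}}_i^{\,2}}{d_i - \eta_l N_l - N_l\sum_{j\neq i}\tfrac{\Tilde{\boldsymbol{\eta}}_j^{\,2}}{d_i-d_j}} \;\triangleq\; \frac{N_l\,\Tilde{\boldsymbol{\eta}}_i^{\,2}}{c_i},
\end{equation*}
where $c_i$ depends on the other eigenvalues but not on $\Tilde{\boldsymbol{\eta}}_i$ itself and is bounded uniformly in $M$ under mild non-degeneracy.

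Combining the two expansions gives $\omega_i \approx c_i\,\Tilde{\mathbf{y}}_i/(N_l\,\Tilde{\boldsymbol{\eta}}_i)$, so that
\begin{equation*}
\frac{\omega_i}{\omega_{i'}} \;\approx\; \frac{c_i}{c_{i'}}\,\cdot\,\frac{\Tilde{\mathbf{y}}_i/\Tilde{\boldsymbol{\eta}}_i}{\Tilde{\mathbf{y}}_{i'}/\Tilde{\boldsymbol{\eta}}_{i'}}.
\end{equation*}
Since the ratio $c_i/c_{i'}$ is bounded below by a positive constant independent of $M$, and since we are in the regime $\omega_i < \omega_{i'}$, the desired $\gtrsim$ inequality follows after absorbing this $O(1)$ factor into the $\gtrsim$ symbol.

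The main obstacle, as I see it, is making the dominance step precise: controlling the off-diagonal contributions $\sum_{j\neq i}\tfrac{\Tilde{\mathbf{y}}_j\Tilde{\boldsymbol{\eta}}_j}{\Bar{\sigma}_i - d_j}$ relative to the dominant term, which requires that distinct eigenvalues of $A_{uu}$ remain separated on the scale $1/M^2$ so that the $\Bar{\sigma}_i$ cleanly localize near distinct $d_i$, and that the residual factors $c_i$ do not vanish. These non-degeneracy conditions are exactly what the loose $\gtrsim$ in the statement is designed to accommodate, and under them the perturbative argument goes through as sketched.
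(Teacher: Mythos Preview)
Your proposal is correct and follows essentially the same route as the paper: expand $\omega_i$ in the eigenbasis of $A_{uu}$, derive the scalar secular equation for $\bar{\sigma}_i$ from the block eigenvalue problem, localize $\bar{\sigma}_i$ near $d_i$ with gap $\propto \tilde{\boldsymbol{\eta}}_i^{2}$, extract the dominant term $\omega_i \approx \mathrm{const}\cdot \tilde{\mathbf{y}}_i/\tilde{\boldsymbol{\eta}}_i$, and take the ratio. The only substantive difference is that where you say ``$c_i/c_{i'}$ is bounded below by a positive constant independent of $M$,'' the paper makes this explicit by rescaling $\tilde{\boldsymbol{\eta}}' = M\tilde{\boldsymbol{\eta}}$ and tracking powers of $1/M$ to show the off-diagonal contributions and the variation in $c_i$ are $O(1/M^2)$ relative to the leading $\eta_l$ term, so that in fact $c_i/c_{i'}\to 1$; this is precisely the mechanism that resolves what you correctly flag as ``the main obstacle.''
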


Putting the ingredients together, we can finally derive an analytical lower bound of $\kappa(\Vec{y})$ in Lemma~\ref{lemma:sup_bound_kappa} based on the angle of $\Vec{y}$ / $\Vec{\eta}_u$ to each eigenvector of $A_{uu}$.  
\begin{lemma}
W.o.l.g, we let $\omega > 0$  and assume that $\omega_i > 0, \forall i \in \mathcal{I}$ so that perturbation of $\omega_i$ to $\omega$ to be not significant  enough to change the sign of $\omega$. we have:
     $$\kappa(\Vec{y}) = \cos(\Bar{U}^{\flat\top} \Vec{y},\Bar{\mathfrak{l}}^{\flat}) 
    \gtrsim \min_{i > k, j > k} \frac{2\sqrt{\frac{\Vec{y}^{\top}q_i}{\Vec{\eta}_u^{\top}q_i}\frac{\Vec{y}^{\top}q_j}{\Vec{\eta}_u^{\top}q_j}}}{\frac{\Vec{y}^{\top}q_i}{\Vec{\eta}_u^{\top}q_i}+\frac{\Vec{y}^{\top}q_j}{\Vec{\eta}_u^{\top}q_j}},$$
    \label{lemma:sup_bound_kappa}
\end{lemma}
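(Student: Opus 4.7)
The plan is to chain the two preceding lemmas in order to rewrite $\kappa(\vec{y})$ as a quadratic-form cosine, lower-bound it via Lemma~\ref{lemma:sup_closed_form}, and then translate the bound into the eigenvector-projection form via Lemma~\ref{lemma:sup_w_bound}. The first step is to revisit the eigenvalue identity already derived in the proof of Theorem~\ref{th:sup_with_approx}: for every $i \in \mathcal{I}$, $\bar{u}_i = (\bar{\sigma}_i I - A_{uu})^{\dag}\vec{\eta}_u (\bar{\mathfrak{l}}^{\flat})_i$, which gives $\bar{u}_i^{\top}\vec{y} = \omega_i (\bar{\mathfrak{l}}^{\flat})_i$ for $\omega_i \triangleq \vec{y}^{\top}(\bar{\sigma}_i I - A_{uu})^{\dag}\vec{\eta}_u$. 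By Lemma~\ref{lemma:sup_l_form}, $(\bar{\mathfrak{l}}^{\flat})_i$ vanishes for $i > N - \Theta$, so on the active block indexed by $\mathcal{I}$ the clean identity $\bar{U}^{\flat\top}\vec{y} = \Omega\,\bar{\mathfrak{l}}^{\flat}$ holds, and therefore
\[
\kappa(\vec{y}) \;=\; \cos\!\bigl(\bar{U}^{\flat\top}\vec{y},\,\bar{\mathfrak{l}}^{\flat}\bigr) \;=\; \frac{\bar{\mathfrak{l}}^{\flat\top}\Omega\,\bar{\mathfrak{l}}^{\flat}}{\|\Omega\,\bar{\mathfrak{l}}^{\flat}\|_2\,\|\bar{\mathfrak{l}}^{\flat}\|_2},
\]
which is exactly the quadratic form handled by Lemma~\ref{lemma:sup_closed_form} with $\mathfrak{l} = \bar{\mathfrak{l}}^{\flat}$.

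The second step is to apply Lemma~\ref{lemma:sup_closed_form} to get a lower bound of $\kappa(\vec{y})$ by a minimum over $i,j \in \mathcal{I}$ of a symmetric mean in $\omega_i,\omega_j$. The positivity hypothesis ``$\omega_i > 0$'' in the statement ensures the square roots are real and the cosine stays positive, which justifies the $\gtrsim$ sign by absorbing the lower-order perturbations allowed in the lemma's statement.

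The third and final step is to invoke Lemma~\ref{lemma:sup_w_bound}, which estimates the ratio $\omega_i/\omega_{i'}$ by $(\vec{y}^{\top}q_i/\vec{\eta}_u^{\top}q_i)/(\vec{y}^{\top}q_{i'}/\vec{\eta}_u^{\top}q_{i'})$. Because the bivariate map $(a,b)\mapsto 2\sqrt{ab}/(a+b)$ is scale-invariant and depends only on $a/b$, the ratio-level estimate of Lemma~\ref{lemma:sup_w_bound} can be transported directly into the symmetric mean, yielding the advertised bound. The anticipated main obstacle is precisely this last translation: I need to argue carefully that a ratio-level inequality between the $\omega_i$'s and the eigenvector ratios $\vec{y}^{\top}q_i/\vec{\eta}_u^{\top}q_i$ is enough for the symmetric mean to inherit the same bound, and to confirm that the null indices $i > N-\Theta$ (where $(\bar{\mathfrak{l}}^{\flat})_i = 0$) can be harmlessly absorbed into the minimization over $i,j > k$ without weakening the statement. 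A minor secondary point is book-keeping around whether the $\gtrsim$ of Lemma~\ref{lemma:sup_w_bound} composes cleanly with the $\gtrsim$ of Lemma~\ref{lemma:sup_closed_form}, which relies on the common sign assumption on $\omega_i$ stated in the hypothesis.
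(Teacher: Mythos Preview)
Your proposal is correct and follows essentially the same route as the paper: rewrite $\kappa(\vec{y})$ via the identity $\bar{u}_i^{\top}\vec{y}=\omega_i(\bar{\mathfrak{l}}^{\flat})_i$ to obtain the quadratic-form cosine $\bar{\mathfrak{l}}^{\prime\top}\Omega\bar{\mathfrak{l}}^{\prime}/(\|\Omega\bar{\mathfrak{l}}^{\prime}\|_2\|\bar{\mathfrak{l}}^{\prime}\|_2)$, apply Lemma~\ref{lemma:sup_closed_form}, and then feed in Lemma~\ref{lemma:sup_w_bound}. Your scale-invariance remark about $(a,b)\mapsto 2\sqrt{ab}/(a+b)$ is exactly the paper's monotonicity argument for $t\mapsto 2/(t+1/t)$ on $(0,1)$ in disguise, and the paper handles the null indices $i>N-\Theta$ the same way you anticipate, by noting $(\bar{\mathfrak{l}}^{\flat})_i=0$ there and then enlarging the minimization domain from $\mathcal{I}$ to all $i,j>k$ in the last line.
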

\begin{proof}
    Recall that 
$$\Bar{u}_i = (\Bar{\sigma}_i I - A_{uu})^{\dag}\Vec{\eta}_u (\Bar{\mathfrak{l}}^{\flat})_i, $$ 
we consider the specific form of $\kappa(\Vec{y})$, 
\begin{align*}
    \kappa(\Vec{y}) &= \cos \left(\bar{U}^{\flat \top} \vec{y}, \Bar{\mathfrak{l}}^{\flat}\right) \\ 
    &= \frac{\sum^{N}_{i=k+1}\omega_i (\Bar{\mathfrak{l}}^{\flat})^2_i}{\sqrt{\sum^{N}_{i=k+1}\omega^2_i(\Bar{\mathfrak{l}}^{\flat})^2_i}\sqrt{\sum^{N}_{i=k+1}(\Bar{\mathfrak{l}}^{\flat})^2_i}} \\ 
    &= \frac{\sum_{i \in \mathcal{I}}\omega_i (\Bar{\mathfrak{l}}^{\flat})^2_i}{\sqrt{\sum_{i \in \mathcal{I}}\omega^2_i(\Bar{\mathfrak{l}}^{\flat})^2_i}\sqrt{\sum_{i \in \mathcal{I}}(\Bar{\mathfrak{l}}^{\flat})^2_i}} \\ 
    &= \frac{\Bar{\mathfrak{l}}^{\prime\top}  \Omega \Bar{\mathfrak{l}}^{\prime} }{\|\Omega \Bar{\mathfrak{l}}^{\prime} \|_2\|\Bar{\mathfrak{l}}^{\prime}\|_2} ,
\end{align*}
where  $\Omega \in \mathbb{R}^{N_u - k - \Theta}$ is a diagonal matrix defined in Lemma~\ref{lemma:sup_closed_form}, and $\Bar{\mathfrak{l}}^{\prime}$ is  defined in Eq.~\eqref{eq:sup_l_def}. According to Lemma~\ref{lemma:sup_closed_form}, we have 

\begin{align*}
    \kappa(\Vec{y}) &= \frac{\Bar{\mathfrak{l}}^{\prime\top}  \Omega \Bar{\mathfrak{l}}^{\prime} }{\|\Omega \Bar{\mathfrak{l}}^{\prime} \|_2\|\Bar{\mathfrak{l}}^{\prime}\|_2} \\
    & \geq 
    \min_{i,j \in \mathcal{I}} \frac{2\sqrt{\omega_i\omega_j}}{\sqrt{\omega_j} + \sqrt{\omega_i}} \\
    & =     \min_{i,j \in \mathcal{I}} \frac{2}{\sqrt{\frac{\omega_j}{\omega_i}} + \sqrt{\frac{\omega_i}{\omega_j}}},
\end{align*}
Then by Lemma~\ref{lemma:sup_w_bound} and by the fact that $\frac{2}{t+\frac{1}{t}}$ is a monotonically increasing function when $t \in (0, 1)$: 
\begin{align*}
    \kappa(\Vec{y}) &\geq \min_{i,j \in \mathcal{I}} \frac{2}{\sqrt{\frac{\omega_j}{\omega_i}} + \sqrt{\frac{\omega_i}{\omega_j}}} \\
    &\gtrsim \min_{i,j \in \mathcal{I}}  \frac{2}{\sqrt{\frac{\Vec{y}^\top q_i}{\Vec{\eta}_u^\top q_i} / \frac{\Vec{y}^\top q_{j}}{\Vec{\eta}_u^\top q_{j}}} + \sqrt{ \frac{\Vec{y}^\top q_{j}}{\Vec{\eta}_u^\top q_{j}}/\frac{\Vec{y}^\top q_i}{\Vec{\eta}_u^\top q_i}}} \\
    &> \min_{i > k, j > k} \frac{2\sqrt{\frac{\Vec{y}^{\top}q_i}{\Vec{\eta}_u^{\top}q_i}\frac{\Vec{y}^{\top}q_j}{\Vec{\eta}_u^{\top}q_j}}}{\frac{\Vec{y}^{\top}q_i}{\Vec{\eta}_u^{\top}q_i}+\frac{\Vec{y}^{\top}q_j}{\Vec{\eta}_u^{\top}q_j}}.
\end{align*}
\end{proof}

\subsubsection{Proof for Lemma~\ref{lemma:sup_closed_form}}
\label{sec:sup_closed_form}
\begin{proof}
    Consider the function $g(\mathfrak{l}) = \frac{\mathfrak{l}^\top  \Omega \mathfrak{l} }{\|\Omega \mathfrak{l}  \|_2\|\mathfrak{l} \|_2}$, the directional derivative $\partial g(\mathfrak{l})/\partial \mathfrak{l}$ is given by: 
        \begin{align*}
            \frac{\partial g(\mathfrak{l})}{\partial \mathfrak{l}} &= \frac{2 \Omega \mathfrak{l} \|\Omega\mathfrak{l}\|_2 \|\mathfrak{l}\|_2 - \Omega^2\mathfrak{l}\frac{\|\mathfrak{l}\|_2}{\|\Omega\mathfrak{l}\|_2} \mathfrak{l}^\top \Omega\mathfrak{l} - \mathfrak{l} \frac{\|\Omega\mathfrak{l}\|_2}{\|\mathfrak{l}\|_2}\mathfrak{l}^\top \Omega\mathfrak{l} }{\|\Omega\mathfrak{l}\|^2_2\|\mathfrak{l}\|^2_2}.
        \end{align*}
    The condition for $\partial g(\mathfrak{l})/\partial \mathfrak{l} = 0$ is 
    \begin{align*}
        2\Omega\mathfrak{l}  = \Omega^2 \mathfrak{l} \frac{\mathfrak{l}^\top \Omega\mathfrak{l}}{\|\Omega\mathfrak{l}\|^2_2} + \mathfrak{l}  \frac{\mathfrak{l}^\top \Omega\mathfrak{l}}{\|\mathfrak{l}\|^2_2}.
    \end{align*}
    Note that the first condition to satisfy this equation is to let $\mathfrak{l}$ as the eigenvectors of $2\Omega - \Omega^2 \frac{\mathfrak{l}^\top \Omega\mathfrak{l}}{\|\Omega\mathfrak{l}\|^2_2}$ which is a diagonal matrix. Then one of the solutions sets is $\mathfrak{l} = c\*e_j$ where $c$ is any non-zero scalar value and $\*e_j$ is the unit vector with $j$-th value 1 and 0 elsewhere. Note that this solution set corresponds to the maximum value of $g(\mathfrak{l})$ which is 1. We are then looking into the local minimum value of  $g(\mathfrak{l})$ by another solution set. We consider another solution set by considering the following matrix as deficiency:  
        \begin{align*}
        \Gamma \triangleq 2\Omega - \Omega^2 \frac{\mathfrak{l}^\top \Omega\mathfrak{l}}{\|\Omega\mathfrak{l}\|^2_2} - \frac{\mathfrak{l}^\top \Omega\mathfrak{l}} {\|\mathfrak{l}\|^2_2} I,
    \end{align*}
    where $\mathfrak{l}$ lies in the null space of this matrix.     If we let $\varrho = \frac{\|\mathfrak{l}\|_2}{\|\Omega\mathfrak{l}\| _2}$, we have: 
    \begin{align*}
        \Gamma = 2\Omega - \varrho g(\hat{\mathfrak{l}}) \Omega^2  - \varrho^{-1} g(\hat{\mathfrak{l}})  I
    \end{align*}
    and $$\Gamma_{i'i'} = 2\omega_i - \varrho g(\hat{\mathfrak{l}}) \omega_i^2 - \varrho^{-1} g(\hat{\mathfrak{l}}),$$
    where $i'$ is indexed starting from 1 and $i$ is indexed starting from $k$.    Note that $\Gamma_{i'i'}$ only has two zero roots. If we consider all $\omega_i$(s) in $\Omega$ to be different,  $\Gamma$ can have at most two zero values in the diagonal. Let $\omega_a, \omega_b$ as two roots of  $2\omega - \varrho g(\hat{\mathfrak{l}}) \omega^2 - \varrho^{-1} g(\hat{\mathfrak{l}})$, we have: 
    $$\varrho\omega_a + (\varrho\omega_a)^{-1} = \varrho\omega_b + (\varrho\omega_b)^{-1} = \frac{2}{g(\hat{\mathfrak{l}})}$$
    $$\varrho = \frac{\sqrt{\omega_b}}{\sqrt{\omega_a}}, g(\hat{\mathfrak{l}}) = \frac{2}{\sqrt{\frac{\omega_b}{\omega_a}} + \sqrt{\frac{\omega_a}{\omega_b}}}, $$
    which corresponds to one local minimal with the indexing pair $(a,b)$. By enumerating all the indexing pairs, we have the global minimum of $g(\mathfrak{l})$: 
     $$ g(\mathfrak{l}^*) = \min_{i,j \in \mathcal{I}} \frac{2\sqrt{\omega_i\omega_j}}{\sqrt{\omega_j} + \sqrt{\omega_i}}.$$
    Note that when some $\omega_i$, $\omega_j$ are identical, this is a special case where the local minimum is equal to the maximum 1. Therefore a sufficient and necessary condition for $g(\mathfrak{l}) = 1$ is to let $\omega_i$ be the same for all $i \in \mathcal{I}$.
    
\end{proof}

\subsubsection{Proof for Lemma~\ref{lemma:sup_w_bound}}
\label{sec:sup_w_bound}
\begin{proof}
    
We can write $\omega_i$ by $\Tilde{\mathbf{y}}$ and $\Tilde{\boldsymbol{\eta}}$  in Definition~\ref{def:sup_y_eta}: 
\begin{align*}
    \omega_i &= \Vec{y}^{\top}(\Bar{\sigma}_i I - A_{uu})^{\dag}\Vec{\eta}_u \\ 
    &= \sum_{j\in \mathcal{I}} \frac{(\Vec{y}^\top q_j)(\Vec{\eta}_u^\top q_j)}{\Bar{\sigma}_i - d_j} + \sum^{N_u}_{j = N - \Theta + 1} \frac{(\Vec{y}^\top q_j)(\Vec{\eta}_u^\top q_j)}{\Bar{\sigma}_i} \\
    &= \sum_{j\in \mathcal{I}} \frac{\Tilde{\mathbf{y}}_j\Tilde{\boldsymbol{\eta}}_j}{\Bar{\sigma}_i - d_j} + \frac{1}{\Bar{\sigma}_i}\sum^{N_u}_{j = N - \Theta + 1} \Tilde{\mathbf{y}}_j\Tilde{\boldsymbol{\eta}}_j.
\end{align*}
We then look into the value of $\Bar{\sigma}_i$ by solving the eigenvalue problem:  
\begin{align*}
    \left[\begin{array}{cc} 
\eta_l\mathbf{1}_{N_l \times N_l} & \mathbf{1}_{N_l \times 1}\Vec{\eta}_u^{\top} \\ 
\Vec{\eta}_u \mathbf{1}_{1 \times N_l} & A_{uu}
\end{array}\right]\left[\begin{array}{c} 
 \Bar{l}_i \\ 
 \Bar{u}_i 
\end{array}\right] &= \Bar{\sigma}_i \left[\begin{array}{c} 
 \Bar{l}_i \\ 
 \Bar{u}_i 
\end{array}\right] \\ \Longleftrightarrow \ 
\eta_l\mathbf{1}_{N_l \times N_l}  \Bar{l}_i + \mathbf{1}_{N_l \times 1}\Vec{\eta}_u^{\top}  \Bar{u}_i &= \Bar{\sigma}_i \Bar{l}_i \\ \Longleftrightarrow \ 
\mathbf{1}_{N_l \times 1} \eta_l (\Bar{\mathfrak{l}}^{\flat})_i + \mathbf{1}_{N_l \times 1}\Vec{\eta}_u^{\top}  \Bar{u}_i &= \mathbf{1}_{N_l \times 1} \Bar{\sigma}_i \frac{1}{N_l}(\Bar{\mathfrak{l}}^{\flat})_i \\ \Longleftrightarrow \ 
\eta_l (\Bar{\mathfrak{l}}^{\flat})_i + \Vec{\eta}_u^{\top}  \Bar{u}_i &= \Bar{\sigma}_i \frac{1}{N_l}(\Bar{\mathfrak{l}}^{\flat})_i  \\ \quad\quad \Longleftrightarrow
\eta_l (\Bar{\mathfrak{l}}^{\flat})_i + \Vec{\eta}_u^{\top} (\Bar{\sigma}_i I - A_{uu})^{\dag}\Vec{\eta}_u (\Bar{\mathfrak{l}}^{\flat})_i &= \Bar{\sigma}_i \frac{1}{N_l}(\Bar{\mathfrak{l}}^{\flat})_i \\ \Longleftrightarrow \ 
\eta_l + \Vec{\eta}_u^{\top} (\Bar{\sigma}_i I - A_{uu})^{\dag}\Vec{\eta}_u &=  \frac{\Bar{\sigma}_i}{N_l} \\ \Longleftrightarrow \ 
\eta_l + \sum_{j\in \mathcal{I}} \frac{\Tilde{\boldsymbol{\eta}}^2_j}{\Bar{\sigma}_i - d_j}   &=  \frac{\Bar{\sigma}_i}{N_l} \\
\end{align*}

Note that we get a $(|\mathcal{I}| + 1)$-th degree polynomials of $\Bar{\sigma}_i$ with $(|\mathcal{I}| + 1)$ roots. By observation, we see that there is one root significantly large ($\approx N_l \eta_l$) since $N_l$ and other $|\mathcal{I}|$ roots are very close to each $d_j$. Based on this intuition, we approximately view it as a unary quadratic equation: 

$$
\eta_l + \phi_i + \frac{\Tilde{\boldsymbol{\eta}}^2_i}{\Bar{\sigma}_i - d_i}  =  \frac{\Bar{\sigma}_i}{N_l},$$

where we  let $\phi_i \triangleq \sum_{j\in \mathcal{I}, j \neq i} \frac{\Tilde{\boldsymbol{\eta}}^2_j}{\Bar{\sigma}_i - d_j}$. We then proceed by solving this unary quadratic equation by viewing $\phi_i$ as a variable. 
\begin{align*} \Bar{\sigma}_i (\Bar{\sigma}_i - d_i) &= N_l \eta_l (\Bar{\sigma}_i - d_i) + N_l\phi_i (\Bar{\sigma}_i - d_i) + N_l \Tilde{\boldsymbol{\eta}}^2_i \\ \Longleftrightarrow \ 
\Bar{\sigma}_i^2  &= (d_i + N_l (\eta_l + \phi_i))\Bar{\sigma}_i + N_l (\Tilde{\boldsymbol{\eta}}^2_i -  (\eta_l + \phi_i) d_i ) \\ \Longleftrightarrow \ 
\Bar{\sigma}_i &= \frac{d_i + N_l (\eta_l + \phi_i)}{2} \pm \sqrt{\frac{(d_i + N_l (\eta_l + \phi_i))^2}{4} + N_l (\Tilde{\boldsymbol{\eta}}^2_i -  (\eta_l + \phi_i) d_i )} \\ 
\Longleftrightarrow \ 
\Bar{\sigma}_i &= \frac{d_i + N_l (\eta_l + \phi_i)}{2} \pm \sqrt{\frac{(N_l (\eta_l + \phi_i) - d_i)^2}{4} + N_l \Tilde{\boldsymbol{\eta}}^2_i} \\ 
\Longleftrightarrow \ 
\Bar{\sigma}_i &= \frac{d_i + N_l (\eta_l + \phi_i)}{2} \pm \left(\frac{N_l (\eta_l + \phi_i) - d_i}{2} + \frac{N_l \Tilde{\boldsymbol{\eta}}^2_i}{\frac{N_l (\eta_l + \phi_i) - d_i}{2} + \sqrt{\frac{(N_l (\eta_l + \phi_i) - d_i)^2}{4} + N_l \Tilde{\boldsymbol{\eta}}^2_i}}\right) \\ 
\Longleftrightarrow \ 
\Bar{\sigma}_i &= \frac{d_i + N_l (\eta_l + \phi_i)}{2} \pm \left(\frac{N_l (\eta_l + \phi_i) - d_i}{2} + \frac{1}{\frac{\eta_l + \phi_i - \frac{d_i}{N_l}}{2 \Tilde{\boldsymbol{\eta}}^2_i} + \sqrt{(\frac{\eta_l + \phi_i - \frac{d_i}{N_l}}{2 \Tilde{\boldsymbol{\eta}}^2_i})^2 + 1}}\right) \\ \Longleftrightarrow \ 
\Bar{\sigma}_i &= \frac{d_i + N_l (\eta_l + \phi_i)}{2} \pm \left(\frac{N_l (\eta_l + \phi_i) - d_i}{2} + \frac{ \Tilde{\boldsymbol{\eta}}^2_i}{\eta_l + \phi_i - \frac{d_i}{N_l}} - O((\frac{ \Tilde{\boldsymbol{\eta}}^2_i}{\eta_l + \phi_i})^2)\right)
\end{align*}

Here we see that $\Bar{\sigma}_i$ has two approximated solutions: in the first case, when $\pm$ becomes $+$, $\Bar{\sigma}_i \approx N_l \eta_l$ which is the unique very large solution as we mentioned. Another solution is by picking $\pm$ as $-$, we then have $\Bar{\sigma}_i \approx d_i - \frac{ \Tilde{\boldsymbol{\eta}}^2_i}{\eta_l + \phi_i - \frac{d_i}{N_l}}$. The second case is what we are using in this proof since we are looking at the indexing of $\omega_i$ with $i \in \mathcal{I}$, which is beyond top-$k$. 

For each indexing pair $i$ and $i'$ with order $\omega_i < \omega_{i'}$,
we plug in the solution of $\Bar{\sigma}_i$ and $\Bar{\sigma}_i'$ respectively:  
\begin{align*}
    \frac{\omega_{i}}{\omega_{i'}}  &= \frac{ \sum_{j \in \mathcal{I}} \frac{\tilde{\mathbf{y}}_j \tilde{\boldsymbol{\eta}}_j}{d_j - \bar{\sigma}_{i}} + \frac{1}{\Bar{\sigma}_i'}\sum^{N_u}_{j = N - \Theta + 1} \Tilde{\mathbf{y}}_j\Tilde{\boldsymbol{\eta}}_j}{\sum_{j \in \mathcal{I}} \frac{\tilde{\mathbf{y}}_j \tilde{\boldsymbol{\eta}}_j}{d_j - \bar{\sigma}_{i'}} + \frac{1}{\Bar{\sigma}_i''}\sum^{N_u}_{j = N - \Theta + 1} \Tilde{\mathbf{y}}_j\Tilde{\boldsymbol{\eta}}_j} \\
    &= \frac{ \frac{\tilde{\mathbf{y}}_{i} \tilde{\boldsymbol{\eta}}_{i}}{d_{i} - \bar{\sigma}_{i}} + \sum_{j \in \mathcal{I}, j\neq i} \frac{\tilde{\mathbf{y}}_j \tilde{\boldsymbol{\eta}}_j}{d_j - \bar{\sigma}_{i}}+ \frac{1}{\Bar{\sigma}_i'}\sum^{N_u}_{j = N - \Theta + 1} \Tilde{\mathbf{y}}_j\Tilde{\boldsymbol{\eta}}_j}
    {\frac{\tilde{\mathbf{y}}_{i'} \tilde{\boldsymbol{\eta}}_{i'}}{d_{i'} - \bar{\sigma}_{i'}} + \sum_{j \in \mathcal{I}, j\neq i'} \frac{\tilde{\mathbf{y}}_j \tilde{\boldsymbol{\eta}}_j}{d_j - \bar{\sigma}_{i'}}+ \frac{1}{\Bar{\sigma}_i''}\sum^{N_u}_{j = N - \Theta + 1} \Tilde{\mathbf{y}}_j\Tilde{\boldsymbol{\eta}}_j} \\
    &= \frac{ \frac{\tilde{\mathbf{y}}_{i}}{\tilde{\boldsymbol{\eta}}_{i}} (\eta_l + \phi_{i}) + \tilde{\mathbf{y}}_{i}\tilde{\boldsymbol{\eta}}_{i}(O((\frac{ \Tilde{\boldsymbol{\eta}}^2_{i}}{\eta_l + \phi_i})^2) - O(\frac{1}{N_l}))  + \sum_{j \in \mathcal{I}, j\neq i} \frac{\tilde{\mathbf{y}}_j \tilde{\boldsymbol{\eta}}_j}{d_j - \bar{\sigma}_{i}}+ \frac{1}{\Bar{\sigma}_i'}\sum^{N_u}_{j = N - \Theta + 1} \Tilde{\mathbf{y}}_j\Tilde{\boldsymbol{\eta}}_j}
    { \frac{\tilde{\mathbf{y}}_{i'}}{\tilde{\boldsymbol{\eta}}_{i'}} (\eta_l + \phi_{i'}) + \tilde{\mathbf{y}}_{i'}\tilde{\boldsymbol{\eta}}_{i'}(O((\frac{ \Tilde{\boldsymbol{\eta}}^2_{i'}}{\eta_l + \phi_i'})^2) - O(\frac{1}{N_l}))  + \sum_{j \in \mathcal{I}, j\neq i'} \frac{\tilde{\mathbf{y}}_j \tilde{\boldsymbol{\eta}}_j}{d_j - \bar{\sigma}_{i'}}+ \frac{1}{\Bar{\sigma}_i''}\sum^{N_u}_{j = N - \Theta + 1} \Tilde{\mathbf{y}}_j\Tilde{\boldsymbol{\eta}}_j} \\ 
    &= \frac{ \frac{\tilde{\mathbf{y}}_{i}}{\tilde{\boldsymbol{\eta}}_{i}} \eta_l + \tilde{\mathbf{y}}_{i}\tilde{\boldsymbol{\eta}}_{i}(O((\frac{ \Tilde{\boldsymbol{\eta}}^2_{i}}{\eta_l + \phi_i})^2) - O(\frac{1}{N_l}))  + \sum_{j \in \mathcal{I}, j\neq i} \frac{1}{d_j - \bar{\sigma}_{i}}\tilde{\boldsymbol{\eta}}_j (\tilde{\mathbf{y}}_j + \tilde{\mathbf{y}}_{i}\frac{\tilde{\boldsymbol{\eta}}_j}{\tilde{\boldsymbol{\eta}}_{i}}) + \frac{1}{\Bar{\sigma}_i'}\sum^{N_u}_{j = N - \Theta + 1} \Tilde{\mathbf{y}}_j\Tilde{\boldsymbol{\eta}}_j } 
    {  \frac{\tilde{\mathbf{y}}_{i'}}{\tilde{\boldsymbol{\eta}}_{i'}} \eta_l + \tilde{\mathbf{y}}_{i'}\tilde{\boldsymbol{\eta}}_{i'}(O((\frac{ \Tilde{\boldsymbol{\eta}}^2_{i'}}{\eta_l + \phi_i'})^2) - O(\frac{1}{N_l}))  + \sum_{j \in \mathcal{I}, j\neq i'} \frac{1}{d_j - \bar{\sigma}_{i'}}\tilde{\boldsymbol{\eta}}_j (\tilde{\mathbf{y}}_j + \tilde{\mathbf{y}}_{i'}\frac{\tilde{\boldsymbol{\eta}}_j}{\tilde{\boldsymbol{\eta}}_{i'}})+ \frac{1}{\Bar{\sigma}_i''}\sum^{N_u}_{j = N - \Theta + 1} \Tilde{\mathbf{y}}_j\Tilde{\boldsymbol{\eta}}_j}.
\end{align*}

According to assumption that $\eta_u$ is bounded by $\frac{1}{M}$, we align the magnitude between $\Vec{y}$ and $\Vec{\eta}_u$ by  defining $\Vec{\eta}_u' = M\Vec{\eta}_u$ which is now also in the range of $[0, 1]$ like $\Vec{y}$. Then we also scale the following terms: $\Tilde{\boldsymbol{\eta}}' = M\Tilde{\boldsymbol{\eta}}$. Therefore we can simplify the equation to be: 

\begin{align*}
    \frac{\omega_{i}}{\omega_{i'}} &= \frac{ M\frac{\tilde{\mathbf{y}}_{i}}{\tilde{\boldsymbol{\eta}}'_{i}} \eta_l + \frac{1}{M}\tilde{\mathbf{y}}_{i}\tilde{\boldsymbol{\eta}}'_{i}(O(\frac{1}{M^4}(\frac{ \tilde{\boldsymbol{\eta}}'^2_{i}}{\eta_l + \phi_i})^2) - O(\frac{1}{N_l}))  + \frac{1}{M}\sum_{j \in \mathcal{I}, j\neq i} \frac{1}{d_j - \bar{\sigma}_{i}}\tilde{\boldsymbol{\eta}}'_j (\tilde{\mathbf{y}}_j + \tilde{\mathbf{y}}_{i}\frac{\tilde{\boldsymbol{\eta}}'_j}{\tilde{\boldsymbol{\eta}}'_{i}})+ \frac{1}{M\Bar{\sigma}_i'}\sum^{N_u}_{j = N - \Theta + 1} \Tilde{\mathbf{y}}_j\Tilde{\boldsymbol{\eta}}'_j} 
    {  M \frac{\tilde{\mathbf{y}}_{i'}}{\tilde{\boldsymbol{\eta}}'_{i'}} \eta_l + \frac{1}{M} \tilde{\mathbf{y}}_{i'}\tilde{\boldsymbol{\eta}}'_{i'}(O(\frac{1}{M^4}(\frac{ \tilde{\boldsymbol{\eta}}'^2_{i'}}{\eta_l + \phi_i'})^2) - O(\frac{1}{N_l}))  + \frac{1}{M} \sum_{j \in \mathcal{I}, j\neq i'} \frac{1}{d_j - \bar{\sigma}_{i'}}\tilde{\boldsymbol{\eta}}'_j (\tilde{\mathbf{y}}_j + \tilde{\mathbf{y}}_{i'}\frac{\tilde{\boldsymbol{\eta}}'_j}{\tilde{\boldsymbol{\eta}}'_{i'}}) + \frac{1}{M\Bar{\sigma}_i''}\sum^{N_u}_{j = N - \Theta + 1} \Tilde{\mathbf{y}}_j\Tilde{\boldsymbol{\eta}}'_j} \\
    &= \frac{ \frac{\tilde{\mathbf{y}}_{i}}{\tilde{\boldsymbol{\eta}}'_{i}} \eta_l + \tilde{\mathbf{y}}_{i}\tilde{\boldsymbol{\eta}}'_{i}(O(\frac{1}{M^6}) - O(\frac{1}{M^2 N_l}))  + \frac{1}{M^2}\sum_{j \in \mathcal{I}, j\neq i} \frac{1}{d_j - d_{i} + O(\frac{1}{M^2})}\tilde{\boldsymbol{\eta}}'_j (\tilde{\mathbf{y}}_j + \tilde{\mathbf{y}}_{i}\frac{\tilde{\boldsymbol{\eta}}'_j}{\tilde{\boldsymbol{\eta}}'_{i}})+ \tilde{\mathbf{y}}_{i}\frac{\tilde{\boldsymbol{\eta}}'_j}{\tilde{\boldsymbol{\eta}}'_{i}})+ \frac{1}{M^2\Bar{\sigma}_i'}\sum^{N_u}_{j = N - \Theta + 1} \Tilde{\mathbf{y}}_j\Tilde{\boldsymbol{\eta}}'_j} 
    {  \frac{\tilde{\mathbf{y}}_{i'}}{\tilde{\boldsymbol{\eta}}'_{i'}} \eta_l + \tilde{\mathbf{y}}_{i'}\tilde{\boldsymbol{\eta}}'_{i'}(O(\frac{1}{M^6}) - O(\frac{1}{M^2 N_l}))  + \frac{1}{M^2} \sum_{j \in \mathcal{I}, j\neq i'} \frac{1}{d_j - d_{i'} + O(\frac{1}{M^2})}\tilde{\boldsymbol{\eta}}'_j (\tilde{\mathbf{y}}_j + \tilde{\mathbf{y}}_{i'}\frac{\tilde{\boldsymbol{\eta}}'_j}{\tilde{\boldsymbol{\eta}}'_{i'}})+ \frac{1}{M^2\Bar{\sigma}_i''}\sum^{N_u}_{j = N - \Theta + 1} \Tilde{\mathbf{y}}_j\Tilde{\boldsymbol{\eta}}'_j} \\
    & = \frac{ \frac{\tilde{\mathbf{y}}_{i}}{\tilde{\boldsymbol{\eta}}'_{i}} \eta_l + O(\frac{1}{M^2})} 
    {  \frac{\tilde{\mathbf{y}}_{i'}}{\tilde{\boldsymbol{\eta}}'_{i'}} \eta_l + O(\frac{1}{M^2})},
\end{align*}

where we simply regard the remaining term with a magnitude much smaller than M. Note that M can be viewed as the magnitude gap of $\frac{\max_i(\Vec{y})_i}{\max_i(\Vec{\eta}_u)_i}$. In our case, $\max_i(\Vec{y})_i$ is set to 1. However, one can always multiply $\Vec{y}$ with a large constant to make M significantly large without changing the residual analysis in the main theorem. In summary, we have 
$$ \frac{\omega_{i}}{\omega_{i'}} \gtrsim \frac{ \frac{\tilde{\mathbf{y}}_{i}}{\tilde{\boldsymbol{\eta}}'_{i}} \eta_l } 
    {  \frac{\tilde{\mathbf{y}}_{i'}}{\tilde{\boldsymbol{\eta}}'_{i'}} \eta_l } =  \frac{\tilde{\mathbf{y}}_{i}}{\tilde{\boldsymbol{\eta}}_{i}} / \frac{\tilde{\mathbf{y}}_{i'}}{\tilde{\boldsymbol{\eta}}_{i'}} = \frac{\Vec{y}^\top q_i}{\Vec{\eta}_u^\top q_i} / \frac{\Vec{y}^\top q_{i'}}{\Vec{\eta}_u^\top q_{i'}}.$$

\end{proof}

\newpage
\section{Experimental Details}
\subsection{ Details of Training Configurations}
\label{sec:sup_exp_cifar}
For a fair comparison, we use ResNet-18~\cite{he2016deep} as the backbone for all methods. We add a trainable two-layer MLP projection head that projects the feature from the penultimate layer to an embedding space $\mathbb{R}^{k}$ ($k = 1000$). We use the same data augmentation strategies as SimSiam~\cite{chen2021exploring,haochen2021provable}. We train our model $f(\cdot)$ for 1200 epochs by NCD Spectral Contrastive Loss defined in Eq.~\eqref{eq:def_nscl}.  We set $\alpha=0.0225$ and $\beta=2$.
We use SGD with momentum 0.95 as an optimizer with cosine annealing (lr=0.03), weight decay 5e-4, and batch size 512. 
We also conduct a sensitivity analysis of the hyper-parameters in Figure~\ref{fig:hyper}. The performance comparison for each hyper-parameter is reported by fixing other hyper-parameters. The results suggest that the novel class discovery performance of NSCL is  stable when $\alpha$, $\beta$ in a reasonable range and with different learning rates. 

\nocite{sun2017faster,sun2019adaptive}

\begin{figure}[htb]
    \centering
    \includegraphics[width=0.95\linewidth]{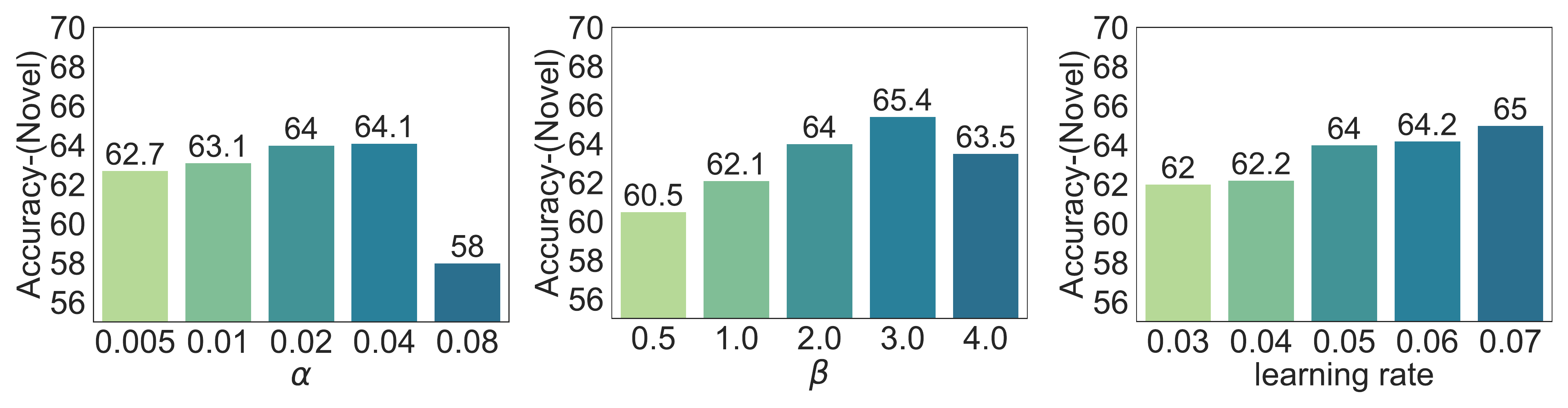}
    \caption{Sensitivity analysis of hyper-parameters $\alpha$, $\beta$, and learning rate. We use the training split of CIFAR-100-50/50, and report the novel class accuracy.}
    \vspace{-0.4cm}
    \label{fig:hyper}
\end{figure}

\subsection{Experimental Details of Toy Example}
\label{sec:sup_exp_vis}
\textbf{Recap of set up}. In Section~\ref{sec:theory_setup} we consider a toy example that helps illustrate the core idea of our theoretical findings. Specifically, the example aims to cluster 3D objects of different colors and shapes, generated by a 3D rendering software~\cite{johnson2017clevr} with user-defined properties including colors, shape, size, position, etc. 

In what follows, we define two data configurations and corresponding graphs, where the labeled data is correlated  with the attribute of unlabeled data (\textbf{case 1}) vs. not (\textbf{case 2}). For both cases, we have an unlabeled dataset containing red/blue cubes/spheres as: 

$$\mathcal{X}_u \triangleq \{X_{\textcolor{red}{\cube{0.6}}, \textcolor{red}{c_1}}, X_{\textcolor{red}{\sphere{0.5}{red}}, \textcolor{red}{c_1}}, X_{\textcolor{blue}{\cube{0.6}}, \textcolor{blue}{c_2}}, X_{\textcolor{blue}{\sphere{0.5}{blue}}, \textcolor{blue}{c_2}}\}.$$

In the first case, we let the labeled data $\mathcal{X}_{l}^{\text{case 1}}$ be strongly correlated with the target class (red color) in unlabeled data:
$$\mathcal{X}_{l}^{\text{case 1}} \triangleq \{X_{\textcolor{red}{\cylinder{0.4}}, \textcolor{red}{c_1}}\} (\text{red cylinder}).$$  
In the second case, we use gray cylinders which have no overlap in either shape and color: 
$$\mathcal{X}_{l}^{\text{case 2}} \triangleq \{X_{\textcolor{gray}{\cylinder{0.4}}, \textcolor{gray}{c_3}}\}  (\text{gray cylinder}).$$ 
Putting it together, our entire training dataset is 
$\mathcal{X}^{\text{case 1}} = \mathcal{X}_{l}^{\text{case 1}} \cup \mathcal{X}_{u}$ or  $\mathcal{X}^{\text{case 2}} = \mathcal{X}_{l}^\text{case 2} \cup \mathcal{X}_{u}$.

\textbf{Experimental details for Figure~\ref{fig:toy_vis}}. For training, we rendered 2500 samples for each type of data (4 types in $\mathcal{X}_u$ and 1 type in $\mathcal{X}_l$). In total, we have 12500 samples for both $\mathcal{X}^{\text{case 1}}$ and $\mathcal{X}^{\text{case 2}}$. For training, we use the same data augmentation strategy as in SimSiam~\cite{chen2021exploring}. We use ResNet18 and train the model for 40 epochs (sufficient for convergence) with a fixed learning rate of 0.005, using NSCL defined in Eq.~\eqref{eq:def_nscl}.  We set $\alpha=0.04$ and $\beta=1$, respectively. Our visualization is by PyTorch implementation of UMAP~\cite{umap}, with parameters $(\texttt{n\_neighbors=30,                min\_dist=1.5, spread=2, metric=euclidean})$.

\end{document}